\newcommand{\RW}[1]{ \color{blue} }
 \newtheorem{theorem}{Theorem}
 \newtheorem{definition}{Definition}
 \newtheorem{lemma}[theorem]{Lemma}
 \newtheorem{proposition}[theorem]{Proposition}
\newtheorem{conjecture}[theorem]{Conjecture}
\theoremstyle{plain}
\theoremstyle{remark}
\newtheorem{remark}{Remark}
\DeclareMathOperator{\tr}{Tr}
\newcommand{\EE}{\mathbb{E}}
\newcommand{\RR}{\mathbb{R}}
\newcommand{\OPT}{\operatorname{OPT}}
\newcommand{\argmin}{\operatorname{argmin}}
\newcommand{\argmax}{\operatorname{argmax}}
\newcommand{\rank}{\operatorname{rank}}
\begin{document}

\author[Awasthi]{Pranjal Awasthi}
\address[Awasthi]{Department of Computer Science, Princeton
University, Princeton, New Jersey, USA; E-mail:
pawashti@cs.princeton.edu}

\author[Bandeira]{Afonso S.~Bandeira}
\address[Bandeira]{Program in Applied and Computational Mathematics,
Princeton University, Princeton, New Jersey, USA; E-mail:
ajsb@math.princeton.edu}

\author[Charikar]{Moses Charikar}
\address[Charikar]{Department of Computer Science, Princeton
University, Princeton, New Jersey, USA; E-mail:
moses@cs.princeton.edu}

\author[Krishnaswamy]{Ravishankar Krishnaswamy}
\address[Krishnaswamy]{Department of Computer Science, Princeton
University, Princeton, New Jersey, USA; E-mail: rk8@cs.princeton.edu}

\author[Villar]{Soledad Villar$^\ast$}
\address[Villar]{Department of Mathematics, University of Texas at
Austin, Austin, Texas, USA; E-mail: {mvillar@math.utexas.edu}
(Corresponding Author)}

\author[Ward]{Rachel Ward}
\address[Ward]{Department of Mathematics, University of Texas at
Austin, Austin, Texas, USA; E-mail: rward@math.utexas.edu}

\title{Relax, no need to round: integrality of clustering formulations}

\begin{abstract}
We study exact recovery conditions for convex relaxations of point cloud clustering problems, focusing on two of the most common optimization problems for unsupervised clustering: $k$-means and $k$-median clustering.  Motivations for focusing on convex relaxations are: (a) they come with a certificate of optimality, and (b) they are generic tools which are relatively parameter-free, not tailored to specific assumptions over the input.
More precisely, we consider the distributional setting where there are $k$ clusters in $\mathbb{R}^m$ and data from each cluster consists of $n$ points sampled from a symmetric distribution within a ball of unit radius.  We ask: what is the minimal separation distance between cluster centers needed for convex relaxations to exactly recover these $k$ clusters as the optimal integral solution?  For the $k$-median linear programming relaxation we show a tight bound: exact recovery is obtained given arbitrarily small pairwise separation $\epsilon > 0$ between the balls. In other words, the pairwise center separation is $\Delta > 2+\epsilon$. Under the same distributional model, the \emph{$k$-means} LP relaxation fails to recover such clusters at separation as large as $\Delta = 4$.  Yet, if we enforce PSD constraints on the $k$-means LP,  we get exact cluster recovery at center separation $\Delta > 2\sqrt2(1+\sqrt{1/m})$.   In contrast, common heuristics such as Lloyd's algorithm (a.k.a. the $k$-means algorithm) can \emph{fail} to recover clusters in this setting; even with arbitrarily large cluster separation, k-means++ with overseeding by any constant factor fails with high probability at exact cluster recovery.  To complement the theoretical analysis, we provide an experimental study of the recovery guarantees for these various methods, and discuss several open problems which these experiments suggest.
\end{abstract}

\maketitle
\section{Introduction}
Convex relaxations have proved to be extremely useful in solving or approximately solving difficult optimization problems.   In theoretical computer science, the ``relax and round" paradigm is now standard:  given an optimization problem over a difficult (non-convex) feasible set, first \emph{relax} the feasible set to a larger (convex) region over which the optimization problem is convex, then \emph{round} the resulting optimal solution back to a point in the feasible set.
Such convex relaxations generally serve a dual purpose: (i) they can be solved efficiently, and thus their solution gives a good starting point for the rounding step~\cite{Vazirani01}, and (ii) the value of the optimal solution to the convex relaxation serves as a good bound on the true optimal solution, and this can be used to certify the performance of the overall algorithm.   Often, the feasible set is non-convex due to integral constraints of the form $x_i \in \{0,1\}$, so that the relaxed convex set is given by the \emph{interval} constraints $x_i \in [0,1]$.
 
The study of convex relaxations in theoretical computer science has typically focused on how well such relaxations can approximate the objective function. This is captured by the \emph{approximation factor} that can be obtained, i.e., how much worse in cost the integer 
rounded solution can be be in terms of the cost of the optimal fractional solution to the convex relaxation. However, in many practical scenarios, the choice of using a particular objective function is only a means to recovering the true hidden solution. For instance, when solving a clustering problem, the goal is to find an underlying ground truth clustering of the given data set. Modeling this problem via minimizing a particular objective function (such as $k$-median, $k$-means etc.) is a convenient mathematical choice, albeit the true goal still being to approximate the ground truth rather than the objective. In such scenarios, it is natural to ask if one can use convex relaxations directly to obtain the underlying ground truth solution and bypass the rounding step.
In practice, optimal solutions of convex relaxations are often observed to also optimal for the original problem. As a result, one no longer needs the rounding step and the optimal solution can be recovered directly from solving the relaxed problem \cite{Sontag08,sontag2010}.  We refer to this occurrence as exact recovery, tightness, or integrality, of the convex relaxation. Currently, there is very little theoretical understanding of this phenomenon (see e.g. \cite{Sontag08,sontag2010}).
Motivated by this question, our goal is to understand \emph{whether and when convex relaxations can in fact lead to exact recovery, i.e. yield the optimum solution for the underlying discrete optimization problem.}
This question also motivates the study and comparison of different relaxations for the same problem, in terms of their ability to produce integral optimum solutions. This is different from the typical goal of choosing the relaxation which yields algorithms with the best approximation factor. We believe that this is an interesting lens for examining convex relaxations that yields different insights into their strengths and weaknesses. 

The phenomenon of exact recovery is understood in certain cases.  A classical result says that \emph{network flow} problems (e.g. maximum flow or minimum cost flow problems), or more generally any integer programming problem whose constraints are totally unimodular, all vertex solutions in the feasible set of the linear programming relaxation are integral, and hence the optimal solution (necessarily a vertex solution) is also integral ~\cite{Schrijver86}.
Integrality of convex relaxations have also been studied in LP decoding, where linear programming techniques are used to decode LDPC codes~\cite{Feldman05, Feldman07, Daskalakis08, Arora09}.
More recently, in the statistical signal processing community, the seminal papers on compressive sensing \cite{crt06, Donoho06,Candes06} set a precedent for proving ``with high probability" tightness results: many optimization problems, while NP hard in the worst case, have tight convex relaxations with high probability  over a distributions on input parameters.
Subsequently, similar phenomena and guarantees have emerged in low-rank matrix completion problems~\cite{recht2010guaranteed, Candes10, gross2011recovering, recht2011simpler, cbsw14}, and in graph partition problems  \cite{ames2012guaranteed, chen2012clustering, ames2013robust, elkin2013convex, chen2014statistical, Decelle11,abbe2014}. Some other examples include multireference alignment and the study of MIMO channels~\cite{Man-ChoSo10,Bandeira_MultireferenceAlignment}.  Among these works, the graph partitioning problems are most closely related to the clustering problems considered here; still, there are fundamental differences as discussed in Section~\ref{subsec:sbm}.
 Convex relaxations have also been shown to recover optimal solutions to certain ``stable" instances of graph partitioning problems such as Max-Cut~\cite{MakarychevMV14} and for inference in graphical models~\cite{Sontag08, sontag2010approximate, sontag2010, komodakis2008beyond}.


\subsection{Geometric clustering}
We will focus on integrality for convex relaxations of \emph{geometric} clustering problems: given an initial set of data, map the data into a metric space, define an objective function over the points and solve for the optimal or an approximately optimal solution to the objective function.  Then we can assume we are given a finite set of points $P=\{x_1,\ldots, x_n\}$ in a metric space $(X,d)$ which we would like to partition into $k$ disjoint clusters. Two of the most commonly studied objective functions in the literature are \emph{$k$-median} and \emph{$k$-means}, depicted in Figure \ref{kmedians_vs_kmeans}.  In the \emph{$k$-median} (also known as $k$-medoid) problem, clusters are specified by \emph{centers}: $k$ representative points \emph{from within the set} $P$ denoted by $c_1, c_2, \ldots, c_k$. The corresponding partitioning is obtained by assigning each point to its closest center. The cost incurred by a point is the distance to its assigned center, and the goal is to find $k$ center points that minimize the sum of the costs of the points in $P$:
\begin{equation}
\label{median0}
\tag{$k$-median}
\underset{\{c_1, c_2, \dots, c_k\} \subset P}{\text{minimize}} \hspace{1mm} \sum_{i =1}^n \min_{ j = 1,\ldots,k} d(x_i,c_j)
\end{equation}
Alternatively, in the euclidean \emph{$k$-means} problem, the points are in $\mathbb{R}^m$ and the distance $d(x_i, x_j)$ is the euclidean distance. The goal is to partition a finite set $P=\{x_1,\ldots,x_n\}$ in $k$ clusters such that the sum of the squared euclidean distances to the average point of each cluster (not necessarily a point in $P$) is minimized.
Let $A_1, A_2, \ldots, A_k$ denote a partitioning of the the $n$ points into $k$ clusters;  if $c_t=\frac{1}{|A_t|}\sum_{x_j\in A_t} x_j,$ then the $k$-means problem reads
\begin{equation} \label{kmeans_objective0}
\nonumber
\underset{{ A_1 \cup \dots \cup A_k = P }}{\text{minimize}} \hspace{1mm} \sum_{t=1}^k \sum_{x_i\in A_t} d^2(x_i,c_t)
\end{equation}
The identity $\sum_{x_i\in A_t} d^2(x_i,c_t) = \frac{1}{2} \frac{1}{|A_t|} \sum_{x_i, x_j  \in A_t} d^2(x_i, x_j),$ allows us to re-express the $k$-means problem as the following optimization problem:
\begin{equation} \label{kmeans_objective}
\tag{$k$-means}
\underset{{ A_1 \cup \dots \cup A_k = P }}{\text{minimize}} \hspace{1mm} \sum_{t=1}^k \frac{1}{|A_t|} \sum_{x_i,x_j \in A_t} d^2(x_i,x_j)
\end{equation}

\subsection{Prior work}
The $k$-median and the $k$-means problems and their LP relaxations have been extensively studied from an approximation point of view.  Both problems can be expressed as integer programming problems -- see \eqref{IP_medians} and \eqref{IP_means} below -- which  are NP-hard to optimize~\cite{Aloise, jms06}. There exist, for both problems, approximation algorithms which achieve a constant factor approximation~\cite{kanungo02,Li13}. The $k$-median objective is closely related to the well studied facility location problem~\cite{AGK+04, jms06} and the best known algorithms use convex relaxations via a rounding step. For $k$-means there also exist very effective heuristics~\cite{Lloyd} that although having provable guarantees in some cases~\cite{KK10, ckdv09}, may, in general, converge to local minima of the objective function.  SDP relaxations of the $k$-means optimization problem were previously introduced \cite{peng2005new, peng2007approximating}, albeit without exact recovery guarantees.

The question of \emph{integrality} for convex relaxations of geometric clustering problems --in which case no rounding step needed -- seems to have first appeared only recently in ~\cite{elhamifar2012finding}, where integrality for an LP relaxation of the $k$-median objective was shown,  provided the set of points $P$ admits a partition into $k$ clusters of equal size, and the separation distance between any two clusters is sufficiently large.  The paper \cite{Nellore_Kmedians} also studied integrality of an LP relaxation to the $k$-median objective (with squared euclidean distances $d^2(\cdot)$ in the objective), and introduced a distribution on the input $\{x_1, x_2, \dots, x_n \}$ which we will also consider here:
Fix $k$ balls in $\mathbb{R}^m$ of unit radius in arbitrary position, with a specified minimum distance between centers $\Delta > 2$. Draw $n/k$ random points uniformly\footnote{More generally, any rotationally-symmetric distribution where every neighborhood of $0$ has a positive measure.} and independently from each of the $k$ balls.  In \cite{Nellore_Kmedians}, it was shown that the LP relaxation of $k$-median will recover these clusters as its global solution with high probability once $\Delta \geq 3.75$ and $n$ is sufficiently large.  Note that once $\Delta \geq 4$, any two points within a particular cluster are closer to each other than any two points from different clusters, and so simple thresholding algorithms can also work for cluster recovery in this regime.
 In Theorem \ref{thm:k-median-intro}, we contribute to these results, showing that the LP relaxation of $k$-median will recover clusters generated as such w.h.p. at \emph{optimal} separation distance $\Delta \geq 2 + \varepsilon$, for $n$ sufficiently large given $\varepsilon$.

\begin{figure}[h]
\includegraphics[width=0.35\textwidth]{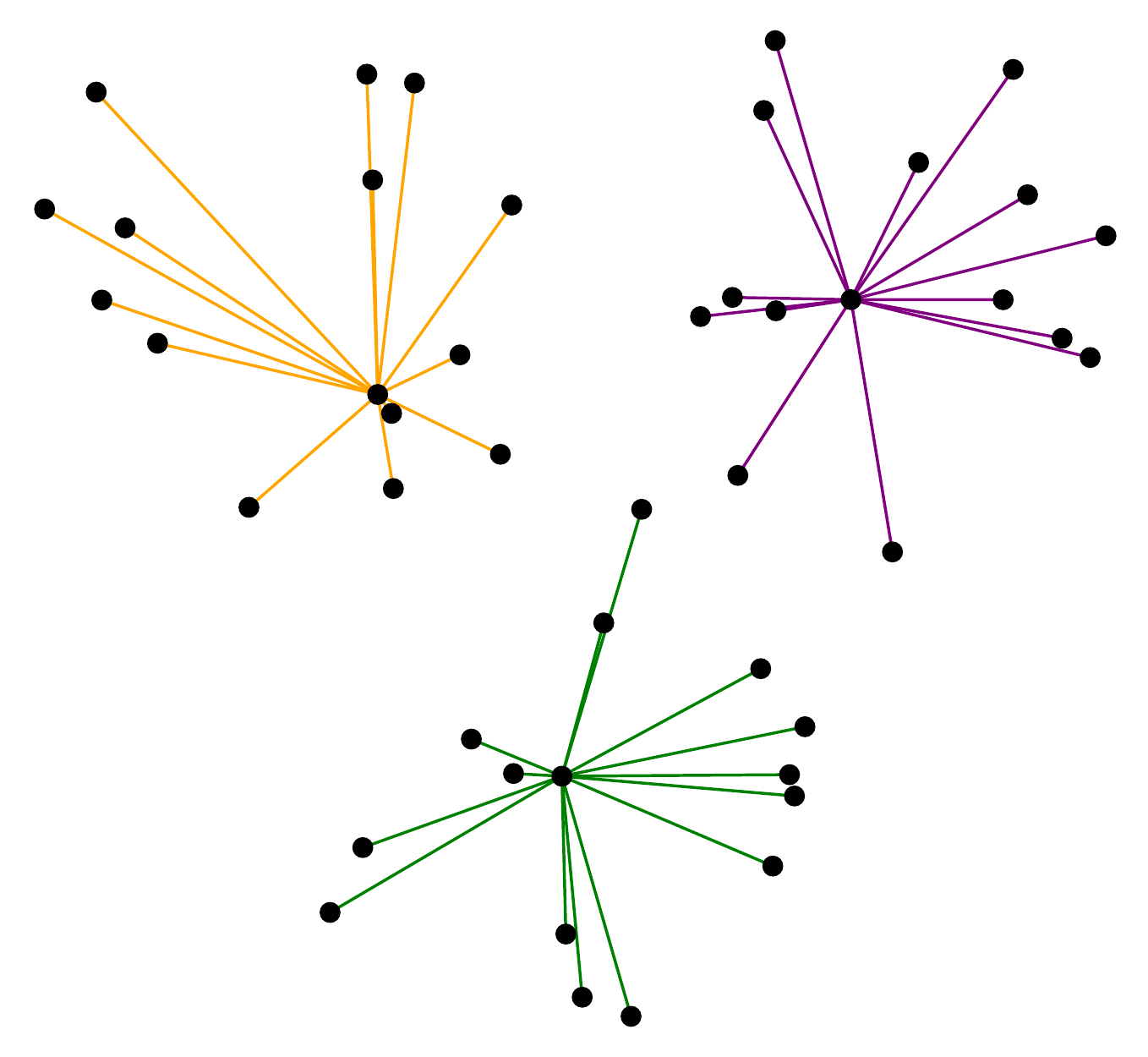}
\hspace{0.5 cm}
\includegraphics[width=0.35\textwidth]{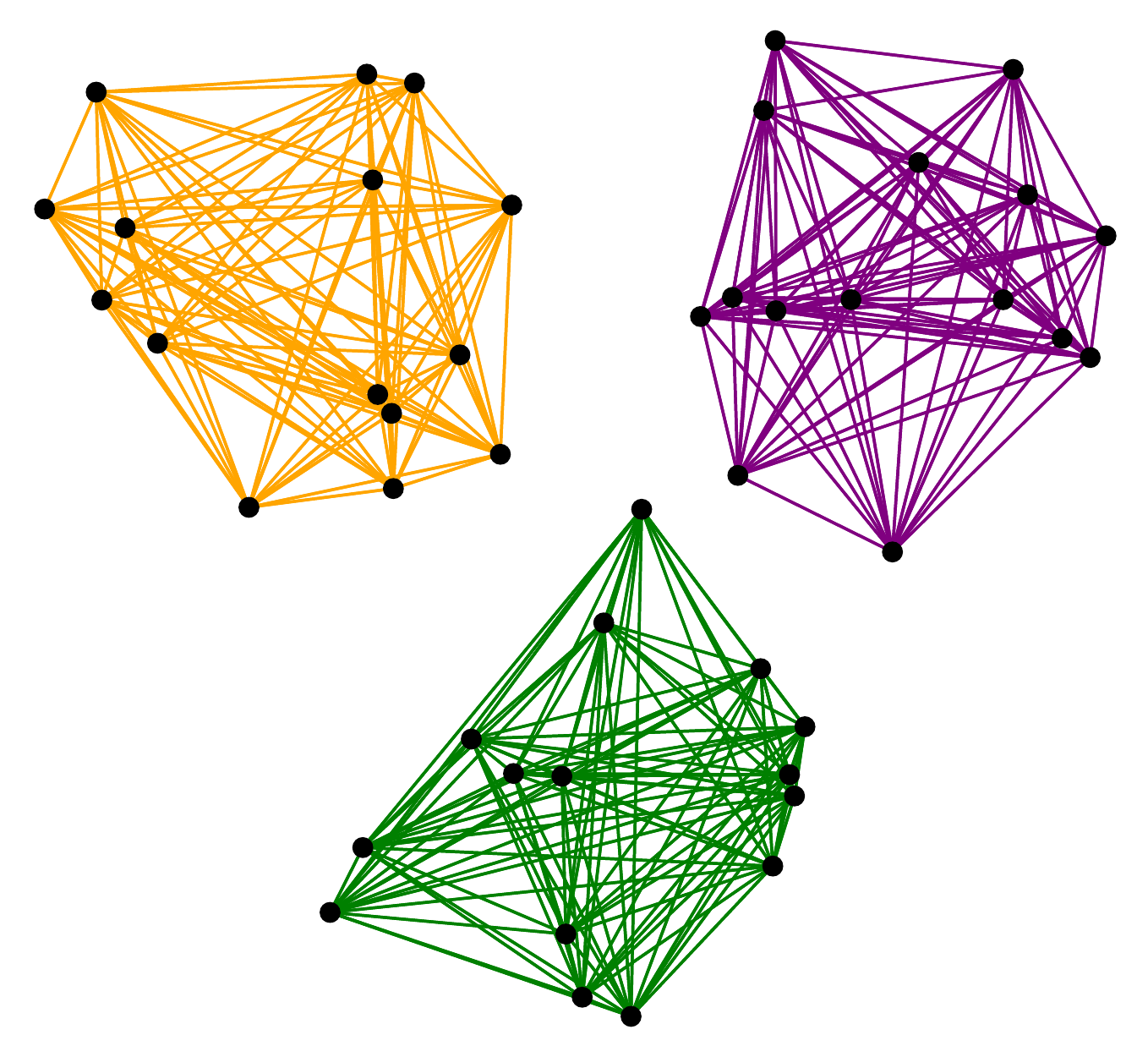}
\caption{The $k$-median objective (left) minimizes the sum of distances from points to their representative data points. The $k$-means objective (right) minimizes the average of the squared euclidean distances of all points within a cluster. \label{kmedians_vs_kmeans}}
\end{figure}

\subsection{Our contribution}

We study integrality for three different convex relaxations of the $k$-median and $k$-means objectives:
\begin{enumerate}
\item[(i)] A standard linear programming (LP) relaxation of the $k$-median integer program,
\item[(ii)] A linear programming (LP) relaxation of the $k$-means integer program, and
\item[(iii)] A semidefinite programming (SDP) relaxation of the $k$-means integer program (closely related to a previously proposed SDP relaxation for $k$-means \cite{peng2007approximating}),
\end{enumerate}
Each of these relaxations produces integer solutions if the point set partitions into $k$ clusters and the intra-cluster separation distance (distance between cluster centers) is sufficiently large.   As the separation distance decreases to 2 (at which point clusters begin to overlap and the ``cluster solution" is no longer well-defined), a phase transition occurs for the $k$-means relaxations, and we begin to see fractional optimal solutions.  We now present informal statements of our main results; see specific sections for more details.

\begin{theorem}
\label{thm:k-median-intro}
For any constant $\epsilon > 0$, and $k$ balls of unit radius in $\mathbb{R}^m$ whose centers are separated by at least $\Delta > 2+\epsilon$, there exists $n$ sufficiently large that if $n$ random points are drawn uniformly and independently from each of the $k$ balls, then with high probability, the natural k-median LP relaxation is integral and recovers the true clustering of the points.
\end{theorem}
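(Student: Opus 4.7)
The plan is to prove integrality via LP duality: exhibit a feasible dual solution whose value matches the cost of the natural candidate integral primal, and verify its feasibility with high probability using concentration of measure on the random sample. The standard $k$-median LP has variables $y_i$ (is $x_i$ open as a center?) and $z_{ij}$ (is $x_j$ assigned to $x_i$?) with constraints $\sum_i z_{ij}=1$, $z_{ij}\le y_i$, $\sum_i y_i\le k$, and non-negativity. Eliminating the ``overflow'' multipliers $\beta_{ij}=\max(0,\alpha_j-d(x_i,x_j))$, the dual reads
\[
\max_{\alpha\in\mathbb{R}^n,\,\gamma\ge 0}\;\sum_j\alpha_j-k\gamma\quad\text{s.t.}\quad\sum_j\max(0,\alpha_j-d(x_i,x_j))\le\gamma\ \text{ for every } i.
\]
The candidate integral primal opens, in each cluster $C_t:=P\cap B_t$, the sample $1$-median $c_t^{\ast}\in\argmin_{x\in C_t}\sum_{x_j\in C_t}d(x,x_j)$ and assigns every $x_j\in C_t$ to $c_t^{\ast}$. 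By concentration (Hoeffding plus a covering argument), $c_t^{\ast}$ lies within $O(n^{-1/2})$ of the true ball center $z_t$ with high probability, so this primal solution does recover the planted clustering; complementary slackness then demands that the dual constraint be tight at each $c_t^{\ast}$ and, for uniqueness, strictly slack at every other $x_i$.

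For the dual, I would take $\alpha_j:=d(c_{t(j)}^{\ast},x_j)+\eta_j$ and $\gamma:=\sum_{j\in C_t}\eta_j$ (the latter engineered to be the same across all $t$), where $\eta_j\ge 0$ are point-level slacks to be calibrated. A one-line calculation then shows dual value $=$ primal cost, so the entire work lies in verifying the $n$ dual constraints. At the chosen center $c_t^{\ast}$ the within-cluster contribution $\sum_{j\in C_t}\eta_j$ equals $\gamma$ automatically, so complementary slackness forces the cross-cluster contribution to vanish, i.e.\ $d(c_s^{\ast},x_j)+\eta_j\le d(c_t^{\ast},x_j)$ for every $s\neq t$ and every $j\in C_s$. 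At a non-chosen $x_i\in C_t$, the within-cluster term is controlled by $1$-median optimality of $c_t^{\ast}$ (which gives the signed bound $\sum_{j\in C_t}[d(c_t^{\ast},x_j)+\eta_j-d(x_i,x_j)]\le\gamma$) combined with concentration of the positive-part defect, while the cross-cluster term is handled by the same triangle-type argument as at $c_t^{\ast}$.

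The main obstacle is calibrating the $\eta_j$ sharply enough to realize the claimed near-tight separation $\Delta>2+\epsilon$. A single global slack $\eta$ must satisfy $\eta\le\min_{j,s}\bigl(d(c_t^{\ast},x_j)-d(c_s^{\ast},x_j)\bigr)\le\Delta-2-o(1)$ to keep the cross-cluster contribution at $c_t^{\ast}$ zero, but with this uniform $\eta$ the cross-cluster contribution at a non-chosen $x_i$ sitting near the boundary of $B_t$ closest to $B_s$ is of order $n$ and overwhelms $\gamma=n\eta=n(\Delta-2)$ when $\epsilon$ is small. The fix is to let $\eta_j$ scale with a smoothed pointwise margin such as $d(c_t^{\ast},x_j)-d(c_s^{\ast},x_j)$ (for $j\in C_s$), so that $\eta_j$ is large exactly where the triangle slack is large, and then to match the cluster totals $\sum_{j\in C_t}\eta_j$ using the fact that $\mathbb{E}_{Y\sim B_s}[d(z_t,Y)-d(z_s,Y)]$ is, by rotational symmetry, a continuous positive function of $\Delta$ bounded below by a positive multiple of $\Delta-2$. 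Hoeffding across the $O(kn)$ constraints, plus a uniform bound over candidate center locations, will then certify feasibility with high probability whenever $n$ is a sufficiently large polynomial in $1/\epsilon$, $k$ and $m$. The hard part of the proof is the simultaneous calibration: the $\eta_j$ must saturate the chosen-center constraints exactly while giving strict slack at every other $x_i$ by an amount that shrinks as $\epsilon\to 0$, and this is what drives the dependence of the minimum sample size on $\epsilon$.
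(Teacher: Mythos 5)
There is a genuine gap, and it sits exactly where you locate the ``hard part.'' Your dual certificate $\alpha_j = d(c^\ast_{t(j)},x_j)+\eta_j$ with $\eta_j\ge 0$ and $\gamma=\sum_{j\in C_t}\eta_j$ cannot be made feasible when $\epsilon$ is small, and the failure is in the \emph{within-cluster} constraint, not the cross-cluster one you analyze. Complementary slackness at $c_t^\ast$ forces every cross-cluster term there to vanish, which (as you note) caps $\eta_j\le d(c_t^\ast,x_j)-d(c_s^\ast,x_j)\le \epsilon+o(1)$ for points on the facing boundary, and in any case keeps $\gamma=\sum_{j\in C_t}\eta_j$ bounded by roughly $n\cdot(\text{average margin})$. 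But at a point $x_i\in C_t$ near the boundary of its own ball, the within-cluster contribution $\sum_{j\in C_t}\bigl(d(c_t^\ast,x_j)+\eta_j-d(x_i,x_j)\bigr)_+$ is at least $\sum_{j\in C_t}\bigl(d(c_t^\ast,x_j)-d(x_i,x_j)\bigr)_+\approx n\,\mathbb{E}\bigl[(\|Y\|-\|Y-x_i\|)_+\bigr]$, which for the uniform ball is $c_0 n$ for an absolute constant $c_0>0$ \emph{independent of $\epsilon$ and of the $\eta_j$}. One-median optimality only bounds the signed sum $\sum_j[d(c_t^\ast,x_j)-d(x_i,x_j)]\le 0$; the ``positive-part defect'' separating the signed sum from the positive-part sum is $\Theta(n)$ for boundary points, not a concentration-scale correction, so the step ``controlled by $1$-median optimality combined with concentration of the positive-part defect'' does not go through. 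Since this violation occurs already at $\eta_j\equiv 0$, no calibration of nonnegative cross-cluster slacks can repair it.

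The paper avoids this by using a structurally different certificate: $\alpha_q$ is taken \emph{constant within each cluster}, equal to a ``sight radius'' $\alpha\in(1,1+\Theta)$ with $\Theta<\Delta-2$ (equivalently, $\eta_j=\alpha-d(c^\ast,x_j)$, which is close to $1$ near the center and close to $0$ near the boundary, so that $\gamma=n\alpha-\OPT_t=\Theta(n)$ with a constant that does \emph{not} shrink with $\epsilon$). The constraint then asks that the contribution function $P^{(\alpha,\dots,\alpha)}(z)=\sum_i\sum_{x\in A_i}(\alpha-d(z,x))_+$ be maximized (in expectation, hence w.h.p.) at the ball centers --- the ``center dominance'' condition. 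Proving this is the real content of the argument (Lemma~\ref{lemma_step1}): it uses rotational symmetry and an isometric-copying argument to show that a boundary point, which sees only part of its own ball plus a sliver of the neighboring ball, collects strictly less than the center, which sees its whole ball and nothing else. Your proposal contains no substitute for this geometric step, and the certificate shape you commit to makes the analogous statement false. If you want to salvage your route, note that your ``margin-scaled'' $\eta_j$ is qualitatively the same shape as $\alpha-d(c^\ast,x_j)$; pushing that all the way essentially reproduces the constant-$\alpha$ certificate, at which point you still need the center-dominance lemma.
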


\begin{theorem}
Under the same setting as above and with high probability, a simple LP relaxation for the $k$-means objective fails to recover the exact clusters at separation $\Delta < 4$, even for $k=2$ clusters. \label{thm:kmeans-lp}
\end{theorem}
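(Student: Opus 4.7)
The plan is to exhibit, with high probability, a feasible fractional solution to the $k$-means LP whose objective is strictly smaller than the cluster-indicator matrix $Z^\ast$ of the ground-truth clustering, thereby showing that $Z^\ast$ is not the LP-optimum. Recall from the preceding section the natural $k$-means LP: one minimizes $\langle D^2, Z\rangle/2$ (where $D^2_{ij} = d^2(x_i, x_j)$) over symmetric matrices $Z \in \mathbb{R}^{N \times N}_{\geq 0}$ satisfying $Z\mathbf{1} = \mathbf{1}$ and $\tr(Z) = k$. Here $Z^\ast_{ij} = 1/n$ if $i,j$ belong to the same true cluster and $0$ otherwise.

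The key geometric fact that makes the regime $\Delta < 4$ special is the breakdown of the ``single-linkage'' separation: the minimum attainable inter-cluster distance $\Delta - 2$ is strictly less than the intra-cluster diameter $2$. Exploiting this, I would show that for $n$ large and with high probability over the random draws, one can find four points $p, p' \in A_1$ and $q, q' \in A_2$ realizing a crossed inequality
\[
d^2(p, q) + d^2(p', q') \;<\; d^2(p, p') + d^2(q, q').
\]
Given such a 4-tuple, define the symmetric $4$-cycle perturbation $\Delta Z$ supported on the four edges by $\Delta Z_{pp'} = \Delta Z_{qq'} = -\varepsilon$ and $\Delta Z_{pq} = \Delta Z_{p'q'} = +\varepsilon$ (symmetrized). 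A direct check shows $\tilde Z := Z^\ast + \Delta Z$ is LP-feasible for any $0 < \varepsilon \leq 1/n$: each row of $\Delta Z$ sums to zero, no diagonal entry is touched (so the trace is preserved), symmetry is built in, and nonnegativity holds at the decremented entries because $Z^\ast_{pp'} = Z^\ast_{qq'} = 1/n \geq \varepsilon$. The change in objective is precisely
\[
\langle D^2, \Delta Z\rangle \;=\; 2\varepsilon \bigl[\,d^2(p,q) + d^2(p',q') - d^2(p,p') - d^2(q,q')\,\bigr] \;<\; 0,
\]
so $\tilde Z$ strictly improves on $Z^\ast$ and the LP fails to recover the clustering.

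The main obstacle, and the technical core of the proof, is establishing the existence of the required 4-tuple with high probability throughout the regime $\Delta < 4$. Using the Euclidean identity
\[
d^2(p,q) + d^2(p',q') - d^2(p,p') - d^2(q,q') \;=\; 2\,(p - q')\cdot(p' - q),
\]
this reduces to finding four points among the samples for which $(p - q')\cdot(p' - q) < 0$. The inner product contains a systematic positive contribution of order $\Delta^2$ along the inter-center axis, so the proof cannot work in dimension $m = 1$ when $\Delta > 2$; instead one must place $p$ and $p'$ near the $A_2$-facing boundary of $A_1$ with opposite perpendicular offsets, and symmetrically $q, q'$ near the $A_1$-facing boundary of $A_2$, so that the two intra-cluster distances are close to the diameter $2$ while the two inter-cluster distances remain close to the minimum $\Delta - 2$. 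Quantifying how large $\Delta$ can be taken while still admitting such a configuration, and then extracting a suitable 4-tuple from the $n$ i.i.d.\ uniform samples via concentration of the extremal inter- and intra-cluster distances, gives the claimed $\Delta < 4$ failure threshold.
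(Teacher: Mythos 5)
Your strategy --- exhibit a feasible perturbation of the planted solution with strictly smaller LP cost --- is sound in principle, and your $4$-cycle perturbation is indeed feasible (it preserves row sums, the trace, the constraints $z_{pq}\le z_{pp}$, and nonnegativity for $0<\varepsilon\le 1/n$). The gap is in the claimed existence of the crossed $4$-tuple ``throughout the regime $\Delta<4$'': it does not exist once $\Delta\ge 2\sqrt2$. By your own identity, the sign you need is that of $\langle p-q',\,p'-q\rangle$, where $p-q'$ and $p'-q$ range independently over the ball $B_2(c_1-c_2)$ of radius $2$ centered at $c_1-c_2$, $\|c_1-c_2\|=\Delta$. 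A short optimization (the inner product is linear in each argument, so it is extremized on the boundary) gives
\[
\min\bigl\{\langle u,v\rangle : u,v\in B_2(c_1-c_2)\bigr\} \;=\; \tfrac{\Delta^2}{2}-4,
\]
which is nonnegative for $\Delta\ge 2\sqrt2\approx 2.83$. Geometrically, once $\Delta\ge 2\sqrt2$ the ball $B_2(c_1-c_2)$ subtends a half-angle $\arcsin(2/\Delta)\le\pi/4$ at the origin, so no two vectors in it can make an obtuse angle. Hence your perturbation proves failure only for $\Delta<2\sqrt2$, not for all $\Delta<4$. Your own observation that the argument breaks in dimension $m=1$ is a symptom of this: the theorem is in fact true in every dimension, including $m=1$, so any correct proof must not need the perpendicular offsets you rely on.

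The missing idea is that the LP \eqref{kmeanslp} is \emph{not} symmetric, so you are not restricted to symmetric ($4$-cycle) perturbations. Move mass within a single row: pick $p,q$ in the same cluster and $r$ in the other cluster, and set $z_{pq}\mapsto z_{pq}-\varepsilon$, $z_{pr}\mapsto z_{pr}+\varepsilon$ with $0<\varepsilon\le 1/n$. This keeps row $p$'s sum and all diagonal entries, and satisfies $z_{pr}=\varepsilon\le z_{pp}$; the cost changes by $\varepsilon\bigl(d^2(p,r)-d^2(p,q)\bigr)$, which is negative whenever $d(p,r)<d(p,q)$. For any $\Delta<4$ such a triple exists with high probability: take $p$ near the facing boundary of its ball, $q$ near the antipode of $p$ in the same ball so that $d(p,q)\approx 2$, and $r$ the nearest sample of the other cluster so that $d(p,r)\approx\Delta-2<2$. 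This is precisely the primal counterpart of the paper's argument, which uses complementary slackness to derive the necessary condition $d^2(p,q)\le\alpha_p\le d^2(p,r)$ for every intra-cluster pair $p,q$ and inter-cluster point $r$, and then notes this is violated with high probability for $\Delta<4$. Incidentally, the $2\sqrt2$ barrier your symmetric perturbation hits is essentially the threshold the paper obtains for the $k$-means SDP, which is a symmetric relaxation --- so your computation is not wasted, but it is proving the wrong theorem.
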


\begin{theorem}
\label{thm:k-mean-intro}
Under the same setting as above and with high probability, an SDP relaxation for the $k$-means objective recovers the clusters up to separation $\Delta > 2\sqrt{2}(1+\sqrt{1/m})$. 
\end{theorem}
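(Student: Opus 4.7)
My plan is to prove optimality of the planted clustering $X^\star=\sum_{t=1}^{k}\tfrac{1}{n}\mathbf{1}_{A_t}\mathbf{1}_{A_t}^{T}$ for the Peng--Wei-style SDP relaxation
\[
\min_{X}\ \tr(DX) \quad\text{s.t.}\quad X\succeq 0,\ X\geq 0,\ X\mathbf{1}=\mathbf{1},\ \tr(X)=k,
\]
where $D_{ij}=\|x_i-x_j\|^2$, by producing an explicit dual certificate. I introduce KKT multipliers $y\in\mathbb{R}^n$ (row sums), $z\in\mathbb{R}$ (trace), $Z\geq 0$ (entrywise), and $Q\succeq 0$. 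Stationarity gives $Q = D-\tfrac{1}{2}(y\mathbf{1}^T+\mathbf{1}y^T)-zI-Z$, while complementary slackness with $X^\star$ forces (a) $Q\mathbf{1}_{A_t}=0$ for every planted cluster, and (b) $Z_{ij}=0$ whenever $i,j$ lie in a common cluster. The strategy is to guess $(y,z,Z)$ enforcing (a)--(b) and verify $Q\succeq 0$ and $Z\geq 0$ with high probability.

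Exploiting the symmetry of the model, I would set $Z$ to vanish on within-cluster entries and impose $Q_{ss}\mathbf{1}=0$ on each diagonal block. Solving the resulting linear system for $y$ restricted to cluster $s$ in terms of the empirical row sums of $D_{ss}$ produces the clean form
\[
Q_{ss} \;=\; P_s(D_{ss}-zI)P_s, \qquad P_s := I-\tfrac{1}{n}\mathbf{1}\mathbf{1}^T.
\]
The classical multi-dimensional scaling identity $-\tfrac{1}{2}P_sD_{ss}P_s = R_s^{T}R_s$, where $R_s$ is the $m\times n$ matrix whose columns are the centered cluster points $x_i-\bar x^{(s)}$ for $i\in A_s$, then reduces this to $Q_{ss}=-2R_s^{T}R_s-zP_s$. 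For the cross-cluster blocks the natural choice is $Z_{st,ij}:=D_{st,ij}-\tfrac{1}{2}(y_i^{(s)}+y_j^{(t)})$, which forces $Q_{st}\equiv 0$, automatically satisfies $Q_{st}\mathbf{1}=0$, and makes $Q$ block-diagonal.

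Under this construction dual feasibility reduces to two high-probability statements: (i) $-z\geq 2\|R_s\|_{\mathrm{op}}^2$ for every $s$ (so $Q_{ss}\succeq 0$), and (ii) $Z_{st,ij}\geq 0$ uniformly over the $O(k^2 n^2)$ cross-cluster pairs. For (i), sharp sample-covariance concentration for symmetric vectors bounded in the unit ball (via Matrix Bernstein or Marchenko--Pastur-type arguments) gives $\|R_s\|_{\mathrm{op}}^2 \leq n\|\Sigma\|_{\mathrm{op}}(1+\sqrt{m/n})^2(1+o(1))$, which is the source of the dimension-dependent $\sqrt{1/m}$ factor in the final threshold. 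Substituting $y$ into (ii) and expanding $\|x_i-x_j\|^2$ using $x_i=c_s+r_i$, $x_j=c_t+r_j$ (after concentrating the empirical row means to their expectations), the inequality becomes
\[
\|c_s-c_t\|^2 \;\geq\; \tfrac{-z}{n} \;-\; 2\langle c_s-c_t,\ r_i-r_j\rangle \;+\; 2\langle r_i,r_j\rangle \;+\; o(1),
\]
and the cross terms on the right are sub-Gaussian at the scales $\|c_s-c_t\|/\sqrt{m}$ and $1/\sqrt{m}$ respectively, by the rotational symmetry of the distribution. A union bound over the $O(n^2)$ cross-cluster pairs, combined with the tight value of $\|R_s\|_{\mathrm{op}}$ from (i), then yields the claimed threshold $\Delta > 2\sqrt{2}(1+\sqrt{1/m})$.

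The main obstacle is sharpness of constants rather than the overall architecture: an off-the-shelf matrix-Bernstein bound gives the correct scaling but constants larger than $2\sqrt{2}$. Achieving the tight factor requires a variance-matched version of sample-covariance concentration for symmetric vectors in the unit ball, together with a delicate simultaneous union bound over the $n^2$ cross-cluster inequalities exploiting the independence of the random offsets $r_i$ from the fixed center differences $c_s-c_t$. Balancing these two sources of slack against one another to produce exactly $2\sqrt{2}(1+\sqrt{1/m})$ is the subtle step of the argument.
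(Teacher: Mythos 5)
Your overall architecture matches the paper's: a dual certificate pinned down by complementary slackness, the row-sum multipliers $y$ solved from the diagonal-block condition $Q_{ss}\mathbf{1}=0$, the MDS identity $P_sD_{ss}P_s=-2R_s^TR_s$, and operator-norm concentration of the centered sample matrix. The genuine gap is your choice of the off-diagonal blocks. By forcing $Q_{st}\equiv 0$ and dumping everything into $Z_{st,ij}=D_{st,ij}-\tfrac12(y_i^{(s)}+y_j^{(t)})$, your entrywise feasibility condition becomes, after writing $x_i=\bar x_s+r_i$, $x_j=\bar x_t+r_j$,
\[
\|\bar x_s-\bar x_t\|^2+2\langle \bar x_s-\bar x_t,\;r_i-r_j\rangle-2\langle r_i,r_j\rangle\;\ge\;\frac{-z}{n}\;\approx\;\frac{2\theta}{m},
\]
and this must hold for \emph{every} cross-cluster pair. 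The linear term $2\langle \bar x_s-\bar x_t,\,r_i-r_j\rangle$ is not uniformly of order $\Delta/\sqrt m$: that is only its typical size, and in the regime of the theorem ($m$ fixed, $n\to\infty$) the minimum over the $n^2$ pairs converges to the infimum over the support, namely $-4\Delta$ (take $r_i,r_j$ near the unit vectors $\mp(c_s-c_t)/\Delta$; for, say, the uniform distribution on the ball such points occur with probability tending to $1$). Your union bound over $n^2$ events, each of \emph{constant} (in $n$) probability at sub-Gaussian scale $1/\sqrt m$, therefore does not close. The deterministic worst case of your condition is $\Delta^2-4\Delta+2\ge 0$, i.e.\ $\Delta\ge 2+\sqrt2\approx 3.41$, which exceeds the claimed $2\sqrt2(1+\sqrt{1/m})$ once $m\ge 24$. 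So this certificate cannot reach the stated threshold, independently of how sharply you tune the spectral constants.

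The paper's fix is precisely to \emph{not} zero the cross blocks: it sets $Q^{(s,t)}=-P D^{(s,t)} P=2R_s^TR_t$ (the doubly centered cross-distance block), so the order-$\Delta$ linear terms are absorbed into the PSD condition rather than the entrywise one. There they are controlled by the very same spectral quantity you already bound, at the price of a factor of two: the PSD requirement becomes $z^\ast=4\max_s\max_{x\perp\mathbf{1}}x^TR_s^TR_sx/x^Tx$ instead of your $2\max_s\|R_s\|_{\mathrm{op}}^2$. The residual nonnegativity constraint is then the paper's ``average separation'' inequality, whose deterministic minimum over two unit balls at center distance $\Delta$ is $\Delta^2/2-4$ (computed by Lagrange multipliers), positive exactly for $\Delta>2\sqrt2$; adding the spectral term $4\theta(1+o(1))/m$ gives $\Delta>\sqrt{8+8\theta/m}\le 2\sqrt2(1+\sqrt{1/m})$. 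If you replace your $Z_{st}$ by the one induced by $Q^{(s,t)}=-PD^{(s,t)}P$, the rest of your argument, including your concentration step (i), goes through essentially unchanged.
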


\begin{figure}
\begin{minipage}[l]{0.45\textwidth}
\begin{eqnarray}
\min_{z\in \RR^{n\times n}} & \displaystyle\sum_{p,q \in P}d(p,q)z_{pq} \label{IP_medians}\\
\text{subject to} & \sum_{p\in P}z_{pq} = 1  \,\, \forall q\in P \nonumber \\
& z_{pq}  \leq y_p   \,\, \forall p,q\in P \nonumber\\
& \sum_{p\in P} y_p = k  \nonumber   \\
& z_{pq}, y_p \in \{0,1\} \nonumber
\end{eqnarray}
\end{minipage}
\begin{minipage}[l]{0.45\textwidth}
\begin{eqnarray}
\min_{z\in R^{n\times n}} & \displaystyle\sum_{p,q \in P}d^2(p,q) z_{pq} \label{IP_means}\\
\text{subject to} & \sum_{q\in P} z_{pq} = 1  \, \, \forall p \in P \nonumber \\
& z_{pq}  \leq z_{pp} \, \, \forall p,q \in P \nonumber\\
& \sum_{p\in P} z_{pp} = k \nonumber\\
& z_{pq} \in \{0,\frac{1}{|A_p|}\} \nonumber
\end{eqnarray}
\end{minipage}
\caption{\small IP formulations for the $k$-median \eqref{IP_medians} and $k$-means \eqref{IP_means} problems. In the $k$-median formulation, the variable $y_p$ indicates whether the point $p$ is a center or not, while $z_{pq}$ is 1 if the point $q$ is assigned to $p$ as center, and 0 otherwise. The solution for this integer programming problem corresponds to the adjacency matrix for a graph consisting of disjoint star-shaped graphs like the one shown in Figure \ref{kmedians_vs_kmeans}.
For $k$-means, an integral solution means that $z_{pq}=\frac{1}{|A_p|}$ if both $p$ and $q$ are in the cluster $A_p$, otherwise $z_{pq} = 0$. So in fact we are using the word ``integral'' in a broader sense. The solution corresponds to the adjacency matrix of $k$ disjoint complete graphs, were each edge is weighted by the inverse of the number of vertices in its connected component as shown in Figure \ref{kmedians_vs_kmeans}.} \label{IP_figure}
\end{figure}

Theorems \ref{thm:k-median-intro} and \ref{thm:kmeans-lp}  are tight in their dependence on the cluster separation $\Delta$. Theorem \ref{thm:k-mean-intro} is not tight and we conjecture the result should hold for separation $\Delta>2+\epsilon$.

\begin{conjecture}\label{conjecture}
Under the same setting as in Theorem \ref{thm:k-median-intro}  the SDP relaxation for the $k$-means objective recovers the clusters at separation $\Delta > 2+\epsilon$ with high probability. 
\end{conjecture}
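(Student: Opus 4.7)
The plan is to produce an explicit dual certificate of optimality for the ground-truth clustering in the $k$-means SDP, and to argue that it is dual-feasible with high probability whenever $\Delta > 2 + \epsilon$. Let the candidate primal solution be $Z^{\star} = \sum_{t=1}^{k} \tfrac{1}{n}\mathbf{1}_{A_t}\mathbf{1}_{A_t}^{\top}$, where $A_t$ denotes the set of points drawn from the $t$-th ball. Standard SDP duality yields that $Z^{\star}$ is the unique optimum provided one can exhibit a PSD matrix $Q \succeq 0$, an entrywise nonnegative matrix $M \geq 0$, a vector $\alpha \in \RR^{n}$, and a scalar $z \in \RR$ satisfying the stationarity identity $D = Q - M + \alpha\mathbf{1}^{\top} + \mathbf{1}\alpha^{\top} + zI$, where $D_{ij} = d^{2}(x_i,x_j)$, together with the complementary slackness relations $M_{ij} = 0$ on within-cluster pairs and $Q\mathbf{1}_{A_t} = 0$ for every $t$. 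The first step is therefore to derive these KKT conditions cleanly from the SDP used in Theorem \ref{thm:k-mean-intro} and to reduce exact recovery to the existence of such a certificate.

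Given this reduction, I would construct the dual certificate by using the within-cluster slackness conditions to pin down $M$ on cross-cluster entries, and then choosing $\alpha$ and $z$ so that $Q\mathbf{1}_{A_t} = 0$ for each cluster, which determines a canonical candidate $Q$. Writing $Q = \EE[Q] + (Q - \EE[Q])$, rotational symmetry of the distribution within each ball gives that the restriction of $\EE[Q]$ to the subspace orthogonal to the span of the cluster indicators is a positive multiple of the identity with spectral gap proportional to $\Delta^{2} - 4\EE\|u\|^{2}$, where $u$ is a single draw from the symmetric distribution on the unit ball. Since $\EE\|u\|^{2}\leq 1$, this deterministic gap is strictly positive exactly when $\Delta > 2$. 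It then remains to show that the random fluctuation $Q - \EE[Q]$ has operator norm on the cluster-orthogonal subspace which is $o_{n}(\Delta^{2}-4)$ with high probability, and that $M$ stays entrywise nonnegative after the fluctuation is absorbed.

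The central obstacle, and the reason Theorem \ref{thm:k-mean-intro} only reaches $\Delta > 2\sqrt{2}(1+\sqrt{1/m})$, is precisely this spectral bound on $Q - \EE[Q]$: a naive matrix Bernstein estimate produces a term of order $\sqrt{n/m}$ times a constant, which is what drives the $\sqrt{2}(1+\sqrt{1/m})$ factor and cannot be improved by such means. To remove the gap I would attempt a leave-one-out analysis, in the spirit of the sharp spectral bounds used for the stochastic block model and matrix completion, in which the certificate associated with each point is built from the other points only; the resulting decoupling turns $Q - \EE[Q]$ into a sum of weakly dependent rank-one pieces whose relevant spectral norm should scale like $O(n^{-1/2})$ rather than $O(1)$. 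A complementary route is to exploit the symmetry of the sampling distribution more aggressively: many cross-terms in $Q - \EE[Q]$ vanish identically by central symmetry rather than merely in expectation, which should sharpen the variance proxy. The hardest regime, and the one I expect to require a genuinely new estimate, is when the ambient dimension $m$ is small, say constant: there the concentration of random vectors on the unit ball is weakest, and one likely needs a covering-net argument on the unit ball combined with the cluster-orthogonality constraint to avoid losing factors of $\sqrt{1/m}$ in the final bound.
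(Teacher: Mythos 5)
The statement you are addressing is stated in the paper as Conjecture~\ref{conjecture}, i.e.\ as an \emph{open problem}: the paper itself only proves the weaker Theorem~\ref{thm:k-mean-intro} at separation $2\sqrt2(1+\sqrt{1/m})$, and explicitly remarks that ``a more refined set of deterministic conditions should exist which will lead to the proof of Conjecture~\ref{conjecture}.'' Your proposal does not close that gap. It is a research program, not a proof: the decisive steps are phrased as ``I would attempt a leave-one-out analysis,'' ``should scale like $O(n^{-1/2})$,'' and ``I expect to require a genuinely new estimate.'' None of the claimed improvements is actually established, so the conjecture remains unproven by your argument.

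Beyond incompleteness, there is a concrete misdiagnosis of where the $2\sqrt2$ barrier comes from. In the paper's certificate (which your first two paragraphs essentially reproduce: fix $Q1_a=0$, $\beta^{(a,a)}=0$, choose the off-diagonal blocks as in \eqref{dual:cert1}), there are \emph{two} conditions to verify: positive semidefiniteness of $Q$ on $\Lambda^\perp$, and entrywise nonnegativity of $\beta$. The spectral fluctuation term you focus on, $\frac{4}{n}\max_a\sigma_{\max}(\widetilde{M}^{(a)})^2\approx 4\theta/m$, only produces the $(1+\sqrt{1/m})$ correction. The factor $2\sqrt2$ itself comes from the \emph{deterministic} minimum of the left-hand side of the average separation condition \eqref{separation_eqn}: over two unit balls at center distance $\Delta\le 4$ this minimum equals $\Delta^2/2-4$ (attained near $x_{r(1)}=\Delta/4$, $x_{s(1)}=\Delta/2$), and for a continuous distribution such near-extremal pairs $(r,s)$ occur with high probability. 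Hence even if you proved $\|Q-\EE Q\|_{\Lambda^\perp}=o(1)$ exactly as you propose, the nonnegativity of $\beta$ for this particular choice of $Q^{(a,b)}$ would still fail with high probability for every $\Delta<2\sqrt2$. Your leave-one-out and symmetry arguments therefore attack the wrong term; to reach $\Delta>2+\epsilon$ one must change the certificate itself (a different ansatz for the off-diagonal blocks $Q^{(a,b)}$, redistributing mass among the $\beta^{(a,b)}_{rs}$ so that no single cross-cluster pair is forced to carry the worst-case value), which is precisely the refinement the paper identifies as missing. I would also flag that your claim that $\EE[Q]$ restricted to $\Lambda^\perp$ has spectral gap proportional to $\Delta^2-4\,\EE\|u\|^2$ is asserted without justification and does not match the quantities that actually appear in the analysis.
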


Under the assumptions of the theorems above, popular heuristic algorithms such as \emph{Partitioning around Medoids} (PAM) and \emph{Lloyd's algorithm} (for k-median and k-means, respectively) can fail with high probability. Even with arbitrarily large cluster separation, variants of Llody's algorithm such as k-means++ with overseeding by any constant factor fails with high probability at exact cluster recovery.  See Section \ref{sec:lloyds} for details.

\begin{remark}
In section \ref{sec:kmeans} we derive a deterministic geometric condition on a set of points for tightness of the $k$-means SDP called ``average separation'' (see Definitions \ref{avg-separation} and \ref{separation_condition}), and Theorem \ref{thm:k-mean-intro} follows by proving that this condition holds with high probability for the random point model. We believe that a more refined set of deterministic conditions should exist which will lead to the proof of Conjecture \ref{conjecture}. 
\end{remark}

\begin{remark}
As an addition to Theorem~\ref{thm:k-median-intro} we show that the popular Primal-Dual approximation algorithm for $k$-median \cite{JV01} also recovers the true clustering under the same assumptions. In fact, in this case, when executing the algorithm one does not need to run the second stage of choosing independent sets among the set of potential centers. See Appendix~\ref{sec:primal-dual} for details.
\end{remark}

The main mathematical ingredients to establish the results above consist in the use of concentration of measure results, both scalar and matrix versions, to build appropriate dual certificates for these problems.  That is, we construct deterministic sufficient conditions for the convex relaxations to be integral, and then demonstrate that with high probability, such conditions are satisfied for the random input at sufficiently high cluster separation.  At the same time, the complementary slackness conditions for the $k$-means LP reveal that exact recovery for the $k$-means LP is possible with high probability if and only if the cluster separation satisfies $\Delta \geq 4$.

\subsection{Why Study Convex Relaxations?}
At this point, we reiterate why we focus on exact recovery guarantees for convex relaxations in particular, as opposed to other popular algorithms, such as the $k$-means heuristic (a.k.a. Lloyd's algorithm~\cite{Lloyd}). In fact, there has been substantial work on studying exact recovery conditions for such heuristics~\cite{lloyd06,KK10, Arthur07, Agarwal13}.
However, one disadvantage of using these heuristics is that there is typically no way to \emph{guarantee} that the heuristic is computing a good solution. In other words, even if such a heuristic is recovering an optimal solution to the underlying combinatorial optimization problem, we cannot ascertain such optimality just by looking at the output of the heuristic.   Indeed, a crucial advantage of
convex relaxations over other heuristics is that they come with a \emph{certificate} that the produced solution is optimal, when this is the case.  This property makes convex relaxations appealing over other iterative heuristics. There is also a large body of work on studying clustering problems under distributional or deterministic stability conditions~\cite{BBG09, Sanjoy, AK01, BV08, BBV, KVV04, KVV04, AM05, Balcan12}. However, the algorithms designed are usually tailored to specific assumptions on the input.
On the other hand, the convex relaxation algorithms we study are not tied to any particular data distribution, and only depend on $k$, the number of clusters.

Nevertheless, it is natural to ask how well the commonly-used heuristics for $k$-means and $k$-median perform on the instances we analyze. Toward this end, we show (see Section~\ref{sec:lloyds}) that heuristics such as Lloyd's algorithm and kmeans ++ (even with initialization procedures like overseeding) can fail to recover clusters with exponentially high probability, even when the cluster separation is \emph{arbitrarily high}, far within the regime where Theorems \ref{thm:k-median-intro} and \ref{thm:k-mean-intro} imply that the $k$-means and $k$-median convex relaxations are guaranteed (with high probability) to recover the clusters correctly.

\subsection{Comparison with stochastic block models}
\label{subsec:sbm}
The stochastic block model (SBM) with $k$ communities is a simple random
graph model for graph with a community behavior. Each edge is
random (similarly to an Erd\H{o}s R\'{e}nyi graph) where the edges are
independent and the probability of each depends on wether it is a
intra- or inter-community edge. The task consists of recovering the
hidden communities, and is often known as community detection or graph partitioning; in the
particular case of two communities this is also known as planted
bisection. Recently, \cite{abbe2014} and
\cite{mossel2014} have
obtained sharp thresholds for which problem parameters it is, in the
$k=2$ case, possible
to correctly recover the labels of every point.
Moreover an SDP relaxation is proposed in \cite{abbe2014} and shown to be
integral and perform exact recovery close to the optimal threshold.

Although sharing many characteristics with our problem, the stochastic
block model differs from the clustering problems we consider in many
fundamental ways. Our objective is to cluster a point cloud in
\emph{euclidean} space. Although our results are for specific models, they
are obtained from establishing conditions on the point clouds that
could potentially be established for other, perhaps even
deterministic, point clouds as the methods we analyze are not
tied to the point model; they are clustering methods widely used
in many settings. In contrast, the convex relaxation mentioned above
for the SBM is based on the maximum likelihood estimator for the graph model.
Moreover, while the SBM produces graphs whose edges are independent,
our random model is on the vertices, which creates non-trivial
dependencies in the edges (distances). Another technical difficulty in
the clustering problems we study, that is not present in the SBM, is the inhomogeneity of the points; the points in the SBM are fairly uniform, even though there might
be small variations, the inner and outer degree of every node will be
comparable. On the other hand, in our setting, points close to other
clusters have a very different distance profile from points near the
center of their own cluster.

\section{Integrality for the $k$-median LP relaxation}
\label{sec:k-median-main}
The $k$-median problem, expressed in the form of an integer programming problem \eqref{IP_medians}, has a natural linear programming relaxation given by relaxing the integral constraints to interval constraints.  This linear program is given in \eqref{LP}; its dual linear program is given in \eqref{DUAL}. 

\begin{figure}
\begin{center}
\begin{minipage}[l]{0.49\textwidth}
\begin{eqnarray}
\min_{z\in \RR^{n\times n}} & \displaystyle  \sum_{p,q\in P} d(p,q)z_{pq}  &\label{eq_kmedanLPprimal} \label{LP}\\
\text{subject to} & \quad  \sum_{p\in P}z_{pq} = 1,   &\forall q\in P \nonumber \\
 &z_{pq}  \leq y_p, \quad   \;& \forall p,q\in P  \nonumber\\
&\sum_{p\in P} y_p = k \nonumber &\\
& z_{pq},\; y_p \in [0,1], \; & \forall p,q\in P \nonumber
\end{eqnarray}
\end{minipage}
\begin{minipage}[l]{0.49\textwidth}
\begin{eqnarray}
 \max_{\alpha\in \RR^{n}} & \displaystyle  \sum_{q\in P} \alpha_q - kz   & \label{DUAL}\\
\text{subject to} &\quad \alpha_{q}  \leq \beta_{pq} + d(p,q)  &\forall p,q\in P  \nonumber \\
& \sum_{q}\beta_{pq} \leq \xi & \forall p\in P \nonumber\\
& \beta_{pq} \geq 0  & \forall p,q\in P \nonumber  
\end{eqnarray}
\end{minipage}
\end{center}
\end{figure}
In the integer programming problem \eqref{IP_medians} the variable $y_p\in\{0,1\}$ indicates whether the point $p\in P$ is a center or not. The variable $z_{pq}\in\{0,1\}$ for $p,q\in P$ indicates whether or not the point $p$ is the center for the point $q$. Each point has a unique center, and a cluster is the set of points sharing the same center. The solution $z\in \RR^{n\times n}$ of \eqref{LP} is a clustering if and only if it is integral (i.e. $z_{pq}$ are integers for all $p,q\in P$). This solution is generically unique since no constraint is parallel to the objective function, hence motivating the following definitions.

\begin{definition}
For $A_j \subseteq P$, let $c_j$ the center of $A_j$ $$ c_j= \argmin_{p\in A_j}\sum_{q\in A_j}d(p,q) \text{ and } \OPT_j = \min_{p\in A_j}\sum_{q\in A_j}d(p,q).$$
\end{definition}

We will ensure optimality of a particular integral solution to \eqref{LP} by showing the existence of a feasible solution to the dual problem \eqref{DUAL} whose dual objective value matches the primal objective value of the intended integral solution - a so-called \emph{dual certificate}. When the solution of \eqref{LP} is integral, it is also degenerate, since most of the variables are zero. In fact we experimentally observed that the dual \eqref{DUAL} has multiple solutions. Indeed, motivated by this observation and experimental evidence, we can essentially enforce an extra constraint in the dual by asking that the variables $\alpha$ be constant within each cluster. Given $\alpha$'s as such, the $\beta$'s and $\xi$'s are then easily identified. We now formulate a sufficient condition for integrality based on these observations:

\begin{lemma} \label{lemma_sufficientLPintegral}
Consider sets $A_1, \ldots, A_k$ with $n_1, \ldots, n_k$ points respectively. If $\exists {\alpha_1,\ldots,\alpha_k}$ s.t for each  $s\in A_1\cup \ldots \cup A_k$,
\begin{multline}\label{main_eq}
\frac1{k}\left(\sum_{i=1}^k \left[ n_i\alpha_{i} - \min_{p\in A_i}\sum_{q\in A_i}d(p,q) \right] \right) \geq  \sum_{q\in A_1}\left( \alpha_1 - d(s,q) \right)_+ + \ldots + \sum_{q\in A_k}\left( \alpha_k - d(s,q) \right)_+,
\end{multline}
then the k-median LP (\ref{LP}) is integral and the partition in clusters $A_1, \ldots, A_k$ is optimal.
\end{lemma}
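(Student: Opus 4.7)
The plan is to prove this via weak LP duality: exhibit a primal feasible integral solution for (\ref{LP}) and a dual feasible solution for (\ref{DUAL}) whose objective values coincide, so both must be optimal. This immediately gives that the clustering $A_1,\ldots,A_k$ is optimal and that its value is achieved by an integral solution of the LP, which is precisely the assertion of the lemma.

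For the primal, take the candidate clustering with centers $c_i := \argmin_{p\in A_i}\sum_{q\in A_i} d(p,q)$, set $y_{c_i}=1$ and $z_{c_i q}=1$ for $q\in A_i$, and all remaining entries to zero. One checks quickly that this is feasible in (\ref{LP}), and its objective value is $\sum_{i=1}^{k}\OPT_i$. For the dual I use the cluster-wise constant ansatz suggested in the discussion preceding the lemma: set $\alpha_q := \alpha_i$ whenever $q\in A_i$, with the scalars $\alpha_i$ the ones furnished by the hypothesis. The $\beta$'s are then pinned by taking them as small as possible subject to $\alpha_q \leq \beta_{pq} + d(p,q)$ and $\beta_{pq}\geq 0$, namely
$$\beta_{pq} := \bigl(\alpha_{c(q)} - d(p,q)\bigr)_+,$$
where $c(q)$ denotes the index of the cluster containing $q$. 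Finally set
$$\xi := \frac{1}{k}\sum_{i=1}^{k}\bigl(n_i\alpha_i - \OPT_i\bigr).$$
With these choices the dual objective telescopes to
$$\sum_{q\in P}\alpha_q - k\xi \;=\; \sum_{i=1}^k n_i\alpha_i \;-\; \sum_{i=1}^k(n_i\alpha_i - \OPT_i) \;=\; \sum_{i=1}^k \OPT_i,$$
matching the primal value exactly.

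It remains to verify dual feasibility. The constraints $\alpha_q \leq \beta_{pq} + d(p,q)$ and $\beta_{pq}\geq 0$ hold by the very definition of $\beta_{pq}$, so the only nontrivial constraint is $\sum_{q}\beta_{pq}\leq \xi$ for every $p \in P$. Splitting the sum over the clusters gives
$$\sum_{q\in P}\beta_{pq} \;=\; \sum_{i=1}^{k}\sum_{q\in A_i}\bigl(\alpha_i - d(p,q)\bigr)_+,$$
which is exactly the right-hand side of (\ref{main_eq}) with $s=p$, while the chosen $\xi$ is exactly the left-hand side. Thus the hypothesis (\ref{main_eq}) is literally the statement of dual feasibility. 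Combining weak duality with the matched objective values finishes the proof. There is no serious technical obstacle: once the cluster-wise constant ansatz is identified and $\beta_{pq}$ and $\xi$ are set to their forced values, the entire lemma reduces to repackaging dual feasibility, so the only real content is guessing the right form of the dual certificate.
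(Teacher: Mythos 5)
Your proof is correct and follows the same route as the paper's: a dual certificate for \eqref{DUAL} with $\alpha_q$ constant on each cluster, $\beta_{pq}=(\alpha_{c(q)}-d(p,q))_+$, and $\xi$ equal to the left-hand side of \eqref{main_eq}, matched against the integral primal solution of cost $\sum_i \OPT_i$ via duality. Your write-up is in fact more explicit than the paper's two-line verification, but the content is identical.
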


\begin{proof}
By strong duality, the intended cluster solution is optimal if the corresponding LP objective value \[\min_{p\in A_1}\sum_{q\in A_1}d(p,q) + \ldots + \min_{p\in A_k}\sum_{q\in A_k}d(p,q)\] is less than or equal to the dual objective for some feasible point in the dual problem.  By restricting the dual variables $\alpha_q$ to be constant within each cluster, and by setting $\xi$ to be equal to the RHS of the Lemma statement, we can verify that the dual objective is at least the cost of the intended clustering. Moreover, it is also easy to see that for this setting of $\xi$ and $\alpha_q$'s, the dual constraints are trivially satisfied. 
\end{proof}

Note that the sufficient condition in \eqref{lemma_sufficientLPintegral} is similar to the sufficient condition considered in \cite{Nellore_Kmedians}, but turns out to be more powerful in the sense that it allows us to get down to cluster separation $\Delta = 2 + \epsilon$.


A possible interpretation for the dual variables (which has been exploited by the current primal-dual based approximation algorithms for the $k$-median problem)
 is as
distance thresholds.
In the RHS of equation~\eqref{main_eq} in $\sum_{q\in A_j}(\alpha_j - d(s,q))_+$ a point $s\in P$ gets positive contribution from points $q\in A_j$ that are at a distance smaller than $\alpha_j$. In this sense, a point in the set $A_j$ can only ``see" other points within a distance $\alpha_j$.

Following this intuition, one way to prove that inequality~ \eqref{main_eq} holds is to show that we can choose feasible dual variables $\alpha_1,\ldots,\alpha_k$ to satisfy

\begin{itemize}
\item Each center sees exactly its own cluster \\i.e. $(\alpha_j - d(c_j,q))_+>0$ if and only if $q\in A_j$.
\item The RHS of \eqref{main_eq} attains its maximum in the centers $c_1,\ldots, c_k$.
\item Each of the terms $
n_i\alpha_{i} - \min_{p\in A_i}\sum_{q\in A_i}d(p,q)$ in the average in the LHS of \eqref{main_eq} are the same.
\end{itemize}

Our strategy is to provide a set of conditions in our data points that guarantee such feasible dual variables exist.
Assume the sets $A_1,\ldots,A_k$ are contained in disjoint balls $B_{r_1}(c_1),  \ldots, B_{r_k}(c_k)$ respectively (where we use the notation $B_r(c)$ to indicate a ball of radius $r$ centered at $c$), and suppose that $\alpha_1,\ldots,\alpha_k,$ $\alpha_j > r_j$, are such that for all $i\neq j$, $B_{{\alpha}_j}(c_j)\cap B_{r_i}(c_i) =\emptyset$. Given the $\alpha$'s there exist $\tau_1,\ldots, \tau_k>0$ sufficiently small that any $x\in B_{{\tau}_j}(c_j)$ is seen only by points in its own ball (see Definition~\ref{def:center-dominance} for a precise statement).
We now define conditions on the sets $A_1,\ldots,A_k$ which imply integrality of the linear programming relaxation \eqref{LP}.  For simplicity, we assume for the remainder of the section $n_1=\ldots=n_k=n$ and $r_1=\ldots=r_k=1$.  Roughly speaking, our conditions ask that a) The clusters are separated, being contained in disjoint balls,
b) Outside of a certain neighborhood of the center, no point is a good center for its own cluster
and c) No point gets too much contribution from any other cluster.
More precisely, we require the following \emph{separation} and \emph{center dominance} conditions:

\begin{definition}[Separation] \label{def:separation} Let the sets $A_1,\ldots, A_k$ in $X$, $|A_1|=\ldots =|A_k|=n$, such that
$$\OPT_1\leq \ldots \leq \OPT_k$$
We say such sets satisfy the separation condition if they are included in $k$ disjoint balls: $A_1 \subset B_{1}(c_1)$, \ldots, $A_k \subset B_{1}(c_k)$, $d(c_i, c_j)= 2+\delta_{ij}$ for $i\neq j$ where $\delta_{ij}>0$, and the distance between $B_{1}(c_i)$ and $B_{1}(c_j)$ satisfies:
\begin{equation}
\Theta:= \min_{1\leq i, j \leq k}\delta_{ij} > \frac{\OPT_k-\OPT_1}{n}.
\end{equation}
\end{definition}

\begin{remark} The expression $\frac{\OPT_k-\OPT_1}{n}$ provides a way of measuring how different the clusters are from each other. For example, if the clusters are symmetric, then $\frac{\OPT_k-\OPT_1}{n}=0$. This condition requires bigger separation when clusters are different. \end{remark}

We also require a \emph{center dominance} condition.   Consider the contribution function $P^{(\alpha_1,\ldots, \alpha_k)} : X \to \RR$ as the sum of all contributions that a point can get:
 \[P^{(\alpha_1,\ldots, \alpha_k)}(y)=\sum_{\substack{i=1}}^k\sum_{x \in A_i} (\alpha_i - d(y, x))_+.\] The center dominance condition essentially says that the contribution function attains its maximum in a small neighborhood of the center of each ball, as long as the parameters $\alpha$ are chosen from some small interval.

\begin{definition}[Center dominance] \label{def:center-dominance}
$A_1,\ldots, A_k$ satisfy center dominance in the interval $(a,b)\subset (1,1+\Theta)$ if
\begin{gather}b-a>\frac{\OPT_{k}-\OPT_1}n
\end{gather}
and for all $\alpha_1,\ldots, \alpha_k \in (a,b)$ there exist $\tau_1,\ldots, \tau_k>0$ such that for all $x\in B_{\tau_j}(c_j)$, $j=1,\ldots, k$
\begin{gather}
B_{\alpha_i}(x)\cap B_{r_i}(c_i) = \left\{\begin{matrix} B_{r_j}(c_j) & \text{ if }  i= j \\ \emptyset & \text{otherwise} \end{matrix} \right. \\
 \max_{y\in A_j\backslash B_{\tau_j}(c_j)}P^{(\alpha_1,\ldots, \alpha_k)}(y) < \max_{y\in B_{\tau_j}(c_j)} P^{(\alpha_1,\ldots, \alpha_k)} (y) 
  \end{gather}
Note that, in particular this condition requires the existence of a point of $A_j$ in $B_{\tau_j}(c_j)$.
\end{definition}
We now state our main recovery theorem, and show that very natural distributions satisfy the conditions.

\begin{theorem} \label{sufficient_conditions}
If $A_1,\ldots, A_k$ are $k$ sets in a metric space $(X,d)$ satisfying separation and center dominance, then there is an integral solution for the k-median LP and it corresponds to separating $P=A_1\cup\ldots\cup A_k$ in the clusters $A_1,\ldots, A_k$.
\end{theorem}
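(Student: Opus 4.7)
The plan is to apply Lemma \ref{lemma_sufficientLPintegral} by exhibiting dual variables $\alpha_1, \ldots, \alpha_k$ for which inequality \eqref{main_eq} holds at every $s \in A_1 \cup \cdots \cup A_k$. The two hypotheses of the theorem are designed to supply exactly the flexibility needed: the slack inside the interval $(a,b)$ from center dominance lets us \emph{balance} the LHS of \eqref{main_eq}, while the intersection and max clauses of center dominance let us \emph{bound} the RHS pointwise.

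First I would carry out the balancing step. Since $b-a > (\OPT_k - \OPT_1)/n$, the interval $(a, b - (\OPT_k - \OPT_1)/n)$ is non-empty; pick any $\alpha_1$ in it and set $\alpha_i := \alpha_1 + (\OPT_i - \OPT_1)/n$ for $i \geq 2$. Each $\alpha_i$ then lies in $(a,b)$, so Definition \ref{def:center-dominance} applies to this choice, and the identity $n\alpha_i - \OPT_i = n\alpha_1 - \OPT_1 =: L$ holds for every $i$, collapsing the LHS of \eqref{main_eq} to the constant $L$.

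Second I would verify $P^{(\alpha_1,\ldots,\alpha_k)}(s) \leq L$ for every $s \in A_j$. For $y \in A_j \cap B_{\tau_j}(c_j)$ the intersection clause of center dominance forces $B_{\alpha_i}(y) \cap B_{r_i}(c_i) = \emptyset$ for $i \neq j$ and $B_{\alpha_j}(y) \supseteq B_{r_j}(c_j) \supseteq A_j$, so contributions from the other clusters vanish and every term of the $A_j$-sum is non-negative. This reduces $P(y)$ to $n\alpha_j - \sum_{q \in A_j} d(y,q)$, and because $y \in A_j$ the definition of $\OPT_j$ gives $P(y) \leq n\alpha_j - \OPT_j = L$. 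The remaining case $s \in A_j \setminus B_{\tau_j}(c_j)$ is handled by the strict max clause: $P(s) < \max_{y \in A_j \cap B_{\tau_j}(c_j)} P(y) \leq L$, where the non-emptiness of this inner set is guaranteed by the final clause of Definition \ref{def:center-dominance}.

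The main obstacle I foresee is not conceptual but the careful bookkeeping that links the two cases. The intersection clause gives a clean closed form for $P(y)$ only when $y$ sits close to its cluster center, whereas \eqref{main_eq} must hold at every data point, including those far from the center. The max clause of center dominance is precisely what bridges this gap, and its strict inequality is what makes the transfer work without losing any slack; some care is also needed to read the maximum on the right-hand side of the max clause as attained by a data point in $A_j$, using the existence assertion at the end of Definition \ref{def:center-dominance}. Once these pieces are in place, Lemma \ref{lemma_sufficientLPintegral} applies directly and yields integrality of \eqref{LP} together with identification of the optimum with the partition $A_1,\ldots,A_k$.
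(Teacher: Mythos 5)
Your proposal is correct and follows essentially the same route as the paper's own proof: choose the $\alpha_i$ so that $n\alpha_i-\OPT_i$ is constant (which the condition $b-a>(\OPT_k-\OPT_1)/n$ makes possible inside $(a,b)$), then use the intersection clause to evaluate $P$ explicitly near each center and the strict max clause to handle points away from the centers. Your explicit construction $\alpha_i=\alpha_1+(\OPT_i-\OPT_1)/n$ and your remark about reading the right-hand maximum in the center-dominance clause as attained at a data point of $A_j$ make precise two steps the paper leaves implicit, but the argument is the same.
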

%
%
%
%
Indeed, a broad class of distributions are likely to satisfy these conditions.  The following theorem shows that with high probability, such conditions are satisfied by a set of $n k$ points in $\RR^m$ (for $n$ sufficiently large) drawn from each of $k$ clusters which have the same (but shifted) rotationally symmetric probability distribution which is such that the probability of any ball containing $0$ is positive.
\begin{theorem}
 \label{main_theorem}
Let $\mu$ be a probability measure in $\RR^m$ supported in $B_1(0)$, continuous and rotationally symmetric with respect to $0$ such that every neighborhood of $0$ has positive measure. Then, given points $c_1,\ldots,c_k\in \RR^m$ such that $d(c_i,c_j)>2$ if $i\neq j$, let  $\mu_{j}$ be the translation of the measure $\mu$ to the center $c_j$. Now consider the data set $A_1=\left\{x_i^{(1)}\right\}_{i=1}^n, \ldots, A_k=\left\{x_i^{(k)}\right\}_{i=1}^n$, each point drawn randomly and independently with probability given by $\mu_{1},\ldots, \mu_{k}$ respectively.
Then, $\forall \, \gamma<1$, $\exists N_0$ such that,$ \forall \, n>N_0$, the $k$-median LP (\ref{LP}) is integral with prob. at least $\gamma$.
\end{theorem}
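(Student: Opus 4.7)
The plan is to apply Theorem \ref{sufficient_conditions}: it suffices to show that, with high probability, the random point cloud $A_1, \ldots, A_k$ satisfies both the separation and center dominance conditions. The rotational symmetry of $\mu$ and the fact that all $\mu_j$ are identical translates drive the argument: the random quantity $(\OPT_k - \OPT_1)/n$ appearing in both conditions vanishes by the law of large numbers, while the fixed geometric slack $\Theta := \min_{ij}(d(c_i, c_j) - 2) > 0$ absorbs the fluctuations for $n$ large. Concretely, a uniform law of large numbers gives $\frac{1}{n}\sum_{q \in A_j} d(p,q) \to L_j(p) := \EE_{X \sim \mu_j}[d(p, X)]$ uniformly in $p \in B_1(c_j)$, and by the symmetrization argument below $L_j$ is minimized at $p = c_j$ with a value $L^\star$ independent of $j$. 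Hence $\OPT_j/n \to L^\star$ in probability, so $(\OPT_k - \OPT_1)/n \to 0$ and the separation condition holds with high probability.

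For center dominance, fix $\alpha \in (1, 1+\Theta)$ and let $(a,b)$ be a small interval around $\alpha$ with $b-a$ exceeding $(\OPT_k - \OPT_1)/n$ (possible for $n$ large). Choose $\tau_j < \min(\alpha - 1,\, 1+\Theta - \alpha)$ so the geometric containment requirements of Definition \ref{def:center-dominance} hold automatically for every $x \in B_{\tau_j}(c_j)$. Define the expected contribution $f_j(y) := \EE_{X \sim \mu_j}[(\alpha - d(y,X))_+]$; the key lemma is that $f_j$ is strictly maximized over $B_1(c_j)$ at $y = c_j$. Translating $c_j$ to the origin and using the symmetry $X \stackrel{d}{=} -X$,
\[ f_j(y) = \tfrac{1}{2}\EE\bigl[\phi(\|y - X\|) + \phi(\|y + X\|)\bigr], \qquad \phi(t) := (\alpha - t)_+. \]
Concavity of $\phi$ bounds the RHS by $\EE\bigl[\phi\bigl(\tfrac{1}{2}(\|y-X\|+\|y+X\|)\bigr)\bigr]$, and the triangle inequality $\tfrac{1}{2}(\|y-X\|+\|y+X\|) \ge \|X\|$ with $\phi$ non-increasing yields $f_j(y) \le \EE[\phi(\|X\|)] = f_j(0)$; the inequality is strict for $y \ne 0$ because the triangle inequality is strict off a null set and $\mu$ charges every neighborhood of $0$. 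By continuity and compactness there is a uniform gap $\eta > 0$ with $f_j(c_j) - f_j(y) \ge \eta$ on $B_1(c_j) \setminus B_{\tau_j}(c_j)$; cross-cluster contributions $\sum_{i\ne j} f_i(y)$ are controlled via $d(y, X_i) \ge \Theta$ and can be made arbitrarily small by taking $\alpha$ sufficiently close to $1$.

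Finally, to transfer the bound from $f$ to the random contribution $P^{(\alpha_1,\ldots,\alpha_k)}(y)$, I would use standard uniform concentration: since $y \mapsto (\alpha - d(y,x))_+$ is $1$-Lipschitz in $y$, a fine $\varepsilon$-net of $\bigcup_j B_1(c_j)$ combined with Hoeffding's inequality and a union bound gives $\sup_y |P(y)/n - \sum_i f_i(y)| \to 0$ in probability (uniformly in the $\alpha_i$ across $(a,b)$ by a further net, since $P$ is also $1$-Lipschitz in each $\alpha_i$). Together with the strict dominance of $f$ near each $c_j$, this yields the center dominance condition with probability at least $\gamma$ once $n$ is large, completing the hypotheses of Theorem \ref{sufficient_conditions}. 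The main technical obstacle is balancing the symmetrization gap $\eta$ against cross-cluster contributions, and this is precisely where the hypothesis that $\mu$ charges every neighborhood of $0$ is essential: without it, the maximizers of $L_j$ and $f_j$ could migrate away from $c_j$ and destroy the required dominance.
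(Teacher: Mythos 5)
Your overall architecture is the same as the paper's: reduce to Theorem \ref{sufficient_conditions}, obtain separation from the law of large numbers, and obtain center dominance by showing the \emph{expected} contribution function peaks at the center and then concentrating (the paper also uses Hoeffding for the transfer step). The separation step and the $\varepsilon$-net/Hoeffding transfer are fine. The gap is in your key lemma. You invoke ``concavity of $\phi$'' with $\phi(t)=(\alpha-t)_+$ to get $\tfrac12\bigl[\phi(s)+\phi(t)\bigr]\le\phi\bigl(\tfrac{s+t}{2}\bigr)$; but $\phi$ is the maximum of two affine functions, hence \emph{convex}, and Jensen gives the reverse inequality. On the affine piece $s,t\le\alpha$ the truncation is inactive and your chain does go through, but that is exactly the regime where nothing needs proving; the case that matters is when one of $\|y-X\|,\|y+X\|$ crosses the kink at $\alpha$, and there the pointwise inequality $\tfrac12\bigl[\phi(\|y-X\|)+\phi(\|y+X\|)\bigr]\le\phi(\|X\|)$ simply fails: take $\alpha=1.1$, $X=(1,0)$, $y=(0.9,0)$, so the left side is $0.5$ while the right side is $0.1$.

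This is not a repairable sign slip, because the statement you are deriving --- that $f_j(y)=\EE[(\alpha-d(y,X))_+]$ is maximized over all of $B_1(c_j)$ at $c_j$ for \emph{every} rotationally symmetric $\mu$ charging neighborhoods of $0$ --- does not follow from symmetry alone. For $\mu$ essentially uniform on a thin annulus near the unit sphere (plus an arbitrarily small amount of mass near $0$), one has $f(0)\approx\alpha-1$, whereas a point $y$ with $\|y\|=1$ collects contributions close to $\alpha$ from the nearby mass; in the plane with $\alpha=1.1$ a direct computation gives $f(y)\approx 0.20 > 0.10\approx f(0)$. So the maximizer can migrate to the boundary even though $\mu$ charges every neighborhood of $0$. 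This is precisely why the paper's Lemma \ref{lemma_step1} does not argue globally by symmetry: it runs a derivative computation only inside a small ball $B_{\tau_j}(c_j)$, where the point sees its entire cluster and the $(\cdot)_+$ never truncates, and then treats $B_1(c_j)\setminus B_{\tau_j}(c_j)$ with a separate geometric ``copying'' argument that also absorbs the cross-cluster terms. Relatedly, your dismissal of the cross-cluster contribution is too quick: for $y$ on the boundary facing a cluster at separation $\Theta<1$, individual contributions are as large as $(\alpha-\Theta)_+\approx 1-\Theta$, which does not become small as $\alpha\downarrow 1$. To complete your route you would need to replace the Jensen step with an argument that genuinely uses where the mass of $\mu$ sits, and to quantify the cross-cluster gain against the within-cluster loss away from the center.
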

The proof of this theorem can be found in Appendix \ref{recovery_kmedians}. The main idea is that given $k$ balls with the same continuous probability distribution, for large values of $n$, the separation condition is just a consequence of the weak law of large numbers. And one can see that center dominance  holds in expectation, so it will hold with high probability if the number of points $n$ is large enough. Note that the condition that all measures be the same and rotationally symmetric can be dropped as long as the expectation of the contribution function attains its maximum in a point close enough to the center of the ball and $\lim_{n\to \infty} \frac{\OPT_k-\OPT_1}{n}< d(c_i,c_j) -2$ for all $i\neq j$.

\section{An integrality gap for the $k$-means LP relaxation}
We now show that, in contrast to the LP relaxation for the $k$-median clustering problem, the natural LP relaxation for $k$-means does  not attain integral solutions for the clustering model presented in Theorem \ref{main_theorem}, unless the separation between cluster centers exceeds $\Delta = 4$.  In particular, this shows that the $k$-median LP relaxation performs better (as a clustering criterion) for such 
data sets.

The natural LP relaxation for $k$-means uses the formulation of the objective function given by equation \eqref{kmeans_objective}.  The natural LP relaxation for \eqref{IP_means} is given by \eqref{kmeanslp} below, whose dual LP is \eqref{kmeanslp-dual}:
\begin{minipage}{0.49\textwidth}
\begin{eqnarray}
\min_{z\in \RR^{n \times n}}  \sum_{p,q \in P}d^2(p,q) z_{pq}  && \nonumber\\
\text{subject to}  \quad \sum_{q\in P} z_{pq} = 1, && \, \, \forall p \in P \nonumber \\
 z_{pq}  \leq z_{pp}, && \, \,  \forall p,q \in P\label{kmeanslp}\\
 \sum_{p\in P} z_{pp} = k&& \nonumber \\
 z_{pq} \in [0,1] &&  \nonumber
\end{eqnarray}
\end{minipage}
\begin{minipage}{0.49\textwidth}
\begin{eqnarray}
\max_{\substack{\alpha \in \RR^n, \xi \in \RR \\ \beta \in \RR^{ n\times n} }} \quad \sum_{p \in P}\alpha_p - k\xi  && \nonumber\\
\text{subject to} \quad \alpha_p \leq d^2(p,q) + \beta_{pq}, &&   \, \, \forall p,q \in P \nonumber \\
 \sum_{q\in P} \beta_{pq} = \xi, &&  \,\, \forall p \in P \label{kmeanslp-dual}\\
 \beta_{pq} \geq 0 && \nonumber
\end{eqnarray}
\end{minipage}
\bigskip

\noindent In an intended integral solution to \eqref{kmeanslp}, the variable $z_{pq} = 1/|C|$ if $p,q$ belong to the same cluster $C$ in an optimal clustering, and $z_{pq} = 0$ otherwise. 
It is easy to see that such a solution satisfies all the constraints, and that the objective exactly measures the sum of average distances within every cluster. 
The following theorem shows the LP relaxation cannot recover the optimum $k$-means cluster solution if the distance between any two points in the same cluster is smaller than the distance between any two points in different clusters.

\begin{theorem} \label{integrality_condition}Given a set of points $P=A_1\cup\ldots \cup A_k$, if the solution of \eqref{kmeanslp} is integral and divides the set $P$ in $k$ clusters $A_1,\ldots,A_k$ then for all $p,q$ in the same cluster $A_i$ and $r$ in a different cluster $A_j$,
\begin{equation} \label{distances_condition}
d(p,q) < d(p,r).
\end{equation}
\end{theorem}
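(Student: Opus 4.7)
The plan is to prove the contrapositive via a simple primal perturbation argument. The case $p=q$ is trivial since $d(p,p)=0 < d(p,r)$ (as $p$ and $r$ lie in different clusters and are therefore distinct), so I would assume $p \neq q$ and, for contradiction, that there exist $p,q \in A_i$ and $r \in A_j$ with $i\neq j$ and $d(p,q) \geq d(p,r)$.

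Starting from the intended integral optimum $z$, which per Figure~\ref{IP_figure} satisfies $z_{pq'} = 1/|A_i|$ for $q' \in A_i$ and $z_{pq'}=0$ for $q' \notin A_i$, I would construct $z^\epsilon$ from $z$ by decreasing $z_{pq}$ by $\epsilon$ and increasing $z_{pr}$ by $\epsilon$, for a small $\epsilon \in (0, 1/|A_i|]$, leaving all other entries unchanged. Feasibility of $z^\epsilon$ in \eqref{kmeanslp} is immediate: the row sum $\sum_{q'} z^\epsilon_{pq'}$ is preserved since $\epsilon$ units of mass are moved between two entries of the same row $p$; the capacity constraints $z^\epsilon_{pq}, z^\epsilon_{pr} \leq z^\epsilon_{pp} = 1/|A_i|$ hold by the choice of $\epsilon$; the cardinality constraint $\sum_{p'} z^\epsilon_{p'p'}=k$ is unaffected since no diagonal entry is touched; and all entries remain in $[0,1]$.

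The objective value of $z^\epsilon$ differs from that of $z$ by exactly
\[
\epsilon\bigl(d^2(p,r) - d^2(p,q)\bigr) \;\leq\; 0,
\]
by the standing assumption $d(p,q) \geq d(p,r)$. If the inequality is strict, $z^\epsilon$ strictly improves on $z$, contradicting optimality. In the boundary case $d(p,q) = d(p,r)$, the perturbation $z^\epsilon$ is a distinct fractional optimum of the LP---in particular, it is not of the form $z_{pq} \in \{0, 1/|A_p|\}$ described in Figure~\ref{IP_figure}---contradicting the hypothesis that the LP's (essentially unique) optimum is integral.

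The main and only real obstacle in this proof is the boundary case $d(p,q)=d(p,r)$: without reading a uniqueness hypothesis into the phrase ``the solution of \eqref{kmeanslp} is integral,'' the argument yields only the weak inequality $d(p,q) \leq d(p,r)$. The key insight is the correct choice of primal swap---moving infinitesimal mass in row $p$ from an intra-cluster entry to an inter-cluster entry---which makes the LP's failure to be integral visible precisely when the desired strict inequality in the conclusion fails.
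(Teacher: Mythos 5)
Your proof is correct, and it takes a genuinely different route from the paper's. The paper argues through LP duality: applying complementary slackness to the integral optimum of \eqref{kmeanslp}, it extracts for each point $p$ a dual threshold $\alpha_p$ satisfying $d^2(p,q)+\beta_{pq}=\alpha_p\leq d^2(p,r)$ for $q$ in $p$'s cluster and $r$ outside it (this is \eqref{alpha_inequality}), from which the claimed inequality follows. Your argument is a direct primal exchange---shift $\epsilon$ of mass within row $p$ from the intra-cluster entry $z_{pq}$ to the inter-cluster entry $z_{pr}$---and is more elementary: it needs no dual program, and your feasibility check is complete (only the row-$p$ sum and the capacity constraints $z_{pq},z_{pr}\leq z_{pp}$ are touched, and no off-diagonal entry appears on the right-hand side of any constraint). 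What the paper's route buys is the dual object itself: the thresholds $\alpha_p$, pinned between the largest intra-cluster and smallest inter-cluster squared distance from $p$, are reused immediately in the proof of Theorem \ref{main_theorem_kmeans} to build a feasible dual certificate for the $\Delta>4$ direction, so the complementary-slackness computation does double duty. Notably, both proofs hit the same wall at the boundary case $d(p,q)=d(p,r)$: the paper obtains strictness by asserting $\beta_{pq}>0$, which ordinary complementary slackness does not supply (the primal constraint $z_{pq}\leq z_{pp}$ is tight for $q$ in $p$'s cluster, so $\beta_{pq}$ may well be zero), while you obtain it by reading uniqueness into ``the solution is integral'' and exhibiting a distinct fractional optimum. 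Your resolution is the more defensible of the two and is consistent with the paper's own remark that the optimum is generically unique because no constraint is parallel to the objective.
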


\begin{proof}
If the solution of \eqref{kmeanslp} is integral and divides the set $P$ in the clusters $A_1,\ldots, A_k$, complementary slackness tells us that
\begin{eqnarray}
 & \alpha_p = d^2(p,q) + \beta_{pq} &\text{ if } p,q \text{ are in the same cluster} \label{slacka} \\
 & \beta_{pr}=0  &\text{ if } p,r \text{ are in different clusters} \label{slackb}
\end{eqnarray}
if and only if $\alpha, \beta$ are corresponding optimal dual variables.
Combining \eqref{kmeanslp-dual}, \eqref{slacka} and \eqref{slackb}, since $\beta_{pq}>0$ we obtain that if $p,q$ are in the same cluster and $r$ is in a different cluster,
\begin{equation} \label{alpha_inequality}
d^2(p,q) + \beta_{pq}= \alpha_p\leq d^2(p,r)
\end{equation}
\end{proof}

The result in Theorem \ref{integrality_condition} is tight in the sense of our distributional model. The following theorem shows separation $\Delta=4$ is a threshold for cluster recovery via $k$-means LP.

\begin{theorem} \label{main_theorem_kmeans}
Fix $k$ balls of unit radius in $\mathbb{R}^m,$ and draw $n$ points from any rotationally symmetric distribution supported in these balls. If $n$ is sufficiently large, then the solution of the LP relaxation of $k$-means \eqref{kmeanslp} is not the planted clusters with high probability for $\Delta<4$ and it is the planted clustering for $\Delta > 4$.
\end{theorem}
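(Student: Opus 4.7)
\emph{Negative direction ($\Delta < 4$).} The plan is to violate the necessary condition of Theorem~\ref{integrality_condition}. Pick two centers $c_i, c_j$ at distance $\Delta$ and let $v = (c_j - c_i)/\Delta$. Since each per-cluster distribution is continuous, rotationally symmetric, and of full support on the unit ball, for any $\eta>0$ each of the neighborhoods $B_\eta(c_i + v)$, $B_\eta(c_i - v)$, $B_\eta(c_j - v)$ contains a sample with probability tending to $1$ as $n \to \infty$. Choose $\eta < (4 - \Delta)/4$ and pick such samples $p, q \in A_i$ and $r \in A_j$; then $\|p - q\| \geq 2 - 2\eta$ and $\|p - r\| \leq (\Delta - 2) + 2\eta$, so $d(p, q) > d(p, r)$. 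This contradicts Theorem~\ref{integrality_condition}, so with high probability the planted clustering cannot be the LP optimum.

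\emph{Positive direction: dual certificate.} For $\Delta > 4$, I build an explicit dual solution via complementary slackness. Let $S_p := \sum_{q \in A_{i(p)}} d^2(p, q)$. Imposing CS at the planted $z^\star_{pq} = \mathbf{1}[i(p)=i(q)]/n$ and setting $\beta_{pr} = 0$ for $r$ in a different cluster from $p$ forces
\[
\alpha_p \;=\; \frac{\xi + S_p}{n}, \qquad \beta_{pq} \;=\; \alpha_p - d^2(p, q) \ \text{ for } q \in A_{i(p)},
\]
and a direct calculation shows the corresponding dual value equals the primal value $\frac{1}{n}\sum_p S_p$ identically, for any common $\xi$. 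So feasibility reduces to finding a single $\xi$ with
\[
\max_{q \in A_{i(p)}} d^2(p,q) \;\le\; \frac{\xi + S_p}{n} \;\le\; \min_{r \notin A_{i(p)}} d^2(p,r), \qquad \forall\, p,
\]
equivalently $\max_p\bigl[n\max_{q \sim p} d^2(p,q) - S_p\bigr] \le \min_{p'}\bigl[n\min_{r \not\sim p'} d^2(p', r) - S_{p'}\bigr]$.

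\emph{Positive direction: concentration.} Write $r_p := \|p - c_{i(p)}\|$ and $R^2 := \EE\|Y - c\|^2$ for $Y$ drawn from the distribution centered at $c$; by rotational symmetry $\EE[d^2(p, Y)] = r_p^2 + R^2$. Combining the deterministic envelopes $\max_{q \sim p} d^2(p, q) \le (r_p + 1)^2$ and $\min_{r \not\sim p} d^2(p, r) \ge (\Delta - r_p - 1)^2$ (both coming from the ball geometry) with a Hoeffding bound $|S_p - n(r_p^2 + R^2)| = O(\sqrt{n \log n})$ uniform over the $kn$ points via a union bound, the required inequality is implied by
\[
n\bigl(2r_p + 1 + r_{p'}^2 - (\Delta - r_{p'} - 1)^2\bigr) + O(\sqrt{n \log n}) \le 0 \qquad \forall\, r_p, r_{p'} \in [0, 1].
\]
The worst case $r_p = r_{p'} = 1$ collapses to $4 - (\Delta-2)^2 \le -O(\sqrt{\log n / n})$. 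Since $\Delta > 4$ leaves a positive constant gap $(\Delta-2)^2 - 4$, the inequality holds for $n$ sufficiently large.

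\emph{Main obstacle.} The crux is the uniform concentration step in the positive direction: $S_p$ must be controlled simultaneously over all $kn$ samples while tracking the $r_p$-dependence sharply enough that the constant gap $(\Delta - 2)^2 - 4 > 0$ dominates the concentration error. The deterministic envelopes on the sample extremes are only used in the direction favorable to feasibility, so they pose no difficulty; everything else is bookkeeping.
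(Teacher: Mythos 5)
Your proposal is correct and follows essentially the same route as the paper: the $\Delta<4$ direction instantiates the necessary condition of Theorem~\ref{integrality_condition} (here made explicit by exhibiting near-boundary samples at $c_i\pm v$ and $c_j-v$), and the $\Delta>4$ direction builds the identical dual certificate with $\alpha_p=(\xi+S_p)/n$ and reduces feasibility to the same sandwich condition $\max_p[\,n\max_{q\sim p}d^2(p,q)-S_p\,]\le\min_{p'}[\,n\min_{r\not\sim p'}d^2(p',r)-S_{p'}\,]$. The only divergence is in the final verification: the paper localizes the worst case of $\max_{q\sim p}d^2(p,q)-\frac1nS_p$ to the boundary of the ball via an informal derivative argument and then appeals to rotational symmetry, whereas you use the deterministic envelopes $(r_p+1)^2$ and $(\Delta-r_p-1)^2$ together with uniform Hoeffding concentration of $S_p$ around $n(r_p^2+R^2)$; both land on the same threshold $(\Delta-2)^2>4$, and your version is, if anything, the more careful execution of that last step.
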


\begin{proof}
For $\Delta<4$ the result in Theorem \ref{integrality_condition} implies that the solution of the LP will not be the planted clustering with high probability if enough points are provided.

\newcommand{\avg}{\textnormal{avg} }
\renewcommand{\i}{\textnormal{$m_{in}$} }
\renewcommand{\o}{\textnormal{$m_{out}$} }

For $\Delta > 4$ we show $z_{pq} =\left\{ \begin{matrix} 1/|C| & \text{ if $p,q$ belong to the same cluster $C$} \\ 
0 & \text{otherwise} \end{matrix}\right.$ is the solution of the LP.

If we have feasible $\alpha$'s and $\beta$'s for the dual problem we have $\sum_{q\in P}\beta_{pq}= \xi \; \forall p \in P$ implies   $\sum_{p,q\in P} \beta_{pq}z_{pq}= k\xi$; we also have (as a consequence of \eqref{slacka} and the definition of $z_{pq}$) that $\alpha_p=\sum_{q\in P} (d^2(p,q) +\beta_{pq})z_{pq}$. Then for any dual feasible solution,
$$\sum_{p,q\in P} d^2(p,q)z_{pq}= \sum_{p\in P}\alpha_p-k\xi$$
Therefore, the existence of a feasible solution for the dual implies that our planted solution is optimal. Then it remains to show that there exists a feasible point for the dual. The solution is generically unique because no constraint in \eqref{kmeanslp} is parallel to the objective function.

\textbf{Existence of feasible solution of the dual}

A feasible solution of the dual is $\{\alpha_p\}_{p\in P}$, $\{\beta_{pq}\}_{p,q\in P}$ such that 
\eqref{slacka}, \eqref{slackb} are satisfied together with $\beta_{pq}\geq 0$ for all $p,q\in P$ and $\sum_{q\in P} \beta_{pq}=\xi$ for all $p\in P$.
For $p\in P$ let $C_p$ its cluster, $|C_p|=n$, then summing \eqref{slacka} in $q\in C_p$ we get
$$n\alpha_p=\sum_{q \in C_p} d^2(p,q) +\xi$$
Let $\avg(p)= \frac{1}{n}\sum_{q \in C_p} d^2(p,q)$ 
$$\alpha_p= \avg(p) +\frac{\xi}{n}$$
Let $\i(p)=\max_{q\in C_p} d^2(p,q)$ and $\o(p)=\min_{r\not\in C_p}d^2(p,r)$.
Assuming there exists a feasible point for the dual we know the solution for the LP is integral (i.e. our planted clustering) then we know \eqref{alpha_inequality} holds. In other words:
$$ \i(p)\leq \alpha_p \leq \o(p) \text{ for all } p\in P$$ 
Equivalently,
\begin{equation}\label{xi} \i(p)- \avg(p)\leq \frac\xi n \leq \o(p) -\avg(p) \text{ for all } p\in P\end{equation}
Then, a feasible solution for the dual problem exists if there exists $\xi$ that satisfies \eqref{xi} for all $p\in P$. A sufficient condition is:
$$\max_{r\in P}\i(r)- \avg(r)\leq \min_{s\in P} \o(s) -\avg(s)$$ 
Since this condition does not depend on the position of the cluster we can assume that the cluster $C_r$ where the LHS is maximized is centered in 0.
Let $f(r)=\i(r)- \avg(r) = \frac{1}{n}\sum_{l\in C_r} \|r-\i(r)\|^2 - \|r-l\|^2$. In order to find its maximum consider
$$\frac{\partial f}{\partial r} = \frac{1}{n}\sum_{l\in C_r} 2(r- \i(r)) -2(r - l) = \frac{1}{n}\sum_{l\in C_r} -2 \i(r) \text{ since $C_r$ has mean } 0$$
But $\i(r)\neq 0$ for all $r\in P$ since the center of the cluster cannot maximize the distance square (unless the trivial case where all the points in the cluster coincide with the center).
Then $f$ is maximized in the boundary of the unit ball. Then we need
$$4-\min_{r\in\partial C} \avg(r)\leq (\Delta-2)^2 -\max_{s\in \partial C} \avg(s)$$
which holds for $\Delta>4$ with high probability when $n\to \infty$ since the points come from a rotationally symmetric distribution. 
\end{proof}

{\section{Integrality for the $k$-means SDP relaxation}\label{sec:kmeans}}
In contrast to the negative results for the $k$-means LP relaxation, we now show that by adding positive semidefinite constraints, the resulting SDP relaxation of the $k$-means problem is integral at a closer range: for unit-radius clusters in $\RR^m$ whose centers are separated by distance at least $2\sqrt2(1+\sqrt{\frac1m})$. We conjecture this result could be pushed to center separation $\Delta>2+\epsilon$ for all $\epsilon>0$.

The idea is to construct a dual certificate and find deterministic conditions for the SDP to recover the planted clusters. Then we check for what separation the conditions are satisfied with high probability using bounds on the spectra of random matrices. We explain the general idea in this section and we present full proofs in Appendix \ref{SDP-appendix} and \ref{SDP-appendix2}.

To fix notation for this section, we have $k$ clusters in $\RR^m$, each containing $n$ points, so that the total number of points is $N = kn$. We index a point with $(a,i)$ where $a=1,\dots,k$ represents the cluster it belongs to and $i=1,\dots,n$ the index of the point in that cluster. The distance between two points is represented by $d_{(a,i),(b,j)}$. We define the $N\times N$ matrix $D$ given by the squares of these distances. It consists of blocks $D^{(a,b)}$ of size $n\times n$ such that $D^{(a,b)}_{ij} = d_{(a,i),(b,j)}^2$. 
For ease of dual notation, the k-means SDP (\ref{SDP}) and dual (\ref{DUAL_SDP}) are presented using slightly unconventional notation:

\begin{minipage}{0.49\textwidth}
\begin{eqnarray}
{\underset{X \in \mathbb{R}^{N \times N} }{\max}}   -\tr(DX) \label{SDP} \\
\text{ subject to }  \tr(X) = k \nonumber \\
X1=1  \nonumber \\
X \geq 0 \nonumber \\
X \succeq 0 \nonumber .
\end{eqnarray}
\end{minipage}
\begin{minipage}{0.49\textwidth}
\begin{eqnarray}
\min_{z \in \mathbb{R}, \alpha}  kz + \sum_{a=1}^k\sum_{i=1}^n \alpha_{a,i} \label{DUAL_SDP} \\
\text{subject to } Q = zI_{N\times N} + \sum_{a=1}^k\sum_{i=1}^n \alpha_{a,i}A_{a,i}  \nonumber \\
\sum_{a,b=1}^k\sum_{i,j=1}^n\beta^{(a,b)}_{i,j}E_{(a,i),(b,j)} + D \nonumber \\
\beta_{i,j} \geq 0 \nonumber \\
Q \succeq 0 \nonumber
\end{eqnarray}
\end{minipage}


Here, $1 \in \mathbb{R}^{N \times 1}$ has unit entries, and $e_{a,i} \in \mathbb{R}^{N \times 1}$ is the indicator function for index $(a,i)$. Also, $A_{a,i} = \frac12\left(1e_{a,i}^T + e_{a,i}1^T \right)$ and $E_{(a,i),(b,j)} = \frac12\left(e_{b,j}e_{a,i}^T + e_{a,i}e_{b,j}^T \right)$.

The intended primal optimal solution $X \in \mathbb{R}^{N \times N}$ which we will construct a dual certificate for is block-diagonal, equal to $1/n$ in the $n \times n$ diagonal blocks for each of the clusters, and $0$ otherwise. Defining $1_a$ as the indicator function of cluster $a$ (that is, it has a $1$ in coordinates corresponding to the points in cluster $a$), we can write the intended solution as $X = \frac1n\sum_{a=1}^k 1_a1_a^T$.

Recall the dual certificate approach: if we can construct a set of feasible dual variables $(z, \alpha, \beta, Q)$ with dual objective function \eqref{DUAL_SDP} equal to the primal objective \eqref{SDP} corresponding to $X$, then we can be assured that $X$ is an optimal solution. If, in addition, $\rank(Q) + \rank(X) = N$, then we can be assured that $X$ is the unique optimal solution.
Towards this end, complementary slackness tells us that $Q X = 0$, which means that
\begin{equation}
\label{Q_cond}
Q1_a \equiv 0, \quad \quad \forall_a.
\end{equation}
Complementary slackness also tells us that, over each $n \times n$ diagonal block,
\begin{equation}
\beta^{(a,a)} \equiv 0, \quad \quad  \forall_a.
\end{equation}

We thus have, for each $n \times n$ diagonal block of $Q$,
\begin{equation}
\label{show_z}
Q^{(a,a)} = zI_{n\times n} + \frac12 \sum_{i=1}^n \alpha_{a,i}\left(1e_i^T + e_i1^T \right) + D^{(a,a)}.
\end{equation}
Note that here $e_i$ are $n$-length vectors and before they were $N$-length (we shall switch between vectors of length $n$ and $N$ when necessary, this makes our notations easier).

In fact, these constraints implied by complementary slackness suffice to specify the $\alpha_{(a,i)}$ values.  Since the total dual objective is equal to the clustering cost of the intended solution, it remains to complete the $Q$ matrix and the $\beta$ matrix such that  $\beta \geq 0$ (entry wise), and $Q \succeq 0$ (in the positive definite sense). To this end,  consider the non-diagonal $n \times n$ blocks:

\begin{equation*}
Q^{(a,b)} = \frac12 \sum_{i=1}^n (\alpha_{a,i} e_i1^T + \alpha_{b,i}1e_i^T) - \frac12\beta^{(a,b)}  + D^{(a,b)}, \quad a \neq b
\end{equation*}

 Since we want to ultimately arrive at a sufficient condition for integrality which depends on within- and between-cluster pairwise distances, and we know that $Q$ must be positive semi-definite and satisfy the constraints \eqref{Q_cond}, we impose a slightly stronger condition on the off-diagonal submatrices $Q^{(a,b)}$ ($a \neq b$) which will imply all required constraints on $Q$: we set
 \begin{equation}
 \label{dual:cert1}
 Q^{(a,b)}_{r,s} = \frac1n e_r^TD^{(a,b)}1 +  \frac1n 1^TD^{(a,b)}e_s - e_r^T D^{(a,b)}e_s - \frac{1}{n^2} 1^T D^{(a,b)}, \quad \quad a \neq b
   \end{equation}

Writing $Q^{(a,b)}$ also in terms of the $\beta^{(a,b)}$ and solving for $\beta^{(a,b)}$, the non-negativity of $\beta$ gives us the following constraints that these parameters need to satisfy:  for all clusters $a \neq b$, and all $r \in a, s \in b$,
\begin{multline}
\nonumber
2D^{(a,b)}_{rs}  -\frac{e_r^TD^{(a,b)}1}n -  \frac{1^TD^{(a,b)}e_s}n + \frac{1^T D^{(a,b)} 1}{n^2} \geq \\ \nonumber \frac{e_r^TD^{(a,a)}1}n  + \frac{e_s^TD^{(b,b)}1}n   -
\frac12 \left(   \frac{1^TD^{(a,a)}1}{n^2} + \frac{1^TD^{(b,b)}1}{n^2}\right) + \frac1nz.
\end{multline}
Notice that the above constraints essentially compare (for two points $r,s$ in clusters $a,b$ respectively) (i) the average distance of $r$ to the cluster $b$, the average distance of $s$ to cluster $a$, the distance between $r$ and $s$, and finally the average distance between the two clusters, indicating that these are reasonable conditions.
Now, note by \eqref{show_z} that $Q \succeq 0$ automatically holds once $z$ is sufficiently large; It remains to find a lower bound on $z$ for which this holds.  Since $Q 1_a = 0$ for all $a$, it is sufficient to check that $x^T Q x \geq 0$ for all $x$ perpendicular to $\Lambda$; that is, for all $x$ in the span of $\{1_a \, , \, a \in [k]\}$. But if $x$ is perpendicular to these cluster indicator vectors, $x^T Q x \geq 0$ greatly simplifies to\footnote{this uses our choice of $Q^{(a,b)}$ above, which ensures that most terms cancel} $zx^Tx  + 2x^T( \sum_{a} D^{(a,a)}) x - x^T D x >0$.
This suggests setting $z > z^\ast = \left( 2 \max_a \max_{x\perp 1} \left| \frac{x^T D^{(a,a)}) x}{x^Tx} \right| +\max_{x\perp \Lambda} \left| \frac{x^T D  x}{x^Tx} \right| \right)$, so that the null space of $Q$ only consists of $\Lambda$, thus ensuring that $\rank(Q) + \rank(X) = N$. Decompose the squared euclidean distance matrix $D=V+V^T-2MM^T$ where $V$ has constant rows, every entry of row $i$ is equal to the squared norm of $x_i$, and the $i$th row of $M$ correspond to the actual coordinates of the point $x_i$. Then by observing that $x^T(V+V^T)x=0$ for $x\perp \Lambda$ and that $MM^T$ is positive semidefinite we can instead set $z > z^\ast = 4 \max_a \max_{x\perp 1} \frac{x^T M^{(a)}M^{(a)T} x}{x^Tx}$.
This combined with the non-negativity of $\beta$ gives us the following deterministic separation condition:
\begin{definition}[Average Separation] \label{avg-separation}
A clustering instance satisfies average separation if for all clusters $a,b$, and all $r \in a, s \in b$:
\begin{multline}
\nonumber 2D^{(a,b)}_{rs}  -\frac{e_r^TD^{(a,b)}1}{n} -  \frac{1^TD^{(a,b)}e_s}{n} + \frac{1^T D^{(a,b)} 1}{n^2} > \\ \nonumber \frac{e_r^TD^{(a,a)}1}n  + \frac{e_s^TD^{(b,b)}1}n   - \frac12 \left(   \frac{1^TD^{(a,a)}1}{n^2} + \frac{1^TD^{(b,b)}1}{n^2}\right) + \frac1nz^\ast,
\end{multline}
where $z^\ast = 4 \max_a \max_{x\perp 1} \frac{x^T M^{(a)}M^{(a)T} x}{x^Tx}$.
\end{definition}
The above condition essentially compares (for two points $r,s$ in clusters $a,b$ respectively) (i) the average distance of $r$ to the cluster $b$, the average distance of $s$ to cluster $a$, the distance between $r$ and $s$, and finally the average distance between the two clusters. Using the parallelogram identity this condition can be greatly simplified to:

\begin{definition}[Average separation equivalent formulation]
\label{separation_condition}
For cluster $c$ define $x_c=\sum_{y\in c} y$ the mean of the cluster. 
A clustering instance satisfies average separation if, for all clusters $a\neq b$ and for all indices $r,s$ we have
\begin{multline}
\label{separation_eqn}
2\|x_r-x_s\|^2  -\|x_r-x_b\|^2  - \|x_s-x_a\|^2  -\|x_r-x_a\|^2 -\|x_s-x_b\|^2  +  \|x_a-x_b\|^2 > \\ \frac1n \left( 4 \max_a \max_{x\perp 1} \left| \frac{x^T M^{(a)}M^{(a)T}) x}{x^Tx} \right| \right)
\end{multline}
\end{definition}

Hence, we have the following theorem.
\begin{theorem}
\label{thm:k-means-general}
If a euclidean clustering instance with the squared distance matrix $D$ satisfies average separation as defined above, then the corresponding $k$-means SDP for the instance has unique integral solution equal to the $k$-means optimal solution, and corresponding to this clustering.
\end{theorem}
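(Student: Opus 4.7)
The plan is to make rigorous the dual certificate construction sketched in the derivation leading up to the statement. Concretely, I will exhibit a feasible dual point $(z,\alpha,\beta,Q)$ whose objective equals the primal cost of the planted clustering $X=\frac1n\sum_a 1_a 1_a^T$, and whose matrix $Q$ has rank exactly $N-k$. Weak duality then forces $X$ to be optimal, and the rank condition $\rank(Q)+\rank(X)=N$ forces uniqueness.

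First, I define the dual variables in the order the excerpt suggests. Set $\beta^{(a,a)}\equiv 0$ and choose the $n$ values $\alpha_{a,1},\dots,\alpha_{a,n}$ so that \eqref{show_z} holds; this determines each $\alpha_{a,i}$ up to the free parameter $z$, since $Q^{(a,a)}$ acting on $1$ must vanish (the condition $Q1_a=0$ reduces on the diagonal block to a linear system with a unique solution once the unknown scalar $z$ is fixed). Next, for $a\neq b$, define $Q^{(a,b)}$ by the closed form \eqref{dual:cert1}, and read off $\beta^{(a,b)}$ from the identity $Q^{(a,b)}=\tfrac12\sum_i(\alpha_{a,i}e_i1^T+\alpha_{b,i}1e_i^T)-\tfrac12\beta^{(a,b)}+D^{(a,b)}$. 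Finally, pick $z$ strictly greater than $z^\ast:=4\max_a\max_{x\perp 1}\frac{x^TM^{(a)}M^{(a)T}x}{x^Tx}$.

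The verification then proceeds in three steps. \textbf{Step 1 (nonnegativity of $\beta$).} A direct substitution of the closed form for $Q^{(a,b)}$ and the formula for $\alpha_{a,i}$ reduces the inequality $\beta^{(a,b)}_{rs}\geq 0$ to exactly the average-separation inequality in Definition~\ref{avg-separation}, which holds by hypothesis (with strict inequality, giving room for $z>z^\ast$). \textbf{Step 2 (positive semidefiniteness of $Q$).} By construction $Q1_a=0$ for every $a$, so $\Lambda:=\mathrm{span}\{1_a:a\in[k]\}$ sits in the kernel of $Q$. For $x\perp\Lambda$, the cross terms of $Q$ coming from $\alpha$ and from the off-diagonal choice \eqref{dual:cert1} telescope; using the decomposition $D=V+V^T-2MM^T$ and the facts $x^T(V+V^T)x=0$ for $x\perp\Lambda$ and $x^TMM^Tx\geq 0$, the quadratic form reduces to
\begin{equation}
x^TQx\;=\;z\,\|x\|^2+2\sum_a x^T D^{(a,a)} x - x^T D x \;\geq\; \bigl(z-z^\ast\bigr)\|x\|^2>0,
\end{equation}
which is strictly positive for $x\neq 0$ perpendicular to $\Lambda$. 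Hence $Q\succeq 0$ and $\ker(Q)=\Lambda$, so $\rank(Q)=N-k$. \textbf{Step 3 (matching objectives).} By the way $\alpha$ was determined from \eqref{show_z}, summing $\alpha_{a,i}$ against the planted $X$ and subtracting $kz$ reproduces $-\tr(DX)$; equivalently, complementary slackness $QX=0$ and $\beta\odot X=0$ both hold by construction, which automatically gives equality of primal and dual objectives.

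With the three steps in hand, weak duality yields optimality of $X$ and the rank identity $\rank(X)+\rank(Q)=k+(N-k)=N$ yields uniqueness in the standard way (any other primal optimum $X'$ satisfies $QX'=0$, forcing its column space into $\Lambda$; combined with $X'1=1$, $\tr(X')=k$, $X'\geq 0$, and $X'\succeq 0$, this pins down $X'=X$). The step I expect to demand the most care is Step 2: getting $x^TQx$ into the clean form above requires tracking the cross terms carefully and using the identity $Q1_a=0$ to justify restricting attention to $x\perp\Lambda$; every other step is either a definition unwinding or a direct invocation of the average separation hypothesis.
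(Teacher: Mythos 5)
Your proposal is correct and follows essentially the same route as the paper: the same dual certificate $(z,\alpha,\beta,Q)$ with $\alpha$ forced by $Q1_a=0$ on diagonal blocks, the off-diagonal blocks set by \eqref{dual:cert1}, nonnegativity of $\beta$ reduced to the average-separation inequality, and positive semidefiniteness of $Q$ with kernel exactly $\Lambda$ obtained via the decomposition $D=V+V^T-2MM^T$ and the choice $z>z^\ast$. The only (minor) addition is that you spell out the uniqueness step slightly more explicitly than the paper does, but the underlying rank argument is identical.
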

In Appendix \ref{SDP-appendix2} we show that for our distributional instances consisting of clusters whose centers are separated by at least $2\sqrt{2}(1+\sqrt{1/m})$, average separation is satisfied for large enough $n$. 
Putting this together, we get the following:

\begin{theorem}
\label{main:kmeans}
For the $k$-means objective, if $n$ points are drawn from $k$ distributions in $\RR^m$, where each distribution is isotropic and supported on a ball of radius $1$, and if the centers of these balls are separated at a distance at least $2\sqrt2(1+\sqrt{1/m})$, then there exists $n_0$ such that for all $n \geq n_0$, the $k$-means SDP recovers the exact clusters with probability exceeding $1 - 2m k \exp\left(\frac{-c n}{(\log n)^2m}\right)$.
\end{theorem}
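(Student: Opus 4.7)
The plan is to apply Theorem~\ref{thm:k-means-general}: it suffices to show that the random instance satisfies the deterministic \emph{average separation} condition \eqref{separation_eqn} with the claimed probability. First, writing each point as $x_r = c_{a(r)} + u_r$ where $u_r$ is drawn from the isotropic, unit-ball-supported distribution in $\mathbb{R}^m$ (so $\mathbb{E}[u_r u_r^T]=\sigma^2 I$ with $\sigma^2\le 1/m$), and setting $\bar u_a = n^{-1}\sum_{i\in a}u_i$ and $\Delta=c_a-c_b$ for a pair of clusters $a\neq b$, a direct expansion of the six squared distances on the LHS of \eqref{separation_eqn} collapses, after cancellation of the $\Delta$-linear terms involving the cluster means, to
\begin{equation*}
\mathrm{LHS}(r,s)\;=\;\|\Delta\|^2+2\,\Delta\!\cdot\!(u_r-u_s)-4\,u_r\!\cdot\!u_s+\mathcal{R},
\end{equation*}
where $\mathcal{R}$ collects remainder terms involving $\bar u_a,\bar u_b$. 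Analogously the RHS $z^{*}/n=(4/n)\max_a\|M^{(a)}M^{(a)T}\|_{\mathrm{op},\perp 1}$ simplifies to $(4/n)\max_a\|U^{(a)}U^{(a)T}\|_{\mathrm{op}}$, where $U^{(a)}$ has the centered noise vectors as rows, since the rank-one center contribution is killed by the $\perp 1$ projection.

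Next I would control every random quantity by concentration, uniformly over the $k$ clusters, the $kn$ points, and the $\binom{kn}{2}$ pairs. (i) Vector Hoeffding gives $\|\bar u_a\|\lesssim\sqrt{m\log(kn)/n}$, making $\mathcal{R}$ negligible for large $n$. (ii) Each $\Delta\!\cdot\!u_r$ is a mean-zero, $\|\Delta\|$-bounded scalar with variance at most $\|\Delta\|^2/m$; scalar Bernstein plus a union bound yields $|\Delta\!\cdot\!u_r|\lesssim \|\Delta\|/\sqrt{m}$ on a high-probability event. (iii) For the bilinear $u_r\!\cdot\!u_s$ between independent isotropic vectors, conditioning on $u_s$ reduces to (ii) and gives $|u_r\!\cdot\!u_s|\lesssim 1/\sqrt{m}$ with the same quality of estimate. (iv) Matrix Bernstein applied to $\sum_{i\in a}u_iu_i^T$, each summand of operator norm at most $1$ and with $\|\mathbb{E}[u_iu_i^T]\|=\sigma^2$, produces $\|U^{(a)T}U^{(a)}\|_{\mathrm{op}}\le n\sigma^2+O(\sqrt{n\log m})$ with failure probability $2m\exp(-cn/((\log n)^2 m))$; the extra $(\log n)^2$ in the exponent accommodates the logarithmic overhead needed to convert Bernstein's sub-exponential tail into a single clean exponential rate that dominates the union bounds in (ii)--(iii). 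A union over the $k$ clusters yields the probability bound stated in the theorem.

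Putting the three pieces together, the separation condition reduces, after substituting $\mathcal{R}\to 0$ and $z^{*}/n\to 4\sigma^2\le 4/m$, to an inequality of the form $\|\Delta\|^2-O(\|\Delta\|/\sqrt{m})-O(1/\sqrt{m})>4/m$, and a short completion-of-squares verification shows that the hypothesis $\|\Delta\|\ge 2\sqrt{2}(1+\sqrt{1/m})$ provides exactly the slack needed to absorb the error terms. The main technical obstacle, I expect, is step (ii)--(iii): extracting the sharp constant $2\sqrt{2}$ (rather than a worse constant) hinges on using isotropy \emph{and} boundedness of the $u_r$'s simultaneously in the Bernstein estimates, and on absorbing the $\sqrt{\log n}$ overhead of the union bound into the $(\log n)^2$ factor of the final probability rather than letting it appear inside the separation threshold. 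Once this careful concentration bookkeeping is in place, the proof assembles cleanly from the dual-certificate construction already established in Theorem~\ref{thm:k-means-general}.
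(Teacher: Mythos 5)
Your reduction to Theorem~\ref{thm:k-means-general}, your treatment of the right-hand side $z^\ast/n$ (operator-norm concentration for the centered data matrix, union bound over the $k$ clusters, and the choice $s=1/\log n$ yielding the stated failure probability), and your explicit control of the empirical means $\bar u_a$ all match or slightly refine the paper's argument; the algebraic collapse of the LHS of \eqref{separation_eqn} to $\|\Delta\|^2+2\,\Delta\cdot(u_r-u_s)-4\,u_r\cdot u_s+\mathcal{R}$ is also correct.

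The gap is in steps (ii)--(iii). The average-separation condition must hold for \emph{every} pair $(r,s)$, and the theorem is asserted for fixed $m$ with $n\to\infty$. In that regime the extreme sample points approach the extreme points of the support: for, e.g., the uniform measure on the unit ball, $\max_r|\Delta\cdot u_r|\to\|\Delta\|$ and $\max_{r,s}|u_r\cdot u_s|\to 1$ almost surely, not $O(\|\Delta\|/\sqrt{m})$ and $O(1/\sqrt{m})$. Bernstein plus a union bound over $kn$ points gives a deviation of order $\|\Delta\|\sqrt{\log(kn)/m}$, which for large $n$ exceeds even the trivial bound $\|\Delta\|$; the $\sqrt{\log n}$ overhead lives in the deviation, not in the tail, and cannot be ``absorbed'' into the $(\log n)^2$ of the probability exponent. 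Consequently your final inequality $\|\Delta\|^2-O(\|\Delta\|/\sqrt{m})-O(1/\sqrt{m})>4/m$ would yield a separation threshold of order $1/\sqrt{m}$ --- far stronger than $2\sqrt{2}(1+\sqrt{1/m})$ and stronger than Conjecture~\ref{conjecture} --- which is itself a signal that the binding constraint was lost. The paper instead bounds the LHS by a \emph{deterministic} worst-case minimization of \eqref{separation_eqn} over all positions of $x_r,x_s$ in the two unit balls (via Lagrange multipliers; the minimizer places $u_r,u_s$ adversarially along $\Delta$, at first coordinates $\Delta/4$ and $\Delta/2$), giving the value $\Delta^2/2-4$ for $\Delta\le 4$. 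Comparing that worst case against $4\theta(1+s)^2/m$ is what produces the $2\sqrt{2}$ threshold. Replacing your concentration estimates (ii)--(iii) by this deterministic minimum, while keeping your handling of $\mathcal{R}$ and of $z^\ast$, repairs the argument.
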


\section{Where convex relaxations succeed, Lloyd's Method can fail}
\label{sec:lloyds}
The well-known heuristic algorithm for solving the $k$-means optimization problem known as Lloyd's algorithm \footnote{We recap how the Lloyds algorithm proceeds: initialize $k$ centers \emph{uniformly at random} from among the data points. Then, in each iteration, two steps occur: (i) using the currently chosen centers, each point assigns itself to the nearest center; (ii) now, given the assignment of data points to clusters, new centers are computed as being the means of each cluster (i.e., the average of the data points assigned to a cluster). The algorithm terminates at the first step when the clustering does not change in successive iterations.} (also known as the $k$-means algorithm or Voronoi iteration) can fail to find global optimum solutions in the setting of separated isotropic clusters where, as shown in Theorem \ref{main_theorem} and Theorem \ref{main:kmeans} respectively, the $k$-median LP and $k$-means SDP are integral.
The construction of a bad scenario for Lloyd's algorithm consists of $3$ balls of unit radius, such that the centers of the first two are at a distance of $\Delta>2$ from each other, and the center of the third is far away (at a distance of $D \gg \Delta$ from each of the first two balls). Generate the data by sampling $n$ points from each of these balls. Now we create $l$ copies of this group of $3$ clusters such that each copy is very far from other copies. We will show that with overwhelming probability Lloyd's algorithm will pick initial centers such that either (1) some group of 3 clusters does not get 3 centers initially, or (2) some group of 3 clusters will get 3 centers in the following configuration: 2 centers in the far away cluster and only one center in the two nearby clusters. In such a case it is easy to see the the algorithm will never recover the true clustering.

The same example can also be extended to show that the well known kmeans++  algorithm~\cite{Arthur07} which uses a clever initialization will also fail with high probability when the number of clusters and the dimension of the space is large enough, even in the setting with overseeding proposed in~\cite{lloyd06}. In particular, we prove the following theorem in Appendix \ref{sec:lloyds-app}.

\begin{theorem}
Given an overseeding parameter $c>1$ and minimum separation $\Delta>2$, there exist inputs with center separation at least $\Delta$ for which kmeans++, overseeded with $ck$ initial centers, fails with high probability to exactly recover the clusters.
\end{theorem}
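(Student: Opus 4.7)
The plan is to amplify the three-cluster failure mode of the main text via replication. Take $l$ copies of the three-ball group described above---two ``near'' unit balls whose centers are at distance $\Delta$ and one ``far'' unit ball at distance $D\gg \Delta$ from each---and place the $l$ groups at mutual distance $D'\gg D$, yielding $k=3l$ true clusters with $n$ uniform samples each. I will show that for $l$ chosen sufficiently large (as a function of $c$), at least one near ball receives zero of the $ck$ kmeans++ seeds with probability $1-o(1)$; from this, failure of exact recovery is immediate, since no subsequent reduction to $k$ centers or Lloyd refinement can introduce a center into a ball that was never seeded.

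First I would show that for $D'/D$ large enough the initial $l$ seeds fall one per group with probability $1-o(1)$: the squared-distance weight of any unvisited group is $\Theta(n(D')^2)$, dominating the total $O(lnD^2)$ already absorbed, and within each group the first seed's ball is uniform by symmetry. Call a group $1$-, $2$-, or $3$-seeded according to how many of its balls contain a seed; after this first phase all groups are $1$-seeded, with a $\tfrac23\pm o(1)$ fraction having their seed in a near ball.

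The next step analyzes how the remaining $(c-\tfrac13)k$ picks evolve the group states. In a $1$-seeded group whose seed lies in a near ball, the far ball carries weight $\Theta(nD^2)$, dominating the other near ball ($\Theta(n\Delta^2)$) and the seeded near ball ($O(n)$); symmetrically, in a $1$-seeded group whose seed lies in the far ball, both near balls have weight $\Theta(nD^2)$. Hence with overwhelming conditional probability the next pick in any $1$-seeded group transitions it to a $2$-seeded state of total weight $\Theta(n\Delta^2)$, which is $\Theta(D^2/\Delta^2)$ times smaller than any remaining $1$-seeded group. Consequently, the first $\sim l$ post-initial picks essentially drain the $1$-seeded population, ending with all $l$ groups $2$-seeded. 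With constant probability, a constant fraction of these groups are in the ``bad'' $2$-seeded configuration (near$+$far) with the unseeded ball being a near ball.

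Once all groups are $2$-seeded, the remaining $(3c-2)l$ picks distribute among the $l$ groups of approximately equal weight $\Theta(n\Delta^2)$, and within each selected group the next pick lands in its unseeded ball with probability $1-O(1/\Delta^2)$ (since the two seeded balls carry only $O(n)$ weight each). The number of groups receiving zero further picks is thus close in distribution to $\mathrm{Binomial}(l,(1-1/l)^{(3c-2)l})\approx \mathrm{Binomial}(l,e^{-(3c-2)})$. Choosing $l\geq C\,e^{3c}$ for an absolute constant $C$ makes the probability that at least one group remains $2$-seeded exceed $1-o(1)$; that group harbors a near ball (a true cluster) devoid of seeds, and so exact recovery fails. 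The main obstacle will be making the coupling to an idealized independent-group process rigorous; I would handle this by stochastically dominating the true weighted-sampling process by a uniform-over-$2$-seeded-groups process, observing that deviations from uniformity only \emph{strengthen} the bad event: once a group turns $3$-seeded its weight drops by a further factor $\Theta(\Delta^2)$, so subsequent picks concentrate even more strongly on the remaining $2$-seeded groups.
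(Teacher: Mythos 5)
Your construction (replicated three-ball groups placed far apart) and your first two phases mirror the paper's, but the final balls-in-bins step contains a genuine error, and it is the load-bearing step. The claim that ``deviations from uniformity only strengthen the bad event'' is backwards. Your bad event is that some $2$-seeded group receives none of the last $(3c-2)l$ picks; but once a group is picked it becomes $3$-seeded and its weight \emph{drops} by a factor $\Theta(\Delta^2)$, so the remaining picks concentrate on the still-unpicked ($2$-seeded) groups, which makes it \emph{harder}, not easier, for any group to survive unpicked. In the regime of large $\Delta$ (which you must handle, since the theorem quantifies over all $\Delta>2$ and the near balls sit at separation exactly $\Delta$), essentially every remaining pick lands in a distinct $2$-seeded group until none are left; since $(3c-2)l\ge l$, all $l$ groups become fully seeded and no cluster is missed. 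Your proposed comparison process ``uniform over $2$-seeded groups'' is precisely the process under which all groups get hit fastest, so it cannot dominate in the direction you need. Consequently the claimed threshold $l\ge C e^{3c}$, independent of $\Delta$, is wrong: tracking the drift of the $2$-seeded population, $j\mapsto j\bigl(1-\Theta(\Delta^2/l)\bigr)$ per pick, shows you would need $l\ge \exp\bigl(\Omega(c\Delta^2)\bigr)$ groups for one to survive.

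The paper sidesteps this by not arguing about typical behavior at all: for a \emph{single} group it lower-bounds the probability of the rare event that $A_1\cup B_1$ receives exactly one of the $cK$ seeds. Conditioned on one seed in $B_1$, the weight of $A_1\cup B_1$ is $O(n\Delta^2)$ while the rest of the instance retains weight $\Omega(Kn)$ --- this is where an explicit high-dimension assumption enters, guaranteeing that already-seeded balls keep weight $\Omega(n)$, a point your write-up also omits --- so each pick avoids $A_1\cup B_1$ with probability $1-O(\Delta^2/K)$, and all $cK$ picks avoid it with probability $\exp\bigl(-O(c\Delta^2)\bigr)$. Taking $k\ge\exp\bigl(O(c\Delta^2)\bigr)$ groups then makes some group under-seeded with high probability. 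Your argument can be repaired by replacing the phase-3 analysis with exactly this per-group large-deviation bound (equivalently, proving that a single $2$-seeded group survives with probability at least $\bigl(1-O(\Delta^2/l)\bigr)^{(3c-2)l}$ and choosing $l$ accordingly), at which point it essentially coincides with the paper's proof.
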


\section{Simulations}
In this section we report on experiments conducted regarding the
integrality of $k$-median LP~\eqref{LP}, $k$-means LP~\eqref{kmeanslp},
and $k$-means SDP~\eqref{SDP}. Our input consists of $k$ disjoint
unit-radius balls in $\RR^m$ such that the centers of distinct balls are separated by
distance $\Delta \geq 2$. We then randomly draw $N=kn$ points; $n$ points i.i.d. uniformly within each ball.
We implement and solve the
convex optimization problems using Matlab and CVX~\cite{cvx}.
An experiment is considered successful if the solution of the
convex optimization is integral and separates the balls into their respective clusters. Note that this is the same experimental set-up as in \cite{Nellore_Kmedians}.   For each value of $\Delta$ and $n$ we repeat the
experiment 10 times and plot, in a gray scale, the empirical
probability of success.

\begin{figure*}[h]
\centering
\begin{tabular}{cccc}
\includegraphics[height=3.1 cm]{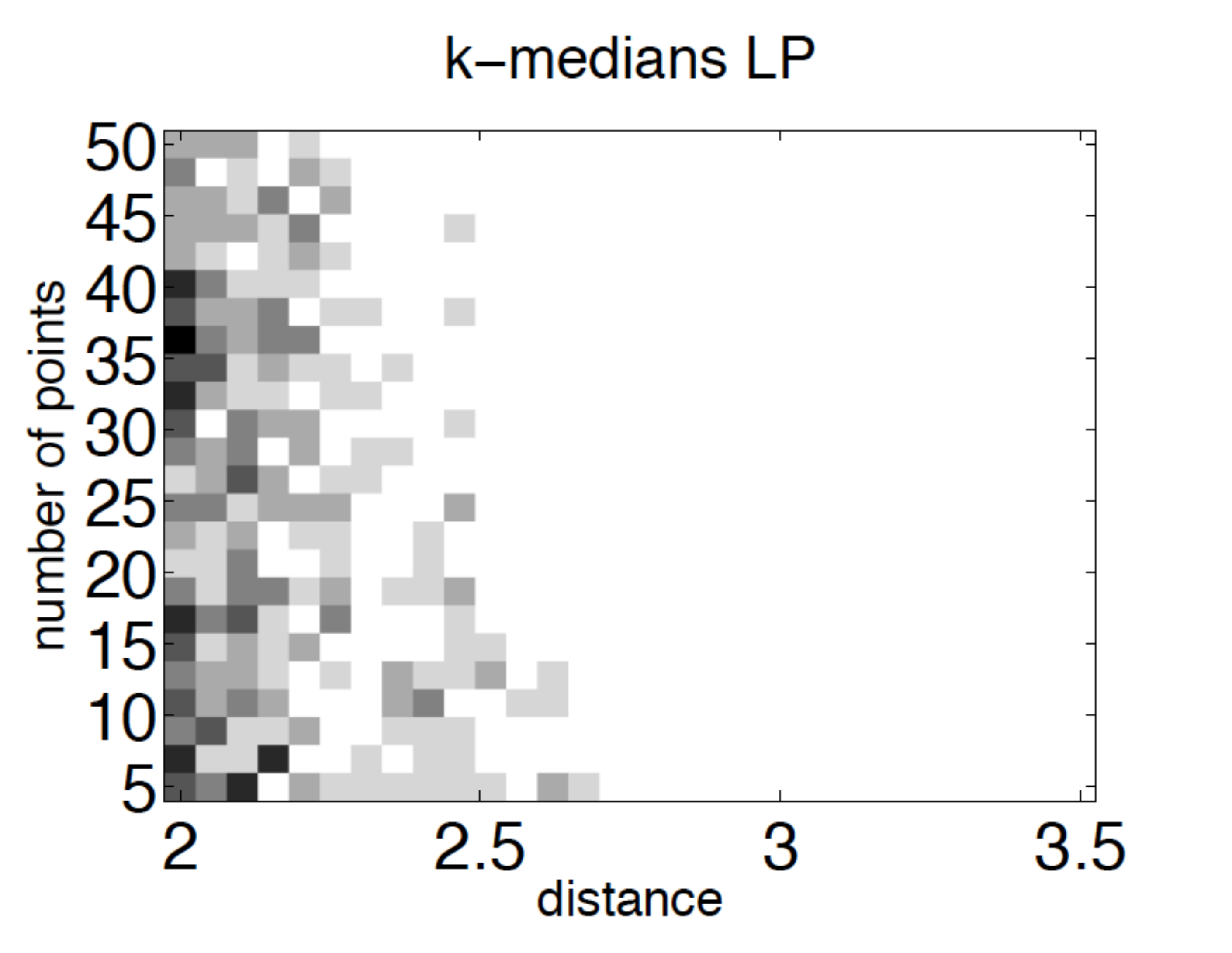} &
\includegraphics[height=3.1 cm]{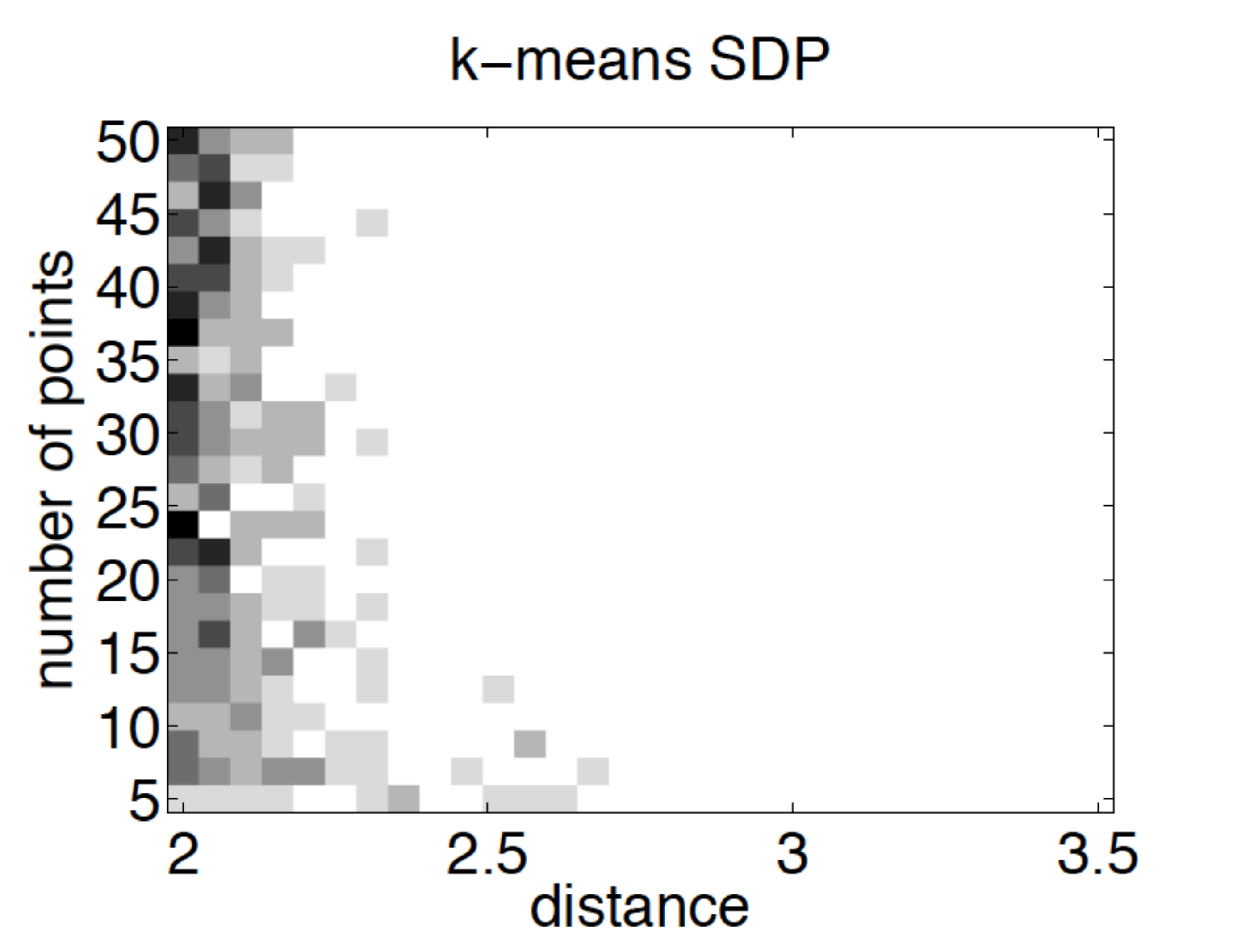} &
\includegraphics[height=3.1 cm]{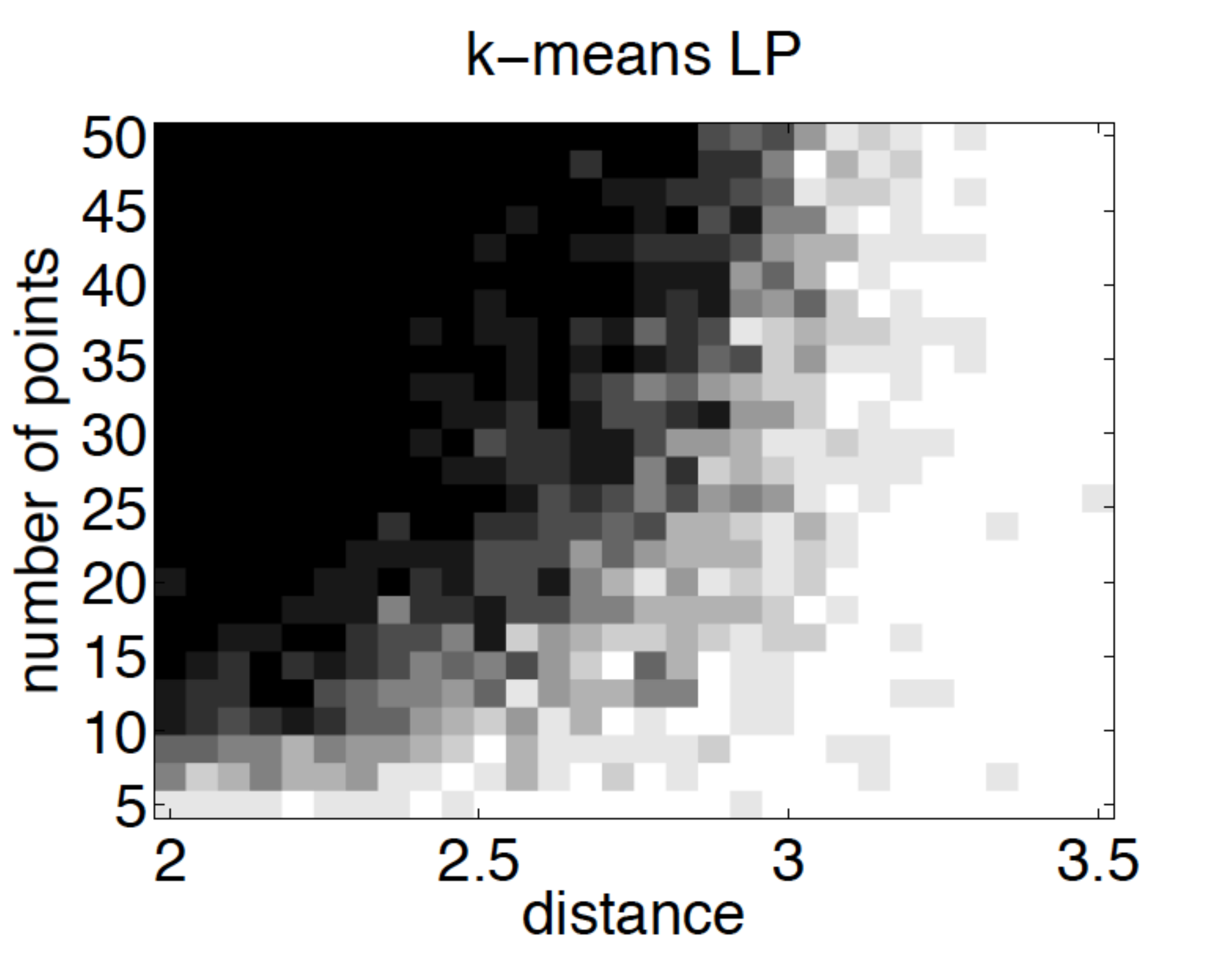} &
\end{tabular}
\caption{\label{sim_2clusters} Empirical probability of integrality of convex relaxation-based
clustering. {\bf Lighter color corresponds to higher probability of
success.}  We consider $2$ clusters in $\RR^3$, $4\leq N \leq 50$,
$2\leq \Delta \leq 3.5$. 
}
\bigskip
\centering
\begin{tabular}{ccc}
\includegraphics[height=3.1 cm]{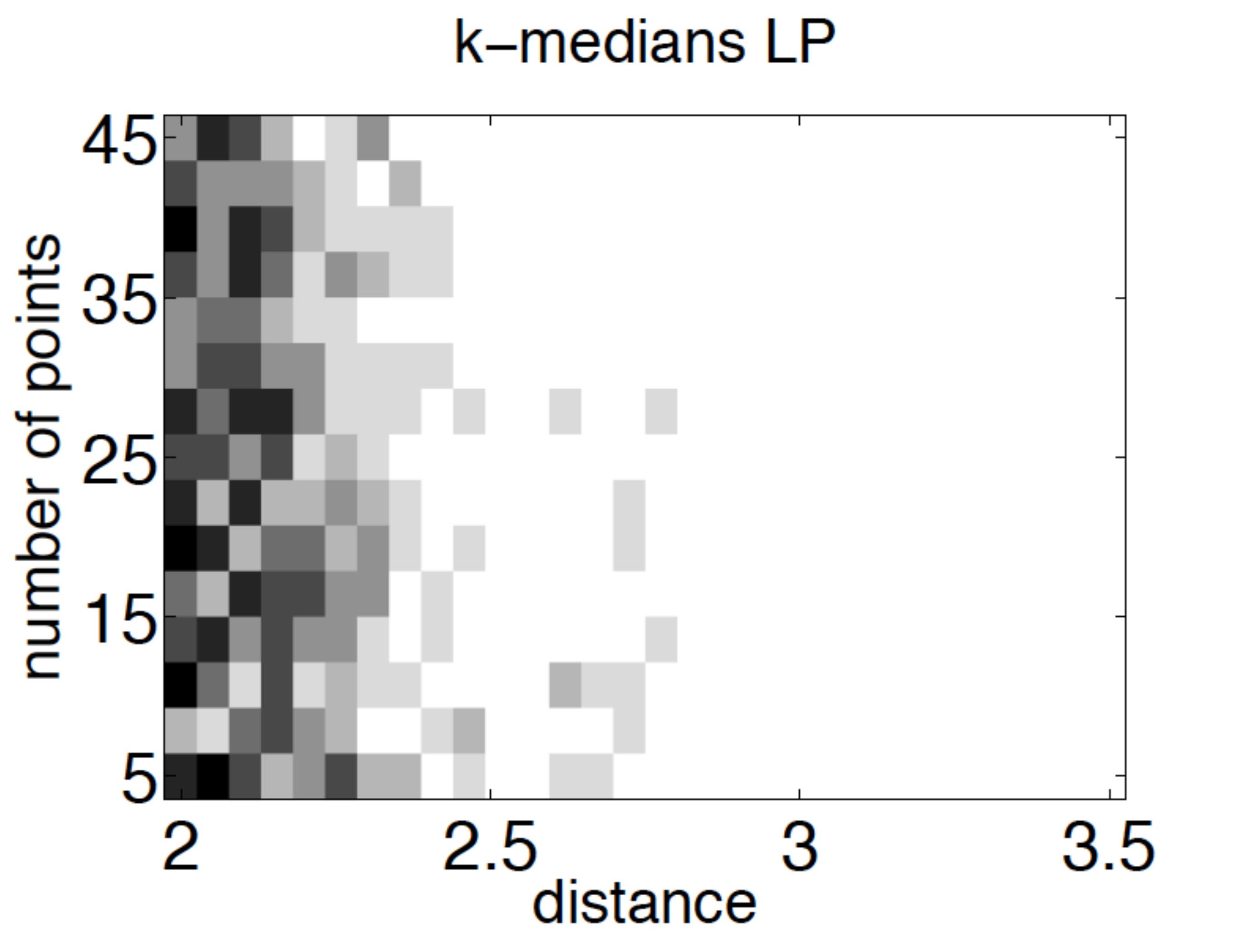} &
\includegraphics[height=3.1 cm]{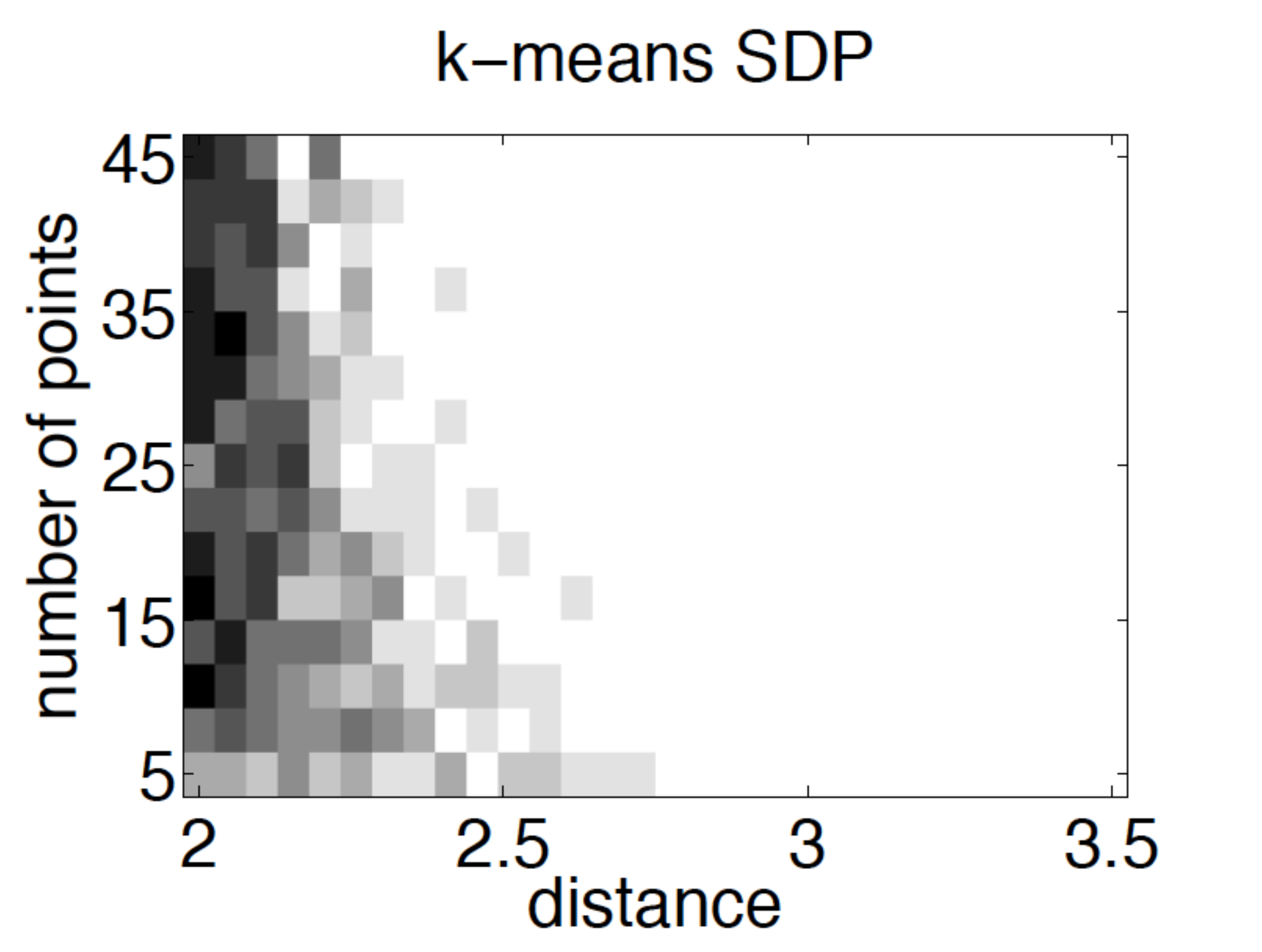} &
\includegraphics[height=3.1 cm]{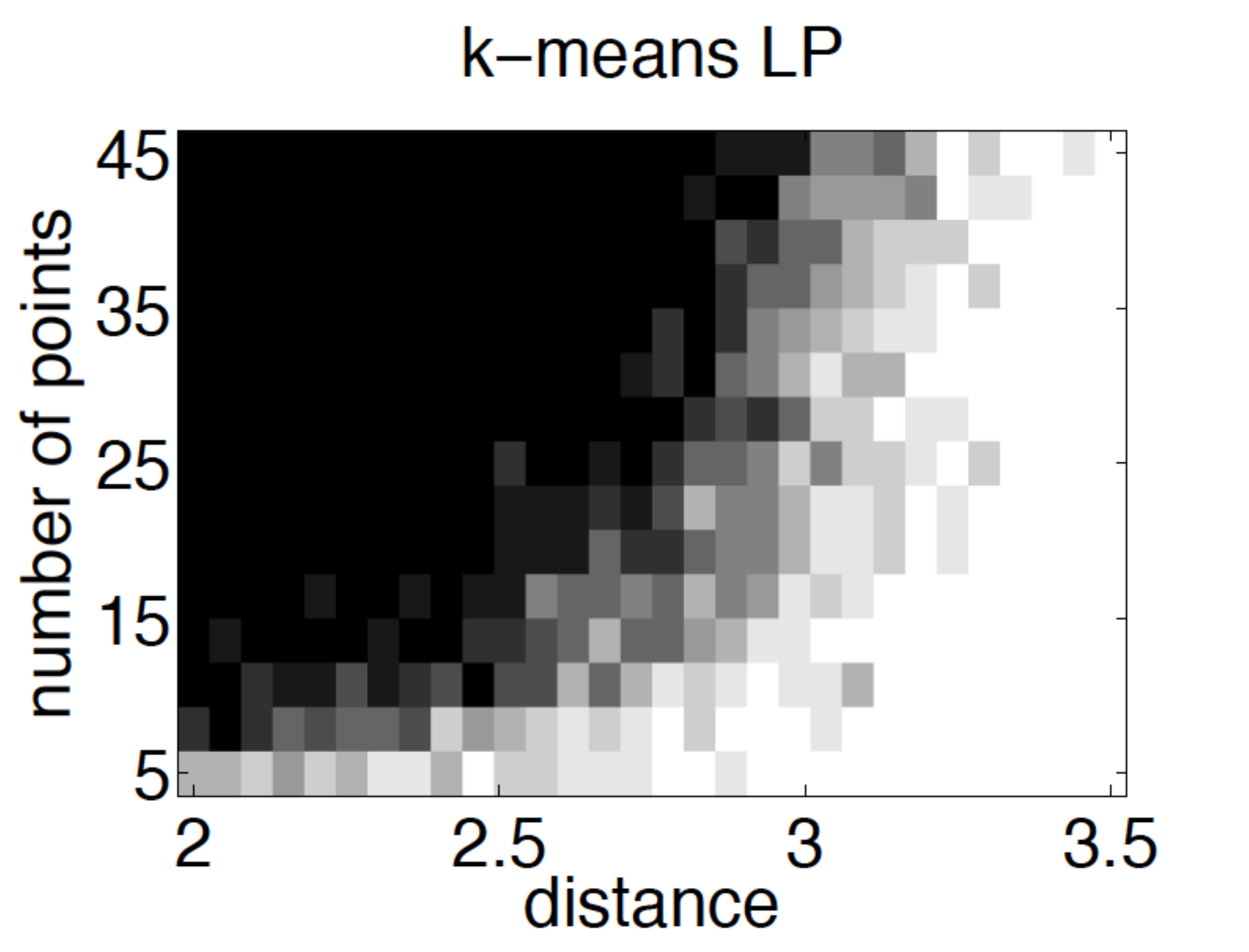}
\end{tabular}
\caption{\label{sim_3clusters} For this simulation we generate $3$ clusters in $\RR^3$, $6\leq N \leq 42$, $2\leq \Delta
\leq 3.5$.  {\bf Lighter color corresponds to higher probability of
success.}   
}

\end{figure*}

Figure \ref{sim_2clusters} shows the simulation results for $k=2$
clusters in $\RR^3$. The number of points $N$ ranges from $4$ to $50$ and
$\Delta$ ranges from $2$ to $3.5$. It is clear that the $k$-median LP and $k$-means SDP are
superior to the k-means LP in achieving exact recovery at lower threshold $\Delta$. In fact, as
predicted by our theoretical analysis, the k-means LP integrality is very infrequent for $\Delta < 3$.
The $k$-median LP and $k$-means SDP seem to have
comparable performance, 
but 
the $k$-median LP is
much faster than the $k$-means SDP.

\begin{remark}
If instead of requiring integrality \emph{and} recovery of the planted clusters, we only test for integrality  (i.e. the result of the simulation should just be some clustering, not necessarily the clustering corresponding to the disjoint supports from which we draw the points) we see a very interesting but distinct behavior of the phase diagrams:
\begin{description}
\item[$k$-median LP] We observe that $k$-median LP obtains integral solutions \emph{on every instance of our experiments.} That is, the failure instances in our experiments shown in Figures~\ref{sim_3clusters} and \ref{sim_2clusters} still coincide with clusterings, just not the clusters corresponding to the planted disjoint supports.  Indeed, a different clustering can make sense as being more ``optimal" than the planted distribution when $N$ is small. We refer to Section~\ref{sec:conclusions} for a discussion of an open problem regarding this. 
\item[$k$-means SDP and LP] For all instances of our experiments, every time we obtain an integral solution, the integral solution corresponded to the underlying expected clustering. The failure instances in Figures~\ref{sim_3clusters} and \ref{sim_2clusters} correspond to matrices that do not represent any clustering as represented in Figure~\ref{sdp-graph}. We have not explored whether it is possible to recover the expected clustering via rounding such a fractional solution.
\end{description}
\end{remark}

\begin{figure*}
\centering
\includegraphics[height=3 cm]{complete.pdf}
\hspace{3 cm}
\includegraphics[height=3 cm]{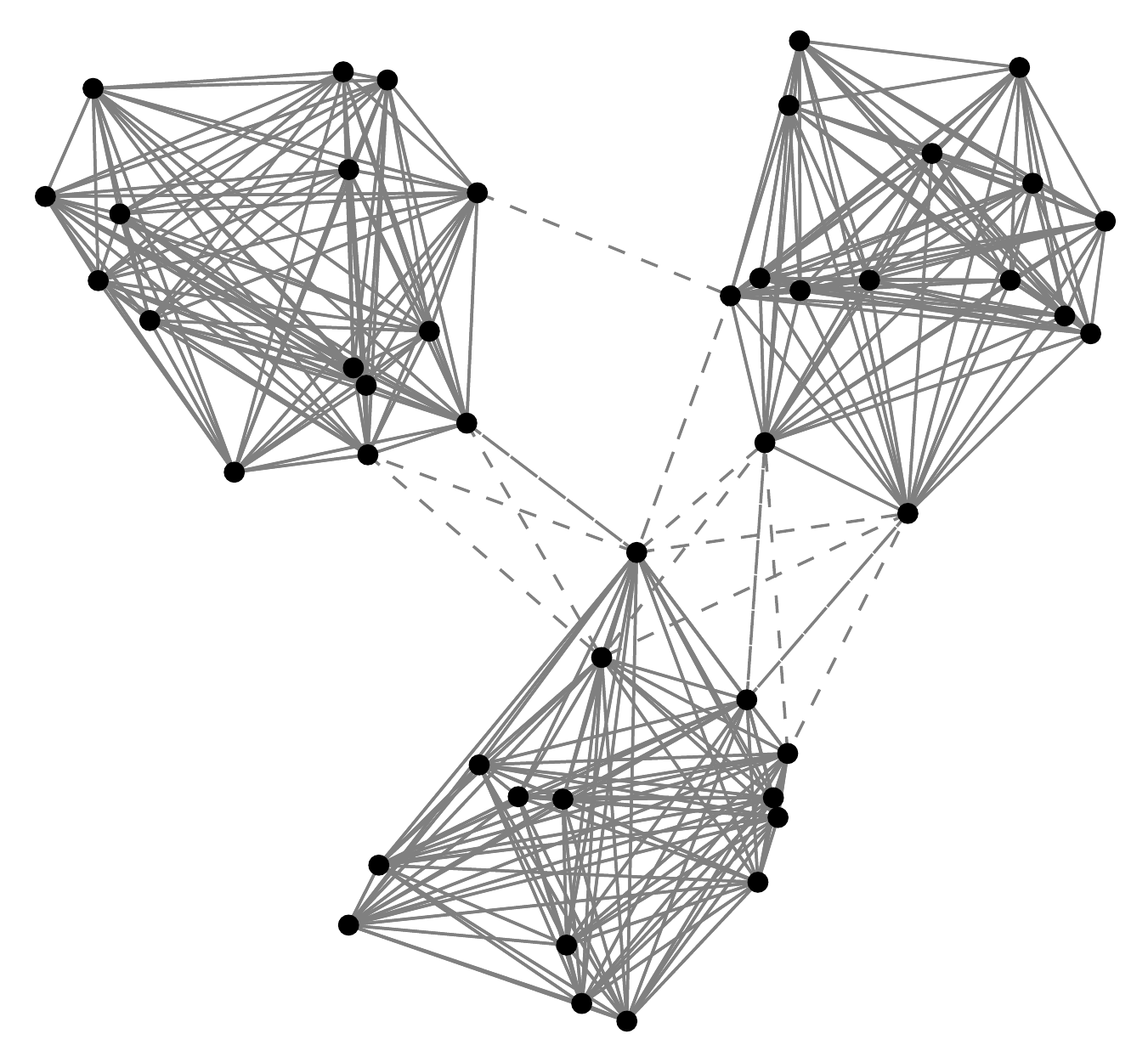}
\caption{A solution of $k$-means LP or SDP that corresponds to a clustering can be seen as the adjacency matrix of a graph with $k$ complete connected components as represented in the left image. In this graph each edge has weight $1/|C|$ and $|C|$ is the number of vertices in its connected component. When the solution of $k$-means LP or SDP is not the expected clustering, we observe cliques corresponding to the ground truth clustering and extra edges between clusters as represented in the right figure.}
\label{sdp-graph}
\end{figure*}

\section{Conclusions and Future work}
\label{sec:conclusions}
In this work we studied convex relaxations for popular clustering objectives and gave sufficient deterministic conditions under which
such relaxations lead to exact recovery thereby bypassing the traditional rounding step in approximation algorithms. Our results also shed light on differences between different relaxations. For instance, our theoretical and empirical results show that the $k$-median LP is much better at recovering optimal solutions than the $k$-means LP.  In fact, we show that the $k$-means LP is integral only in the regime $\Delta \geq 4$ where a simple thresholding algorithm could also be used to distinguish clusters.

Our analysis for the $k$-median LP shows that for any separation $2 + \epsilon$ and any number of clusters $k$, the solution of the $k$-median LP is the planted clusters with high probability if $n$ is large enough.  It remains to quantify how large $n$ needs to be in terms of the other parameters.

In contrast, for the $k$-means SDP shows that for center separation $\Delta \geq 2\sqrt 2(1+\sqrt{1/m})$, the solution corresponds to the planted clusters with high probability for $n$ sufficiently large, where we give a precise bound on $n$. 
We conjecture the same result should hold for separation $\Delta>2+\epsilon$ with high probability.


Several possible future research directions come out of this work. Although we study only a specific distribution over data -- points drawn i.i.d. from disjoint balls of equal radius -- it is of interest to investigate further to determine if the exact recovery trends we observe are more general, for example, by relaxing certain assumptions such as equal radii, equal numbers of points within clusters, etc.
A particularly interesting direction is the setting where the balls overlap and/or when the points are drawn according to a mixture of Gaussians. These two examples share the difficulty that there is no longer a ``ground truth" clustering to recover, and hence it is not even clear how to build a dual certificate to certify an integral solution.  Despite this difficulty, we observe in experiments that the $k$-median LP relaxation still remains integral with high probability, even in extreme situations such as when the points are drawn i.i.d from a single isotropic distribution but parameter $k > 1$ clusters are sought in the LP relaxation.  As in most practical applications, hoping for ground truth recovery is overly optimistic; understanding the integrality phenomenon beyond the exact recovery setting is an important problem. Recently, the same phenomenon was observed \cite{Bandeira14} in the context of the Procrustes, alignment, and angular synchronization problems and referred to as \emph{rank recovery}. 

A third direction would be to relax the notion of integrality, asking instead that a convex relaxation produce a \emph{near}-optimal solution.  There has been recent work on this for the $k$-means++ algorithm~\cite{Agarwal13}.  Another by-product of our analysis is a sufficient condition under which the primal-dual algorithm for $k$-median leads to exact recovery. It would be interesting to prove similar exact recovery guarantees for other approximation algorithms.

Finally, convex relaxations are a very powerful tool not just for clustering problems but in many other domains. The questions that we have asked in this paper can also be studied for various other domains such as inference in graphical models~\cite{Sontag08}, graph partitioning~\cite{Bilu10, MakarychevMV14}, and more.


\section{Acknowledgements}
We would like to thank Ulas Ayaz, Dustin Mixon, Abhinav Nellore and Jesse Peterson along with the anonymous referees for helpful comments and corrections which greatly improved this paper.   
Part of this work was done while ASB and RW were participating in Oberwolfach's workshop ``Mathematical Physics meets Sparse Recovery''; these authors thank Oberwolfach's hospitality.


\bibliographystyle{abbrv}
\bibliography{itcs127-awasthi}

\appendix

\section{Proof of Theorem \ref{sufficient_conditions}} \label{recovery_kmedians}
\theoremstyle{plain}
\newtheorem*{thm:sufficient_conditions}{Theorem \ref{sufficient_conditions}}
\begin{thm:sufficient_conditions}
If $A_1,\ldots, A_k$ are $k$ sets in a metric space $(X,d)$ satisfying separation and center dominance, then there is an integral solution for the k-median LP and it corresponds to separating $P=A_1\cup\ldots\cup A_k$ in the clusters $A_1,\ldots, A_k$.
\end{thm:sufficient_conditions}

Recall Lemma \ref{lemma_sufficientLPintegral}. We need to show there exists ${\alpha_1,\ldots,\alpha_k}$ such that for each  $s\in A_1\cup \ldots \cup A_k$ equation \eqref{main_eq} holds:
\begin{multline*}
\frac1{k}\left(n_1\alpha_{1} - \min_{p\in A_1}\sum_{q\in A_1}d(p,q) + \ldots + n_k\alpha_k - \min_{p\in A_k}\sum_{q\in A_k}d(p,q) \right) \geq \\ \sum_{q\in A_1}\left( \alpha_1 - d(s,q) \right)_+ + \ldots + \sum_{q\in A_k}\left( \alpha_k - d(s,q) \right)_+
\end{multline*}

First, note that by the center dominance property (Definition~\ref{def:center-dominance}), that among all points within a cluster $A_j$, the maximum RHS is attained for $s \in B_{\tau_j}(c_j)$, i.e., for $s$ in a small ball around $c_j$. Moreover, from the separation property (Definition~\ref{def:separation}), it is easy to see that points in $B_{\tau_j}(c_j)$ don't receive any contribution (in the LHS) from points in other clusters, therefore the following holds:


\begin{eqnarray}
\label{eq_max_rhs}\max_{s\in A_j} \sum_{q\in A_1}\left( \alpha_1 - d(s,q) \right)_+ + \ldots + \sum_{q\in A_k}\left( \alpha_k - d(s,q) \right)_+ &=&  \max_{s\in B_{c_j}(\tau_j)} \sum_{q\in A_j} \alpha_j - d(s,q)
\\ \nonumber &=& n_j\alpha_j -  \sum_{q\in A_j}  d(s,q)
\\ \nonumber &\leq& n_j\alpha_j -  \min_{p\in A_j}\sum_{q\in A_j}  d(p,q)
\\  \nonumber &=& n_j\alpha_j -\OPT_j
\end{eqnarray}

Now, the RHS of \eqref{main_eq} maximizes $s$ over all clusters $j$, so we additionally enforce:
\begin{equation}\label{alphas_eq}
n_1\alpha_1 -\OPT_1 = n_2\alpha_2 -\OPT_2 = \ldots =n_k\alpha_k -\OPT_k
\end{equation}
Under this condition, it is easy to see that \eqref{main_eq} holds for all $s\in A_1\cup \ldots \cup A_k$. Since the points and the sets are given, this is a system of linear equations with one degree of freedom.



\section{Proof of theorem \ref{main_theorem}} \label{kmedians_main_theorem}

\newtheorem*{thm:main_theorem}{Theorem \ref{main_theorem}}
\begin{thm:main_theorem}
Let $\mu$ a probability measure in $\RR^m$ supported in the unit ball $B_1(0)$, continuous and rotationally symmetric with respect to $0$ such that every neighborhood of $0$ has positive measure. Then, given the points $c_1,\ldots,c_k\in \RR^m$ such that $d(c_i,c_j)>2$ if $i\neq j$, let  $\mu_{j}$ be the translation of the measure $\mu$ to the center $c_j$. Now consider the data set $A_1=\left\{x_i^{(1)}\right\}_{i=1}^n, \ldots, A_k=\left\{x_i^{(k)}\right\}_{i=1}^n$, each point drawn randomly and independently with probability given by $\mu_{1},\ldots, \mu_{k}$ respectively.
Then, for each $\gamma<1$ there exists $N_0$ such that if $n>N_0$, the $k$-median LP (\ref{LP}) is integral with probability at least $\gamma$.
\end{thm:main_theorem}

\begin{proof}[Proof sketch]
The proof of this theorem consists of showing that separation and central dominance conditions holds with high probability when the points are drawn from the distribution specified in the theorem statement.
\begin{description}
\item[Step 0] For $z\in \displaystyle \bigcup_{j=1}^k B_{1}(c_j)$ and $(\alpha_1, \ldots, \alpha_k) \in \RR^k$ let the random variable
\begin{eqnarray*}
 P^{(\alpha_1,\ldots,\alpha_k)}(z)=\sum_{j=1}^k \sum_{{x_i^{(j)}}\in A_j} \left(\alpha_j - d(z,x_i^{(j)})\right)_+= \sum_{i=1}^n P_i^{(\alpha_1, \ldots, \alpha_k)}(z)  \text{ where } \\ P_i^{(\alpha_1,\ldots,\alpha_k)}(z)= \left(\alpha_1 - d(z,x_i^{(1)})\right)_+ + \ldots + \left(\alpha_k - d(z,x_i^{(k)})\right)_+
 \end{eqnarray*}
We need to show that for some $\alpha_1,\ldots,\alpha_k$ satisfying \eqref{alphas_eq} the maximum of $\left\{P^{(\alpha_1,\ldots,\alpha_k)}(x_i^{(j)})\right\}_{i=1}^n$ is attained in some $x_i^{(j)} \in B_{\tau_j}(c_j)$ for every $j=1,\ldots, k$ with high probability.

\item[Step 1] In the first step we show that for some specific $\alpha=\alpha_1=\ldots=\alpha_k$, the function $\EE P_i^{(\alpha,\ldots,\alpha)}(z)$ restricted to $z\in B_{1}(c_j)$ attains its maximum at $z=c_j$ for all $j=1,\ldots,k$.

The proof is done in Lemma \ref{lemma_step1}.
This is the step where we use that the measure is rotationally symmetric. In fact, this assumption is not strictly needed: any continuous probability distribution that satisfies the thesis of Step 1 and has positive probability in every neighborhood of the center would guarantee asymptotic recovery. 

\item[Step 2] We use that $P_i^{(\alpha_1,\ldots,\alpha_k)}(z)$ is continuous with respect to ${(\alpha_1,\ldots,\alpha_k)}$ and  $\mu_{j}$ is continuous with respect to the Lebesgue measure to show that there exists some $\xi>0$ with the following property:  if $\alpha_1,\ldots,\alpha_k\in (\alpha-\xi, \alpha+\xi)$ then the maximum of $\EE P_i^{(\alpha_1,\ldots,\alpha_k)}(z)$ restricted to $B_{1}(c_j)$ is attained at $z=c_j$.

\item[Step 3] The weak law of large numbers implies that for all $i,j\in\{1,\ldots, k\}$, the random variable $\frac{\OPT_i-\OPT_j}{n}$ converges to zero in probability, i.e.:
$$\text{For every } \nu>0,\quad \lim_{n\to \infty}\text{Pr}\left( \left|\frac{\OPT_i-\OPT_j}{n} \right| <\nu \right)=1$$
For every $\gamma_0<1$ if we have $n$ large enough, we can assure that with probability greater than $\gamma_0$, $\alpha_1,\ldots,\alpha_k$ can be chosen to be in $(\alpha-\xi, \alpha+\xi)$. In particular for $(\alpha_1,\ldots,\alpha_k)$ satisfying \eqref{alphas_eq} the maximum of $\EE P_i^{(\alpha_1,\ldots,\alpha_k)}(z)$ restricted to $B_{1}(c_j)$ is attained at $z=c_j$.

\item[Step 4] In this step we use concentration inequalities to convert the claim in Step 3 about $\EE P_i^{(\alpha_1,\ldots,\alpha_k)}(z)$ to the claim we need to show about $P^{(\alpha_1,\ldots,\alpha_k)}(z)$ with high probability.
Given $\gamma_1<1$ if the number of points $n$ is large enough, and the probability of having a point close to the center of the ball is greater than zero, then with probability greater than $\gamma_1$, the maximum of $\left\{P^{(\alpha_1,\ldots,\alpha_k)}(x_i^{(j)})\right\}_{i=1}^n$ is attained in some $x_i^{(j)} \in B_{\tau_j}(c_j)$ for every $j=1,\ldots, k$. Which proves the theorem.
\end{description}
\end{proof}

\begin{lemma} \label{lemma_step1}
In the hypothesis of Theorem \ref{main_theorem} there exists $\alpha>1$ such that for all $j=1,\ldots,k$, $\EE P^{(\alpha,\ldots,\alpha)}(z)$ restricted to $z\in B_{1}(c_j)$ attains its maximum in $z=c_j$.
\end{lemma}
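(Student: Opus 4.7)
The plan is to exhibit an explicit $\alpha>1$ and verify that $\EE P^{(\alpha,\ldots,\alpha)}(z)\leq \EE P^{(\alpha,\ldots,\alpha)}(c_j)$ for every $z\in B_1(c_j)$, leveraging the rotational symmetry of each $\mu_j$ around $c_j$ together with the separation assumption $d(c_i,c_j)>2$. The first observation is that, since $d(c_i,c_j)>2$ for $i\neq j$, the interval $\bigl(1,\min_{i\neq j}d(c_i,c_j)-1\bigr)$ is non-empty, and I will pick $\alpha$ inside it. With such an $\alpha$, $B_\alpha(c_j)\cap B_1(c_i)=\emptyset$ for every $i\neq j$, so only cluster $j$ contributes at $z=c_j$; combined with $\alpha>1\geq d(c_j,X)$ for any $X\in\mathrm{supp}(\mu_j)\subset B_1(c_j)$, this gives $\EE P^{(\alpha,\ldots,\alpha)}(c_j)=n(\alpha-\EE_{X\sim\mu_j}\|X-c_j\|)$.

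Next I would bound the within-cluster contribution $f_j(z):=\EE_{X\sim\mu_j}[(\alpha-d(z,X))_+]$ from above by $f_j(c_j)$. Rotational symmetry of $\mu_j$ around $c_j$ implies that $X':=2c_j-X$ has the same distribution as $X$, and convexity of the norm then yields
\[
\EE\|z-X\|=\tfrac12\bigl(\EE\|z-X\|+\EE\|z-X'\|\bigr)=\tfrac12\bigl(\EE\|z-X\|+\EE\|(2c_j-z)-X\|\bigr)\geq \EE\|c_j-X\|.
\]
Whenever $\|z-c_j\|\leq\alpha-1$, the positive-part truncation in $f_j(z)$ collapses to an affine expression and the inequality above immediately gives $f_j(z)\leq f_j(c_j)$; for $\|z-c_j\|>\alpha-1$, a rearrangement/Anderson-type argument that exploits the rotational symmetry of $\mu_j$ together with the positivity of $\mu_j$ on every neighborhood of $c_j$ extends this monotonicity to all of $B_1(c_j)$.

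For the cross-cluster contamination I would bound $f_i(z):=\EE_{X\sim\mu_i}[(\alpha-d(z,X))_+]$ for $i\neq j$. The integrand is non-zero only when $d(z,X)<\alpha$, and since $d(z,X)\geq d(c_i,c_j)-\|z-c_j\|-1$ for $X\in B_1(c_i)$, a non-zero contribution forces $\|z-c_j\|$ to exceed $d(c_i,c_j)-\alpha-1>0$: that is, $z$ must lie in a thin shell near the boundary of $B_1(c_j)$ on the side of $c_i$. In that regime, the mass picked up by $f_i(z)$ is dominated by the deficit $f_j(c_j)-f_j(z)$ incurred by moving $z$ the same distance toward that boundary; summing over $i\neq j$ then delivers $\sum_if_i(z)\leq f_j(c_j)$ and completes the argument.

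The step I expect to be the main obstacle is precisely the regime $\|z-c_j\|\in(\alpha-1,1]$, where the positive-part truncation in $f_j(z)$ is non-trivial and one or more cross-cluster terms $f_i(z)$ may simultaneously be non-zero. Quantitatively matching the decrease of $f_j(z)$ against the increase of $\sum_{i\neq j}f_i(z)$ in this overlap regime is the technical heart of the lemma, and is where the continuity, the rotational symmetry, and the positive-measure-near-$c_j$ hypotheses on $\mu$ are used in full strength; the argument must be calibrated so that a single $\alpha>1$ works uniformly across all clusters $j$.
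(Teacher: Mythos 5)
Your plan is sound only up to the point you yourself flag, and that flagged point is not a deferrable technicality: it is the entire content of the lemma, and the tools you name for it do not suffice under the stated hypotheses. Two concrete problems. First, the within-cluster step: the symmetrization $X'=2c_j-X$ does give $\EE\|z-X\|\ge\EE\|c_j-X\|$, but once the truncation $(\cdot)_+$ is active the monotonicity $f_j(z)\le f_j(c_j)$ is simply \emph{false} for a general rotationally symmetric $\mu$. Take $m=1$ and a continuous symmetric density placing mass $1-\eta$ near $\{\pm1\}$ and mass $\eta$ near $0$ (this satisfies all the hypotheses): at the center $f_j(c_j)\approx(\alpha-1)+\eta$, while at a boundary point $z$ the far half of the mass is truncated to zero instead of being charged its full distance, giving $f_j(z)\approx\alpha/2$, which exceeds $f_j(c_j)$ whenever $\alpha<2$ and $\eta$ is small --- and the separation constraint forces $\alpha<\min_{i\neq j}d(c_i,c_j)-1$, which can be arbitrarily close to $1$. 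An Anderson-type rearrangement would rescue this only if $\mu$ had a radially decreasing (unimodal) density, which is not assumed; the hypotheses give only continuity, rotational symmetry, and positive mass near the center. Second, the cross-cluster step: the assertion that the gain $\sum_{i\neq j}f_i(z)$ is dominated by the deficit $f_j(c_j)-f_j(z)$ is precisely the inequality to be proved, and your write-up supplies no mechanism for it. Note also that the contaminated region $\{z:\|z-c_j\|>d(c_i,c_j)-\alpha-1\}$ is not a thin shell when the separation $\delta_{ij}=d(c_i,c_j)-2$ is small and $\alpha$ is close to $1$: it is all of $B_1(c_j)$ outside a ball of radius about $\delta_{ij}$. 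Your $\alpha$ is chosen only to make the center contamination-free and is in no way calibrated against the contamination elsewhere.

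The paper's route is structurally different and supplies the missing mechanism: it does not separate within- and cross-cluster contributions, but instead defines $\alpha$ through a geometric copying construction --- the largest $\alpha$ for which the lens regions $B_\alpha(z)\cap B_1(c_i)$, $i\neq j$, can be copied isometrically (preserving distance to the respective centers, hence preserving measure by rotational symmetry) into $B_1(c_j)\setminus B_\alpha(z)$ without overlap --- and then compares the total contribution at $z$ in one shot against the integral over all of $B_1(c_j)$ seen from $c_j$. Even that argument requires care on exactly the point where yours is silent (one must control the integrand $\alpha-d(x,z)$ on a lens against $\alpha-d(x',c_j)$ on its copy, which is delicate since $d(x,z)$ can be as small as $\delta_{ij}$ while $d(x',c_j)$ can be close to $1$), so this is genuinely the crux and not something that continuity and symmetry hand you for free. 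In short: your proposal correctly isolates the hard regime but does not resolve it, and the specific fallback you propose (rearrangement under rotational symmetry alone) provably cannot close the gap.
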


\begin{proof}
Let $z\in B_{1}(c_j)$.

$$\EE P^{(\alpha,\ldots,\alpha)}(z)= n \EE P_i^{(\alpha,\ldots,\alpha)}(z) = n\left(\int_{B_{1}(c_j) \cap B_{\alpha}(z)} \alpha - d(x,z) d\mu_j x + \sum_{i\neq j} \int_{B_{1}(c_i) \cap B_{\alpha}(z)} \alpha - d(x,z) d\mu_i x \right)$$

Define $\alpha(z)>1$ the maximum value of alpha such that $B_{\alpha}(z)\cap \bigcup_{i\neq j}B_{1}(c_i)$ can be copied isometrically inside $B_{1}(c_j)$ along the boundary without intersecting each other and without intersecting $B_{\alpha}(z)$ as demonstrated in Figure \ref{alphas_explanation}. Let $\alpha=\max \{\alpha(z): z\in \cup_{j=1}^k B_{1}(c_j) \}  $. We know $\alpha>1$ since the balls are separated: $d(c_i,c_j)>2$ whenever $i\neq j$.

\begin{figure}[h]
\includegraphics[width= 8cm]{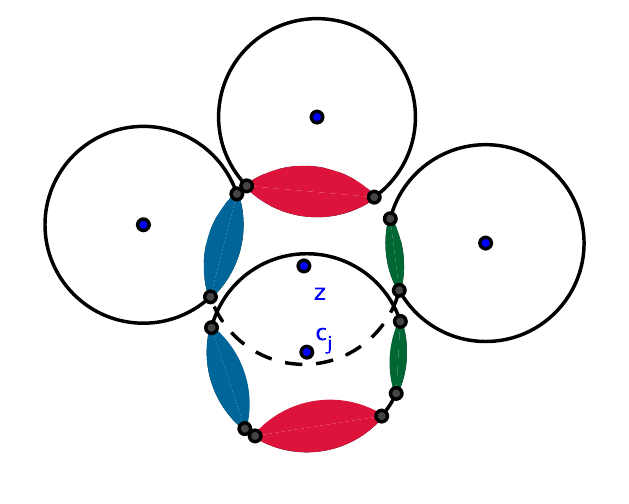}
\includegraphics[width= 8cm]{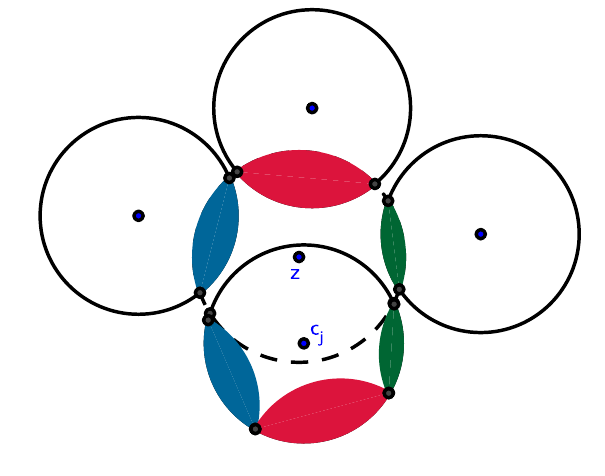}
\caption{Let the circles $B_{1}(c_i)$ be represented by the solid lined circles and the dashed lined circle be $B_{\alpha}(z)$. In the left image, $\alpha=1$. Since the circles $B_{1}(c_i)$ do not intersect each other, then we can consider $B_{\alpha}(z)\cap \bigcup_{i\neq j}B_{1}(c_i)$ copied symmetrically along the boundary inside $B_{1}(c_j)$ without intersecting each other or $B_{\alpha}(z)$ as in the left image. By continuity that can also be done for slightly bigger alphas. Let $\alpha(z)$ the biggest value of $\alpha$ for which that can be done. For the value of $z$ in this example and the position of the balls $B_{1}(c_i)$, we have $\alpha(z)\approx 1.1$, and the intersections copied inside $B_{1}(c_j)$ are represented in the image at the right.}
\label{alphas_explanation}
\end{figure}

Let $\tau_j=\tau_j(\alpha,\ldots, \alpha)$. For every $z\in B_{\tau_j}(c_j)$ it only sees its own cluster and nothing of the rest. Let $v\in \RR^m$, $\|v\|=1$ and consider the partial derivative with respect to $v$ along the line $tv:\, t\in(-\tau_j,\tau_j)$.

\begin{gather}
\nonumber \EE P_i^{(\alpha,\ldots,\alpha)}(z) =\int_{B_{1}(c_j)} \alpha - d(x,z) d\mu_j x \\
\label{derivative}\frac{\partial}{\partial v}\EE P_i^{(\alpha,\ldots,\alpha)}(z)= \int_{B_{1}(c_j)} \frac{\langle x-z, v \rangle}{ \|x-z\|} d\mu_j (x)
\left\{
\begin{matrix}
>0 & \text{ if } &z=tv : -\tau_j<t<0 \\
=0 & \text{ if }& z=0 \\
<0 & \text{ if } &z=tv : 0<t<\tau_j
\end{matrix}
 \right.\end{gather}

Then $c_j=\argmax_{z\in B_{\tau_j}(c_j)} \EE P^{(\alpha,\ldots,\alpha)}(z)$. And because of the way $\alpha$ was chosen, since the measures $\mu_i$ are translations of the same rotationally symmetric measure, if $z\in B_{1}(c_j)\backslash B_{\tau_j}(c_j)$ we have

\begin{gather*}\EE P_i^{(\alpha,\ldots,\alpha)}(z)=\int_{B_{1}(c_j) \cap B_{\alpha}(z)} \alpha - d(x,z) d\mu_j x + \sum_{i\neq j} \int_{B_{1}(c_i) \cap B_{\alpha}(z)} \alpha - d(x,z) d\mu_ix \\
< \int_{B_{1}(c_j)} \alpha - d(x,0) d\mu_j x  = \EE P_i^{(\alpha,\ldots,\alpha)}(0)
\end{gather*}

This proves the claim in Step 1.
\end{proof}

\begin{lemma} There exists some $\xi>0$ with the property:  if $\alpha_1,\ldots,\alpha_k\in (\alpha-\xi, \alpha+\xi)$ then the maximum of $\EE P_i^{(\alpha_1,\ldots,\alpha_k)}(z)$ restricted to $B_{1}(c_j)$ is attained at $z=c_j$.
\end{lemma}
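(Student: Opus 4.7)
The plan is to treat the claim as a stability/continuity result: the previous lemma establishes that at the base point $(\alpha,\ldots,\alpha)\in\RR^k$ the maximum on $\overline{B_1(c_j)}$ is attained (strictly) at $c_j$, and I will show this property persists in a small open neighborhood by combining a local derivative argument near $c_j$ with a global compactness argument on the complement.

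First I would observe joint continuity of the map
$F(\alpha_1,\ldots,\alpha_k,z):=\EE P_i^{(\alpha_1,\ldots,\alpha_k)}(z)$
on $\RR^k\times\RR^m$: indeed, each summand $\int_{B_1(c_i)} (\alpha_i - d(x,z))_+\,d\mu_i(x)$ is continuous in $(\alpha_i,z)$ by dominated convergence, using that $\mu_i$ has no atoms and that the integrand is uniformly bounded on any bounded set. Fix $j$ and split $\overline{B_1(c_j)}$ into an inner ball $\overline{B_{\tau_j/2}(c_j)}$ and the closed annular region $\mathcal{A}_j:=\overline{B_1(c_j)}\setminus B_{\tau_j/2}(c_j)$.

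On the compact set $\mathcal{A}_j$, the previous lemma gives $F(\alpha,\ldots,\alpha,z)<F(\alpha,\ldots,\alpha,c_j)$ for every $z\in\mathcal{A}_j$. By compactness of $\mathcal{A}_j$ and continuity of $F$, there exists $\delta_j>0$ such that
\[
\max_{z\in\mathcal{A}_j} F(\alpha,\ldots,\alpha,z) \le F(\alpha,\ldots,\alpha,c_j)-\delta_j.
\]
Joint continuity then yields $\xi_j^{(1)}>0$ such that for all $(\alpha_1,\ldots,\alpha_k)$ with $|\alpha_\ell-\alpha|<\xi_j^{(1)}$,
\[
\max_{z\in\mathcal{A}_j} F(\alpha_1,\ldots,\alpha_k,z) < F(\alpha_1,\ldots,\alpha_k,c_j),
\]
obtained by covering $\mathcal A_j$ with finitely many small balls on which $F$ varies by less than $\delta_j/3$.

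For the inner region $\overline{B_{\tau_j/2}(c_j)}$, I would shrink $\xi$ further so that, uniformly in $z$ in this ball, the separation condition still holds, namely $B_{\alpha_i}(z)\cap B_1(c_i)=\emptyset$ for $i\neq j$ and $B_{\alpha_j}(z)\supseteq B_1(c_j)$. This is possible because these are strict inclusions at the base point and the radii vary continuously with $\alpha$; call the corresponding bound $\xi_j^{(2)}$. On this neighborhood the expectation reduces to $F(\alpha_1,\ldots,\alpha_k,z)=\alpha_j-\int_{B_1(c_j)}d(x,z)\,d\mu_j(x)$, whose $z$-gradient is exactly the vector field in \eqref{derivative} and is independent of the $\alpha_\ell$'s; the derivative argument of the previous lemma therefore shows $c_j$ is the unique maximizer on $\overline{B_{\tau_j/2}(c_j)}$ for every admissible choice of the $\alpha_\ell$'s.

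Taking $\xi:=\min_j\min(\xi_j^{(1)},\xi_j^{(2)})>0$ and combining the two regions gives the claim for every $j=1,\ldots,k$. The only mildly delicate point, and the place I would be most careful, is the uniform continuity/compactness step on $\mathcal{A}_j$: one needs a genuine strict gap $\delta_j>0$ (not just pointwise strict inequality) before invoking continuity, which is why I pass to the closed annular region $\mathcal A_j$ where the maximum is attained. Everything else is routine.
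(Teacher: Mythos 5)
Your proof is correct and follows essentially the same route as the paper's: use continuity in the parameters to confine the maximizer to a neighborhood of $c_j$ that lies inside $B_{\tau_j}(c_j)$, then invoke the derivative computation \eqref{derivative} to pin the maximizer to $c_j$ exactly. The only difference is that you make explicit the compactness/strict-gap argument on the annulus that the paper compresses into the phrase ``by continuity \ldots the argmax lies in $B_{\varepsilon}(c_j)$,'' which is a welcome clarification rather than a new idea.
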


\begin{proof}
By continuity of $\EE P^{(\alpha_1, \ldots, \alpha_k)}(z)$ with respect to the parameters $\alpha_1, \ldots, \alpha_k$ given $\varepsilon>0$ there exists $\xi>0$ such that if $\alpha-\xi < \alpha_j <\alpha + \xi$ for all $j=1,\ldots, k$, then $\argmax_{z\in B_{1}(c_j)} \EE P_i^{(\alpha_1,\ldots,\alpha_k)}(z) \in B_{\varepsilon}(c_j)$. Let choose $\varepsilon>0$ and $\xi>0$ small enough such that it is also true that $\varepsilon <\tau_j(\alpha_1,\ldots,\alpha_k)$ for all $\alpha_1\ldots, \alpha_k \in (\alpha-\xi,\alpha + \xi)$. Then the derivative computation in \ref{derivative} applies, and can conclude that for all $\alpha_1,\ldots,\alpha_k\in(\alpha - \xi, \alpha + \xi)$
$\argmax_{z\in B_{1}(c_j)} \EE P_i^{(\alpha_1,\ldots,\alpha_k)}(z) =c_j$.
\end{proof}

\begin{lemma} \label{max_lemma} Let $\alpha_1,\ldots, \alpha_k$ be such that $\argmax_{z\in B_{1}(c_j)} \EE P^{(\alpha_1,\ldots, \alpha_k)} (z)= c_j$.
Let also assume there exists some $x_i^{(j)} \in B_{\tau}(c_j)$ where $\tau<\tau_j$.

Then the maximum of $P^{(\alpha_1,\ldots ,\alpha_k)}(x_1^{(j)}),\dots,P^{(\alpha_1,\ldots , \alpha_k)}(x_n^{(j)})$ is attained for an $x_s^{(j)}$ in $B_{\tau_j}(c_j)$ with probability at least $\beta(n)$ where $\lim_{n}\beta(n)=1$.
\end{lemma}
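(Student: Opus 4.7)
The plan is to combine a deterministic strict-gap argument for the expected per-sample contribution $\EE P_i^{(\alpha_1,\ldots,\alpha_k)}$ on $\overline{B_1(c_j)}$ with a Hoeffding-type concentration inequality to transfer the gap to the random realization, and then close with a union bound over the $n$ sample points.

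First, I would isolate a strict gap. Since $c_j$ is by hypothesis the unique maximizer of the continuous function $z\mapsto \EE P_i^{(\alpha_1,\ldots,\alpha_k)}(z)$ on the compact set $\overline{B_1(c_j)}$, and $\overline{B_1(c_j)}\setminus B_{\tau_j}(c_j)$ is compact and misses $c_j$, there exists $\delta>0$ (depending on the $\alpha_i$ and the geometry, but not on $n$) with
$$\sup_{z\in \overline{B_1(c_j)}\setminus B_{\tau_j}(c_j)} \EE P_i^{(\alpha_1,\ldots,\alpha_k)}(z) \;\leq\; \EE P_i^{(\alpha_1,\ldots,\alpha_k)}(c_j)-3\delta.$$
By continuity, one may take $\tau$ small enough that $\EE P_i^{(\alpha_1,\ldots,\alpha_k)}(z)\geq \EE P_i^{(\alpha_1,\ldots,\alpha_k)}(c_j)-\delta$ for all $z\in B_\tau(c_j)$; this is the regime in which the lemma is applied in Step~4 of Theorem~\ref{main_theorem}, where $\tau$ is already chosen small. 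The hypothesis point $x_i^{(j)}\in B_\tau(c_j)$ then inherits $\EE P_i(x_i^{(j)})\geq \EE P_i(c_j)-\delta$, and dominates in per-sample expectation every $z$ outside $B_{\tau_j}(c_j)$ by at least $2\delta$.

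Next, I would apply Hoeffding concentration to $P^{(\alpha_1,\ldots,\alpha_k)}(z)=\sum_{l=1}^n P_l^{(\alpha_1,\ldots,\alpha_k)}(z)$. Each summand is uniformly bounded by $B:=\sum_t \alpha_t$, so for any \emph{fixed} $z$,
$$\Pr\bigl[\,|P(z)-n\,\EE P_i(z)|>n\delta/2\,\bigr] \;\leq\; 2\exp(-c\,n\,\delta^2)$$
for a constant $c=c(B)>0$. The delicate point is that I actually need to control $P$ at the \emph{random} argument $x_s^{(j)}$, which also appears inside the summand $P_s$. I would resolve this by the standard leave-one-out conditioning: fix $s$, condition on $x_s^{(j)}=z$, and note that $\{P_l(z):l\neq s\}$ are i.i.d., independent of $z$, and bounded, so Hoeffding bounds their sum conditionally. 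The omitted term $P_s(x_s^{(j)})$ is $O(1)$ uniformly and is negligible compared to $n\delta/2$ for $n$ large. Since the bound is uniform in the conditioning value, it holds unconditionally.

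Finally, a union bound over $s\in\{1,\ldots,n\}$ yields that, except on an event of probability at most $2n\exp(-cn\delta^2)$, one has $|P(x_s^{(j)}) - n\,\EE P_i(x_s^{(j)})|\leq n\delta/2$ simultaneously for every $s$. On the complement, for any $s$ with $x_s^{(j)}\notin B_{\tau_j}(c_j)$,
$$P(x_s^{(j)})\;\leq\; n\,\EE P_i(c_j)-3n\delta+\tfrac{n\delta}{2} \;<\; n\,\EE P_i(c_j)-\tfrac{3n\delta}{2} \;\leq\; P(x_i^{(j)}),$$
so the maximum over the $n$ sample points is attained inside $B_{\tau_j}(c_j)$, and $\beta(n):=1-2n\exp(-cn\delta^2)\to 1$ as required. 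The main obstacle is the coupling between the evaluation point $x_s^{(j)}$ and the summand $P_s$ in $P(x_s^{(j)})$; this is the one non-routine step and is handled by the leave-one-out trick above. Everything else amounts to a quantitative uniform law of large numbers that converts the deterministic strict maximum of $\EE P_i$ at $c_j$ into a high-probability empirical maximum inside $B_{\tau_j}(c_j)$.
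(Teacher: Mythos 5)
Your proof is correct and follows essentially the same route as the paper's: establish a strict $\Omega(n)$ gap between the expected contribution function at points near $c_j$ and at points outside $B_{\tau_j}(c_j)$, then transfer that gap to the empirical function $P^{(\alpha_1,\ldots,\alpha_k)}$ via Hoeffding's inequality. You are in fact more careful than the paper on the two points it glosses over --- the coupling between the random evaluation point $x_s^{(j)}$ and its own summand $P_s$ (your leave-one-out conditioning) and the union bound over the $n$ sample points needed to control the maximum --- so your write-up is, if anything, the more complete one.
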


\begin{proof}

Let $M$ such that $0<P_i^{(\alpha_1,\ldots,\alpha_k)}(z) < M$.
Then we use Hoeffding's inequality,
\[
\Pr\left(|P^{(\alpha_1,\ldots,\alpha_k)}(z) - \EE P^{(\alpha_1,\ldots,\alpha_k)}(z)|>r\right)<2\exp\left(\frac{-2r^2}{n M^2}\right)
\]

We know $\argmax_{z\in B_{1}(c_j)}\EE P^{(\alpha_1, \ldots, \alpha_k)}(z)= c_j$ then by continuity there exists $0<\tau'<\tau_j$ such that $\inf_{z\in B_{\tau'}(c_j)}\EE P^{(\alpha_1, \ldots, \alpha_k)}(z) \geq \sup_{z\in B_{1}(c_j)\backslash B_{\tau'}(c_j)}\EE P^{(\alpha_1, \ldots, \alpha_k)}(z) $. Without loss of generality say $\tau'=\tau_j$.

Every point inside $B_{\tau_j}(c_j)$ only sees its own cluster, the function $\EE P^{(\alpha_1, \ldots, \alpha_k)}(z)$ is rotationally symmetric since the measure is rotationally symmetric, and if we consider $z=te_1$ then it is increasing in $t$ for $t\in (-\tau_j, 0)$ and decreasing for $t\in (0, \tau_j)$.

Let $r$ and $n$ satisfy
\begin{eqnarray}
\label{t_bound1} n\EE P_i^{(\alpha_1,\ldots, \alpha_k)}(\tau_j)- r < n\EE P_i^{(\alpha_1,\ldots, \alpha_k)}(\tau)+ r \quad (\text{i.e. } r<Cn),\\
\label{t_bound2} 2\exp\left(\frac{-2r^2}{nM^2}\right)<1-\beta \quad (\text{i.e. } r>C'\sqrt n ).
\end{eqnarray}
Condition (\ref{t_bound1}) is illustrated in Figure \ref{hoeffding}. The horizontal dashed line corresponding to $y= \EE P^{(\alpha_1,\ldots, \alpha_k)}(\tau_j)$ intersects the lower blue function $\EE P^{(\alpha_1,\ldots, \alpha_k)}(t)-r$ in $t_0\geq \tau$.

With high probability, the bigger $P^{(\alpha_1,\ldots, \alpha_k)}(z)$ is for $z$ outside $B_{\tau_j}(c_j),$ the smaller the same function can be for $z\in B_{\tau}(c_j)$.
In other words, if $x\in B_{\tau}(0)$ and $x'\in B_{1}(c_j) \backslash B_{\tau_j}(c_j)$
\[
\Pr\left(|P^{(\alpha_A,\alpha_B)}(x) > P^{(\alpha_A,\alpha_B)}(x')|\right)>\beta.
\]
This completes the proof of Theorem \ref{main_theorem}.

\begin{figure}[h]
\includegraphics[height= 5cm]{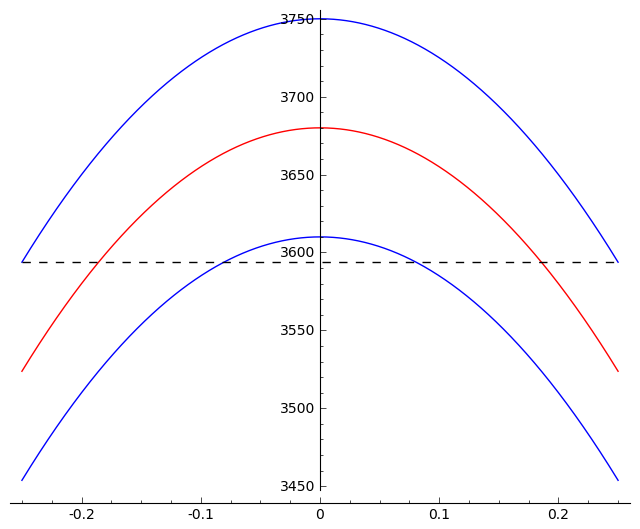}
\caption{\label{hoeffding}}
\end{figure}
\end{proof}

\section{Exact recovery using the $k$-means SDP}
\label{SDP-appendix}

The setting is that we have $k$ clusters, each of size $n$, so a total of $N = kn$ points. We index a point with $(a,i)$ where $a=1,\dots,k$ represents the cluster and $i=1,\dots,n$ the index of the point in that cluster. The distance between two points is represented by $d_{(a,i),(b,j)}$. We define the $N\times N$ matrix $D$ given by the squares of these distances. It consists of blocks $D^{(a,b)}$ of size $n\times n$ such that $D^{(a,b)}_{ij} = d_{(a,i),(b,j)}^2$.

Recall the k-means SDP and its dual:

\begin{center}
\begin{minipage}[r]{0.35\textwidth}
\begin{eqnarray*}
&\max  -\tr(DX)  \\
&\text{ s.t. }  \tr(X) = k \nonumber \\
& X1=1  \nonumber \\
&X \geq 0 \nonumber \\
&X \succeq 0 \nonumber .
\end{eqnarray*}
\end{minipage}
\begin{minipage}[r]{0.62\textwidth}
\begin{eqnarray*}
&\min  kz + \sum_{a=1}^k\sum_{i=1}^n \alpha_{a,i}  \\
&\text{ s.t. }  Q = zI_{N\times N} + \sum_{a=1}^k\sum_{i=1}^n \alpha_{a,i}A_{a,i}  \nonumber \\
& \hspace{30mm} - \sum_{a,b=1}^k\sum_{i,j=1}^n\beta^{(a,b)}_{i,j}E_{(a,i),(b,j)} + D \nonumber \\
&\beta \geq 0 \nonumber \\
&Q \succeq 0 \nonumber
\end{eqnarray*}
\end{minipage}
\end{center}

The intended solution is $X$ which is $1/n$ in the diagonal blocks and $0$ otherwise. Defining $1_a$ as the indicator function of cluster $a$ the intended solution is
\begin{equation*}
X = \frac1n\sum_{a=1}^k 1_a1_a^T.
\end{equation*}

We want to construct a dual certificate to show that this solution is the only optimal solution.

Complementary slackness tells us that
\begin{equation}\label{cs:Qaa}
Q1_a = 0, \quad \quad \forall_a.
\end{equation}
It also tells us that
\begin{equation*}
\beta^{(a,a)} = 0, \quad \quad  \forall_a.
\end{equation*}

We thus have, for the diagonal blocks of $Q$,
\begin{equation*}
Q^{(a,a)} = zI_{n\times n} + \frac12 \sum_{i=1}^n \alpha_{a,i}\left(1e_i^T + e_i1^T \right) + D^{(a,a)},
\end{equation*}
note that here $e_i$ are $n$-length vectors and before they were $N$-length.

For the non-diagonal blocks,
\begin{equation*}
Q^{(a,b)} = \frac12 \sum_{i=1}^n (\alpha_{a,i} e_i1^T + \alpha_{b,i}1e_i^T) - \frac12\beta^{(a,b)}  + D^{(a,b)},
\end{equation*}

By \eqref{cs:Qaa} we know that
\begin{equation*}
Q^{(a,a)}1 = 0.
\end{equation*}

which means that
\begin{equation*}
e_r^T\left[zI_{n\times n} + \frac12 \sum_{i=1}^n \alpha_{a,i}\left(1e_i^T + e_i1^T \right) + D^{(a,a)}\right]1 = 0, \quad \quad \forall_r
\end{equation*}

This is equivalent to
\begin{equation}\label{eq:aux:1}
z + \frac12 \sum_{i=1}^n \alpha_{a,i} + \frac12n \alpha_{a,r}\  + e_r^TD^{(a,a)}1 = 0, \quad \quad \forall_r
\end{equation}

Summing this expression over all $r=1,\dots,n$ we get
\begin{equation*}
n z + \frac12 n\sum_{i=1}^n \alpha_{a,i} + \frac12n \sum_{r=1}^n\alpha_{a,r}  + 1^TD^{(a,a)}1 = 0,
\end{equation*}

which is equivalent to
\begin{equation*}
\sum_{i=1}^n \alpha_{a,i}  =  -z - \frac1n1^TD^{(a,a)}1.
\end{equation*}

Plugging this in \eqref{eq:aux:1} we get
\begin{equation*}
z + \frac12\left( -z - \frac1n1^TD^{(a,a)}1 \right) + \frac12n \alpha_{a,r}  + e_r^TD^{(a,a)}1 = 0, \ \forall_r,
\end{equation*}
which means that
\begin{equation}\label{eq:alphaiintermsofz}
\alpha_{a,r} = -\frac1nz +  \frac1{n^2}1^TD^{(a,a)}1 -2\frac1n e_r^TD^{(a,a)}1.
\end{equation}

Our dual certificate will satisfy equalities (\ref{eq:alphaiintermsofz}). Note that summing (\ref{eq:alphaiintermsofz}) over $a$ and $r$ gives
\begin{equation*}
kz + \sum_{a=1}^k\sum_{r=1}^n \alpha_{a,r} = \sum_{a=1}^k\sum_{r=1}^n\left( \frac1{n^2}1^TD^{(a,a)}1 -2\frac1n e_r^TD^{(a,a)}1 \right) = -\frac1n \sum_{a=1}^k1^TD^{(a,a)}1,
\end{equation*}
which states that the objective value of the dual solution matches the objective value of the intended primal solution. By ensuring that the null space of $Q$ only consists of linear combinations of the vectors $1_a$ we can get a uniqueness result.

\begin{lemma}
\label{lem:conditions_on_Q}
Suppose there exists $z$ and $\beta^{a,b}$ such that $\beta^{a,b} \geq 0,  a\neq b$.  Define $\alpha_{a,r}$ as
\begin{equation}
\alpha_{a,r} = -\frac1nz +  \frac1{n^2}1^TD^{(a,a)}1 -2\frac1n e_r^TD^{(a,a)}1
\end{equation}
Let $Q$ be such that
\begin{eqnarray}
Q^{(a,a)} &=& zI_{n\times n} + \frac12 \sum_{i=1}^n \alpha_{a,i}\left(1e_i^T + e_i1^T \right) + D^{(a,a)}, \label{Qaa} \\
Q^{(a,b)} &=& \frac12 \sum_{i=1}^n (\alpha_{a,i}e_i1^T + \alpha_{b,i}1e_i^T) - \frac12\beta^{(a,b)}  + D^{(a,b)}, \quad a \neq b. \label{Qab}
\end{eqnarray}
 Then if $Q^{(a,b)}1=0$, $Q\succeq 0$ and the nullspace of $Q$ has dimension exactly $k$,
the k-means SDP has a unique solution and is the intended cluster solution.
\end{lemma}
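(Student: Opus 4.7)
The plan is a standard strong-duality / dual-certificate argument. First I would verify that the intended cluster solution $X = \frac{1}{n}\sum_a 1_a 1_a^T$ is primal feasible: it is a nonnegative sum of rank-one PSD matrices, has $\tr(X) = \sum_a \frac{1}{n}\|1_a\|^2 = k$, and satisfies $X1 = \sum_a \frac{n}{n}1_a = 1$. Dual feasibility of $(z,\alpha,\beta,Q)$ is essentially by construction: equations \eqref{Qaa} and \eqref{Qab} are exactly the dual constraint after setting $\beta^{(a,a)} = 0$; nonnegativity of the off-diagonal $\beta$ blocks is assumed, and $Q \succeq 0$ is assumed.

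Next I would match the two objective values. Summing the closed-form expression $\alpha_{a,r} = -\frac{1}{n}z + \frac{1}{n^2}1^T D^{(a,a)}1 - \frac{2}{n}e_r^T D^{(a,a)}1$ over $r$ yields $\sum_{r=1}^n \alpha_{a,r} = -z - \frac{1}{n}1^T D^{(a,a)}1$, and a further sum over $a$ gives the dual objective $kz + \sum_{a,r}\alpha_{a,r} = -\frac{1}{n}\sum_a 1^T D^{(a,a)}1$. On the other hand, since $X = \frac{1}{n}\sum_a 1_a 1_a^T$ we have $\tr(DX) = \frac{1}{n}\sum_a 1^T D^{(a,a)} 1$, so the primal objective $-\tr(DX)$ coincides with the dual objective. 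Weak duality together with this equality certifies that $X$ is optimal.

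The more delicate step, and the one I expect to be the main obstacle, is uniqueness. Any other optimum $X^\star$ must satisfy $\langle Q, X^\star\rangle = 0$ by complementary slackness, and since both matrices are PSD this forces $Q X^\star = 0$, so every column of $X^\star$ lies in $\ker Q$. The hypothesis $Q^{(a,b)}1 = 0$ (combined with the formula for $\alpha$, which also gives $Q^{(a,a)}1 = 0$) yields $Q 1_a = 0$ for every $a$, and since $\dim \ker Q = k$ by assumption we conclude $\ker Q = \mathrm{span}\{1_a\}_{a=1}^k$. Writing $X^\star = \sum_{a,b} c_{ab}\, 1_a 1_b^T$ and translating the primal constraints, $X^\star 1 = 1$ becomes $n\sum_b c_{ab} = 1$ for every $a$, $\tr(X^\star) = k$ becomes $n\sum_a c_{aa} = k$, and $X^\star \ge 0$ entrywise forces $c_{ab} \ge 0$. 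The $k \times k$ matrix $C'_{ab} := n c_{ab}$ is therefore nonnegative with all row sums equal to $1$ and trace $k$, so each diagonal entry is at most $1$ while together they sum to $k$; hence every diagonal equals $1$ and every off-diagonal equals $0$. Thus $C' = I_k$, which gives $X^\star = \frac{1}{n}\sum_a 1_a 1_a^T = X$, completing the proof. Notably, while PSDness of $X^\star$ is used to pass from $\langle Q, X^\star\rangle = 0$ to $Q X^\star = 0$, the final combinatorial pinning of $C'$ needs only the row-sum, trace, and nonnegativity conditions.
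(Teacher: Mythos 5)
Your proposal is correct and follows essentially the same dual-certificate route as the paper: dual feasibility is by construction, and the objective matching comes from the same summation of the $\alpha_{a,r}$ formula over $r$ and $a$ that the paper performs. The one place you go beyond the paper is the uniqueness step: the paper merely asserts that forcing $\ker Q = \mathrm{span}\{1_a\}$ yields uniqueness, whereas you actually carry it out --- passing from $\langle Q, X^\star\rangle = 0$ to $QX^\star = 0$ via positive semidefiniteness, writing $X^\star = \sum_{a,b} c_{ab}\,1_a 1_b^T$, and pinning $nc_{ab} = \delta_{ab}$ using the row-sum, trace, and entrywise-nonnegativity constraints --- which is a welcome completion of an argument the paper leaves implicit.
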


Let us rewrite $Q$ in terms of $z$ and $\beta$. We have
\begin{equation*}
Q^{(a,a)} = zI_{n\times n} + \frac12\sum_{i=1}^n \left( -\frac1nz +  \frac1{n^2}1^TD^{(a,a)}1 -2\frac1n e_i^TD^{(a,a)}1 \right)\left(1e_i^T + e_i1^T \right) + D^{(a,a)}
\end{equation*}
equivalently,
\begin{equation*}
Q^{(a,a)} = z\left(I_{n\times n}-\frac1n11^T\right) +  \frac12\sum_{i=1}^n \left( \frac1{n^2}1^TD^{(a,a)}1 -2\frac1n e_i^TD^{(a,a)}1 \right)\left(1e_i^T + e_i1^T \right) + D^{(a,a)}
\end{equation*}
On the other hand,
\begin{equation*}
\begin{split}
Q^{(a,b)} = \frac{1}{2n} \sum_{i=1}^n\left[ \left( -z +  \frac1{n}1^TD^{(a,a)}1 -2 e_i^TD^{(a,a)}1 \right)e_i1^T + \left( -z +  \frac1{n}1^TD^{(b,b)}1 -2 e_i^TD^{(b,b)}1 \right)1e_i^T\right] \\ - \frac12\beta^{(a,b)}  + D^{(a,b)}.
\end{split}
\end{equation*}

Equivalently,
\begin{equation*}
\begin{split}
Q^{(a,b)} = - z\frac1n11^T+\frac{1}{2n}\sum_{i=1}^n\left[ \left(   \frac1{n}1^TD^{(a,a)}1 -2 e_i^TD^{(a,a)}1 \right)e_i1^T + \left(    \frac1{n}1^TD^{(b,b)}1 -2 e_i^TD^{(b,b)}1 \right)1e_i^T\right] \\ - \frac12\beta^{(a,b)}  + D^{(a,b)}.
\end{split}
\end{equation*}

With a bit of foresight, we now impose a condition on $Q$ that implies the conditions in this Lemma. We will require that
\begin{equation}
e_r^TQ^{(a,b)}e_s = \frac1n e_r^TD^{(a,b)}1 +  \frac1n 1^TD^{(a,b)}e_s - e_r^T D^{(a,b)}e_s - \frac{1}{n^2} 1^T D^{(a,b)} 1    \quad \quad \forall_{a\neq b}. \label{Qab_def}
\end{equation}

Note that $Q^{(a,b)} 1 = 0$ and $Q^{(b,a)}1 = Q{(a,b)}^T 1 = 0$.
This means that we will require, $\forall r,s$, that
\begin{eqnarray}
e_r^TQ^{(a,b)}e_s  &=& \frac1n e_r^TD^{(a,b)}1 +  \frac1n 1^TD^{(a,b)}e_s - D^{(a,b)}_{rs} - \frac{1}{n^2} 1^T D^{(a,b)} 1   \nonumber \\
 &=& -z\frac1n + \frac{1}{2n} \left[  \left(   \frac1{n}1^TD^{(a,a)}1 -2 e_r^TD^{(a,a)}1 \right) + \left(    \frac{1}{n}1^TD^{(b,b)}1 -2 e_s^TD^{(b,b)}1 \right)  \right] \\ & & - \frac12\beta^{(a,b)}_{rs} + D^{(a,b)}_{rs} \nonumber
\end{eqnarray}
This is satisfied for non-negative $\beta$'s precisely when
\begin{eqnarray}\label{eq:aux2b}
2D^{(a,b)}_{rs}  -\frac1n e_r^TD^{(a,b)}1 -  \frac1n 1^TD^{(a,b)}e_s + \frac{1}{n^2} 1^T D^{(a,b)} 1 & \geq & \frac{e_r^TD^{(a,a)}1}n  + \frac{e_s^TD^{(b,b)}1}n  \nonumber \\
&-& \frac12 \left(   \frac{1^TD^{(a,a)}1}{n^2} + \frac{1^TD^{(b,b)}1}{n^2}\right) \nonumber \\
&+& \frac1nz , \quad \forall_{a\neq b}\forall_{r,s}.
\end{eqnarray}

It remains to ensure that $Q\succeq 0$.

By construction,  $Q^{(a,b)}1=0 \quad \forall_{a,b}$ so we just need to ensure that, for all $x$ perpendicular to the subspace $\Lambda$ spanned by $\{ 1^{(a)} \}_{a = 1}^k$ that
\begin{equation}
x^TQx >0.
\end{equation}
Since in particular $x \perp 1$, as a consequence of \eqref{Qaa} and \eqref{Qab_def} the expression greatly simplifies to:
\begin{equation}
zx^Tx  + 2x^T( \sum_{a} D^{(a,a)}) x - x^T D x >0,
\end{equation}
which means that we simply need
\begin{equation}
z > \frac{x^TD x}{x^Tx} - \frac{2x^T( \sum_{a} D^{(a,a)}) x}{x^Tx}  , \quad \forall_{x\perp \Lambda}. \label{z_cond}
\end{equation}

Now, we can decompose the squared euclidean distance matrix $D$ as
\[
D = V + V^T - 2 M M^T,
\]
where $V$ is a rank-1 matrix with each row having constant entries (the squared norms of the data points)  and where the coordinates of the data points $x_j^{(k)} \in \mathbb{R}^m$ constitute the rows of $M \in \mathbb{R}^{N \times m}$. In particular $\frac{z^T[ V + V^{T}] z}{z^Tz} = 0$ for $z\perp \Lambda$. Since $MM^T$ is positive semidefinite \eqref{z_cond} can be stated as

\begin{equation}
z > 4\max_a \max_{x\perp 1} \frac{x^T M^{(a)}M^{(a)T} x}{x^Tx} 
\label{eq:aux2a}
\end{equation}

Since we need the existence of a $z$ to satisfy both \eqref{eq:aux2a} and \eqref{eq:aux2b} we need that $\forall_{a\neq b}\forall_{r,s}$,
\begin{multline}
2D^{(a,b)}_{rs}  -\frac1n e_r^TD^{(a,b)}1 -  \frac1n 1^TD^{(a,b)}e_s + \frac{1}{n^2} 1^T D^{(a,b)} 1  >  \frac{e_r^TD^{(a,a)}1}n  + \frac{e_s^TD^{(b,b)}1}n  \\
 -  \frac12 \left(   \frac{1^TD^{(a,a)}1}{n^2} + \frac{1^TD^{(b,b)}1}{n^2}\right)   + \frac1n \left( 4 \max_a \max_{x\perp 1} \left| \frac{x^T M^{(a)}M^{(a)T}) x}{x^Tx} \right| \right)
\end{multline}
This gives us the main Lemma of this section:

\begin{lemma}
\label{lemma2}
If, for all clusters $a\neq b$ and for all indices $r,s$ we have
\begin{multline}
\label{lemma_eqn}
2D^{(a,b)}_{rs}  -\frac1n e_r^TD^{(a,b)}1 -  \frac1n 1^TD^{(a,b)}e_s + \frac{1}{n^2} 1^T D^{(a,b)} 1  > \frac{e_r^TD^{(a,a)}1}n  + \frac{e_s^TD^{(b,b)}1}n  \\
-  \frac12 \left(   \frac{1^TD^{(a,a)}1}{n^2} + \frac{1^TD^{(b,b)}1}{n^2}\right)
+ \frac1n \left( 4 \max_a \max_{x\perp 1} \left| \frac{x^T M^{(a)}M^{(a)T}) x}{x^Tx} \right| \right)
\end{multline}
then the k-means SDP has a unique solution and it coincides with the intended cluster solution.
\end{lemma}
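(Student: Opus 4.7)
\medskip
\noindent\textbf{Proof proposal for Lemma \ref{lemma2}.}
The plan is to exhibit an explicit feasible dual solution $(z,\alpha,\beta,Q)$ whose objective value equals the primal objective of the intended block-diagonal solution $X = \tfrac{1}{n}\sum_a 1_a 1_a^T$, and then to show that the null space of $Q$ has dimension exactly $k$, which by Lemma \ref{lem:conditions_on_Q} forces $X$ to be the unique optimum of the SDP \eqref{SDP}.

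First I would choose $z$ to be any real strictly larger than $4\max_a\max_{x\perp 1} \frac{x^T M^{(a)} M^{(a)T} x}{x^T x}$ but small enough that the hypothesis \eqref{lemma_eqn} still yields non-negative $\beta^{(a,b)}$'s; such a $z$ exists precisely because of the strict inequality in \eqref{lemma_eqn}. With $z$ fixed, define $\alpha_{a,r}$ by \eqref{eq:alphaiintermsofz}, and define the off-diagonal blocks $Q^{(a,b)}$ entrywise by the formula \eqref{Qab_def}. Equations \eqref{Qaa} and \eqref{Qab} then determine $\beta^{(a,b)}$ via $\beta^{(a,b)}_{rs} = 2D^{(a,b)}_{rs} + \alpha_{a,r} + \alpha_{b,s} - 2 e_r^T Q^{(a,b)} e_s$; substituting the chosen $\alpha$'s and $Q^{(a,b)}$ and simplifying reduces the non-negativity condition $\beta^{(a,b)} \geq 0$ exactly to the hypothesis \eqref{lemma_eqn}. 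A direct row-sum calculation on \eqref{Qab_def} shows $Q^{(a,b)} 1 = 0$, and the derivation of $\alpha_{a,r}$ from \eqref{eq:aux:1} gives $Q^{(a,a)} 1 = 0$, so $Q 1_a = 0$ for every $a$, which confirms complementary slackness with the intended $X$. Matching primal and dual objectives then follows from summing \eqref{eq:alphaiintermsofz} over $a$ and $r$, as already carried out in the derivation preceding the lemma.

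The main obstacle is verifying $Q \succeq 0$ with a $k$-dimensional null space. Since $Q$ annihilates each $1_a$ by construction, it suffices to show $x^T Q x > 0$ for every $x \perp \Lambda := \mathrm{span}\{1_a\}_{a=1}^k$. Expanding $x^T Q x$ using \eqref{Qaa} and \eqref{Qab_def}, and exploiting the fact that $x \perp 1$ kills all terms of the form $u 1^T + 1 u^T$, most contributions cancel and one is left with
\begin{equation*}
x^T Q x = z\, x^T x + 2 x^T \Bigl(\sum_a D^{(a,a)}\Bigr) x - x^T D x .
\end{equation*}
Writing $D = V + V^T - 2 M M^T$ where $V$ has constant rows and $M$ stacks the coordinates, one notes that $x^T(V+V^T)x = 0$ whenever $x \perp \Lambda$ (because each $1_a$ spans the column space of the cluster-wise restriction of $V$), and that $M M^T \succeq 0$, so $-x^T D x \geq 2 x^T M M^T x \geq 0$ up to the cross-cluster terms; a careful accounting block-by-block reduces the requirement to $z > 4\max_a \max_{x \perp 1}\frac{x^T M^{(a)} M^{(a)T} x}{x^T x}$, which holds by our choice of $z$. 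This yields $x^T Q x > 0$ strictly for every nonzero $x \perp \Lambda$, so the null space of $Q$ is exactly $\Lambda$, of dimension $k$. Combined with $\mathrm{rank}(X) = k$, this gives $\mathrm{rank}(Q) + \mathrm{rank}(X) = N$, and Lemma \ref{lem:conditions_on_Q} concludes that $X$ is the unique optimal solution of the $k$-means SDP.
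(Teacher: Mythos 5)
Your proposal is correct and follows essentially the same route as the paper's own argument: fix $z$ strictly between the spectral lower bound $4\max_a\max_{x\perp 1}x^TM^{(a)}M^{(a)T}x/x^Tx$ and the upper bound forced by $\beta\geq 0$ (such a $z$ exists exactly by the strict inequality \eqref{lemma_eqn}), define $\alpha$ by \eqref{eq:alphaiintermsofz} and $Q^{(a,b)}$ by \eqref{Qab_def}, verify $Q1_a=0$ and the objective match, and reduce $Q\succeq 0$ on $\Lambda^\perp$ via the decomposition $D=V+V^T-2MM^T$. This matches the derivation in Appendix \ref{SDP-appendix} step for step, including the invocation of Lemma \ref{lem:conditions_on_Q} for uniqueness.
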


\begin{remark}
\label{separation_condition}
For cluster $c$ define $x_c=\sum_{y\in c} y$ the mean of the cluster. 
By using the parallelogram identity in \eqref{lemma_eqn} one can rewrite the condition of Lemma \ref{lemma2}.

If, for all clusters $a\neq b$ and for all indices $r,s$ we have
\begin{multline}
\label{separation_eqn}
2\|x_r-x_s\|^2  -\|x_r-x_b\|^2  - \|x_s-x_a\|^2  -\|x_r-x_a\|^2 -\|x_s-x_b\|^2  +  \|x_a-x_b\|^2 > \\ \frac1n \left( 4 \max_a \max_{x\perp 1} \left| \frac{x^T M^{(a)}M^{(a)T}) x}{x^Tx} \right| \right)
\end{multline}
then the k-means SDP has a unique solution and it coincides with the intended cluster solution.
\end{remark}


The question remains: what is the necessary minimal separation between clusters so that the conditions \eqref{lemma_eqn} or \eqref{separation_eqn} are satisfied.  In the next subsection, we will make this statement more precise for a general class of probabilistic models for clusters.

\section{The $k$-means SDP distinguishes clusters}
\label{SDP-appendix2}
In this section we consider a probabilistic models for clusters.  For simplicity, we assume in this section that the number of points in each cluster is the same and the radii of all clusters are the same and equal to 1.

More precisely, let $\mu$ a probability measure in $\RR^m$ supported in the unit centered ball $B_1(0)$, continuous and rotationally symmetric with respect to $0$.

Given a set of points $c_1,\ldots,c_k\in \RR^m$ such that $d(c_i,c_j)>2$ if $i\neq j$, we consider $\mu_{j}$ the translation of the measure $\mu$ to the center $c_j$.

Consider $A_1=\left\{x_i^{(1)}\right\}_{i=1}^n, \ldots, A_k=\left\{x_i^{(k)}\right\}_{i=1}^n$, each point drawn randomly and independently with probability given by $\mu_{1},\ldots, \mu_{k}$ respectively.  Denote by $\left\{\tilde{x}_i^{(1)}\right\}_{i=1}^n, \ldots, \left\{\tilde{x}_i^{(k)}\right\}_{i=1}^n$ the centered points, $\tilde{x}_i^{(j)} = x_i^{(j)} - c_j$.

\begin{proposition} If clusters $a$ and $b$ are supported in two disjoint euclidean balls and their respective means $x_a$ and $x_b$ are the centers of their respective balls, then the LHS of \eqref{separation_eqn} has minimum 
\begin{equation} \left\{\begin{matrix}
\Delta^2/2 -4 & \text{ if } \Delta\leq 4 \\
(\Delta-2)^2 & \text{ if } \Delta> 4
\end{matrix} \right.
\end{equation} In particular it is positive for center separation $\Delta>2\sqrt2$. 
\end{proposition}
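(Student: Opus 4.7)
The plan is to introduce local coordinates centered at the cluster means and reduce the problem to a two-angle minimization. Set $u = x_r - x_a$, $v = x_s - x_b$, and $w = x_b - x_a$; the constraints become $\|u\|, \|v\| \le 1$ and $\|w\| = \Delta$. Expanding every squared norm via $\|p-q\|^2 = \|p\|^2 + \|q\|^2 - 2\langle p, q\rangle$, the $\|u\|^2, \|v\|^2$ contributions cancel and the $\|w\|^2$ contributions collapse to a single $\Delta^2$. I expect the LHS of \eqref{separation_eqn} to telescope to
\[
F(u,v) \;=\; \Delta^2 - 4\langle u, v\rangle + 2\langle v - u,\, w\rangle.
\]
This is the function I would minimize over $\|u\|, \|v\| \le 1$.

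Next I would argue that one may assume, without loss of generality, that $u$ and $v$ are unit vectors lying in the $2$-plane spanned by $w$ and one orthogonal direction. For fixed $v$ the map $u \mapsto F(u,v)$ is linear (up to a constant) in $u$, so its minimum over the ball is attained on the unit sphere; the same holds with the roles of $u$ and $v$ swapped. For the coplanarity step, decompose $u = u_\parallel + u_\perp$ and $v = v_\parallel + v_\perp$ with respect to the direction of $w$. The only term in $F$ that depends on $u_\perp, v_\perp$ is $-4\langle u_\perp, v_\perp\rangle$, and by Cauchy–Schwarz this is minimized when $u_\perp$ and $v_\perp$ are positively parallel of maximal length, so one can indeed place $u,v$ in a common $2$-plane through $w$.

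Setting $w = \Delta e_1$, $u = (\cos\theta, \sin\theta)$, $v = (\cos\phi, \sin\phi)$, using $\langle u,v\rangle = \cos(\theta-\phi)$ and the product-to-sum identity $\cos\phi - \cos\theta = 2\sin\!\bigl(\tfrac{\theta+\phi}{2}\bigr)\sin\!\bigl(\tfrac{\theta-\phi}{2}\bigr)$, with $s = (\theta+\phi)/2$ and $t = (\theta-\phi)/2$, the expression reduces to
\[
F \;=\; \Delta^2 - 4\cos(2t) + 4\Delta\sin s \sin t \;=\; \Delta^2 - 4 + 8\sin^2 t + 4\Delta\sin s \sin t.
\]
For each fixed $t$, optimizing in $s$ picks $\sin s = -\operatorname{sgn}(\sin t)$, giving $\Delta^2 - 4 + 8\tau^2 - 4\Delta\tau$ with $\tau := |\sin t|\in[0,1]$. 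This one-variable quadratic has unconstrained minimizer $\tau^* = \Delta/4$. When $\Delta \le 4$, $\tau^*$ is feasible and yields $F_{\min} = \Delta^2/2 - 4$; when $\Delta > 4$, the constrained minimum occurs at $\tau = 1$, yielding $F_{\min} = (\Delta-2)^2$. Finally, $\Delta^2/2 - 4 > 0 \iff \Delta > 2\sqrt 2$ and $(\Delta-2)^2 > 0$ for all $\Delta > 2$, proving the last assertion.

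The only delicate step is the coplanarity reduction; the rest is a bookkeeping expansion followed by elementary calculus on a single bounded variable. The closed-form minimizers from the analysis are geometrically meaningful: for $\Delta > 4$ they place $x_r$ at the point of $B_1(x_a)$ closest to $B_1(x_b)$ and $x_s$ at the symmetric point of $B_1(x_b)$, while for $\Delta \le 4$ the minimizer tilts $u$ and $v$ off-axis by an angle with $|\sin t| = \Delta/4$, which is the case where the two balls are ``nearly touching'' along a diagonal.
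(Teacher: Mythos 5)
Your proof is correct, and every step checks out: the telescoping to $F(u,v)=\Delta^2-4\langle u,v\rangle+2\langle v-u,w\rangle$ is right (the $\|u\|^2$ and $\|v\|^2$ contributions cancel and the $\|w\|^2$ coefficients sum to $1$), the reduction to the unit spheres is justified because the gradient $-(4v+2w)$ of the affine map $u\mapsto F(u,v)$ cannot vanish when $\Delta>2$, the coplanarity step is sound, and the quadratic in $\tau=|\sin t|$ yields exactly the two branches claimed, with the transition at $\Delta=4$ corresponding to the interior minimizer $\tau^*=\Delta/4$ leaving the feasible interval. The paper takes the same high-level route --- direct minimization of the LHS over the two balls after placing $x_a$ at the origin and $x_b$ at $\Delta e_1$ --- but executes it by invoking Lagrange multipliers and simply asserting the resulting critical points, with no verification that they are global minima and no boundary analysis. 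Your version is more transparent: the bilinear normal form makes it clear why a minimizer can be taken in a $2$-plane containing $w$, and the one-variable quadratic cleanly explains the phase transition. As a side benefit, your explicit minimizer for $\Delta\le 4$, namely $x_{r(1)}=\Delta/4$ and $x_{s(1)}=3\Delta/4$ with equal orthogonal components, exposes what appears to be a typo in the paper, which states $x_{s(1)}=\Delta/2$; that point violates $\|x_s-x_b\|\le 1$ whenever $\Delta>2$, whereas your configuration is feasible and attains the claimed value $\Delta^2/2-4$.
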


\begin{proof}
Without loss of generality assume $x_a=(0,\ldots,0)\in\RR^m$ and $x_b=(\Delta,0,\ldots,0)\in \RR^m$ then we search for 
\begin{align*}
&\min 2\|x_r-x_s\|^2  -\|x_r-x_b\|^2  - \|x_s-x_a\|^2  -\|x_r-x_a\|^2 -\|x_s-x_b\|^2  +  \|x_a-x_b\|^2 \\
&\text{subject to} \\
 &\|x_r-x_a\|^2 \leq 1 \\ &\|x_s-x_b\|^2 \leq 1
\end{align*}

By using Lagrange multipliers one finds that if $\Delta\leq 4$ then the minimum is attained at points such that $x_{r (1)} = \Delta/4$,  $x_{s (1)} = \Delta/2$ and $x_{r (i)}=x_{s (i)}$ for all $i=2,\ldots,m$. When $\Delta>4$ the minimum is attained in $x_{r}=(1,0,\ldots,0),\; x_s=(\Delta-1,0,\ldots,0)$. By substituting in \eqref{separation_eqn} we obtain the desired result.
\end{proof}

Now, we bound the RHS of \eqref{separation_eqn}. As discussed in the previous section, the coordinates of the points $x_j^{(k)} \in \mathbb{R}^m$ constitute the rows of $M \in \mathbb{R}^{N \times m}$. Given our distributional model, we can then write $M = \widetilde{M} + C$ where $\widetilde{M}$ has independent and identically distributed rows drawn from $\mu$, and $C$ is a rank $k$ matrix whose rows are constant within any cluster: the $((r,a), (s,b))$th row is the shift $c_b - c_a$, and the $((r,a), (s,a))$th row is zero.

Recall that $\Lambda$ is the $k$-dimensional subspace spanned by $\{ 1^{(a)} \}_{a = 1}^k.$  Since $C^T z = 0$ for $z\perp \Lambda$ we have, 
\[
\frac1n\left[4\max_{z\perp \Lambda}\frac{z^T M^{(a)}M^{(a)T} z}{z^Tz} \right] = \frac1n\left[4\max_{z\perp \Lambda}\frac{z^T \widetilde{M}^{(a)} \widetilde{M}^{(a)T} z}{z^Tz} \right]  \leq \frac4n\sigma_{\max}(\widetilde{M}^{(a)})^2.
\]
The rows of $\widetilde{M}$ are the centered points, $\tilde{x}_i^{(j)}$.  Let $\theta$ be the expectation 
$\theta = \mathbb{E} (\| \tilde{x}_i^{(j)}\|^2)$.  The rows of $\sqrt{\frac m \theta}\widetilde{M}$ are independent isotropic random vectors and $\| \sqrt{\frac m \theta}\tilde{x}_i^{(j)} \|_2 \leq \sqrt{m/\theta}$.
We have quantitative bounds on the spectra of such matrices:  by Theorem 5.41 of ~\cite{Vershynin11},
we have that for every $t \geq 0$,
\begin{equation}
\mathbb{P} \left[ \sigma_{\max}\left(\sqrt{\frac m \theta}\widetilde{M}^{(a)}\right) > \sqrt{n} + t \sqrt{\frac m \theta} \right] \leq 2m \exp(-c t^2),
\end{equation}
where $c > 0$ is an absolute constant.    Taking $t = s  \sqrt{\frac{n \theta} m}$, we find that $ \frac4n \sigma_{\max}(\widetilde{M}^{(a)})^2  \leq 4\theta(1+s)^2\frac{1}{m}$ with probability at least $1-2m \exp(-c n s^2/m).$

By a union bound, we have that
$$
\frac1n \left( 4 \max_a \max_{z\perp 1}  \frac{z^T M^{(a)}M^{(a)}z}{z^Tz}  \right) \leq 4\theta(1+s)^2\frac{1}{m}
$$
with probability exceeding $1 - 2 m k \exp(-c n s^2/m)$

We conclude that the inequality in \eqref{separation_eqn} sufficient for integrality of the $k$-means SDP is satisfied with probability exceeding $1 - 2 m k \exp(-c n s^2/m)$ if 
$$\frac{\Delta^2}{2}-4 >  4(1+s)^2\frac{\theta }{m}$$
which holds once the centers of the  clusters are separated by euclidean distance $\Delta > \sqrt{8(1+s)^2\frac{\theta}{m}  + 8}$.

Fixing parameter $s= \frac{1}{\log n},$ the above analysis implies the following theorem.

\begin{theorem}
Let $\mu$ be a probability measure in $\RR^m$ supported in $B_1(0)$, continuous and rotationally symmetric with respect to $0$.
Given a set of centers $c_1,\ldots,c_k\in \RR^m$ such that $d(c_i,c_j)>\Delta>2\sqrt2$ for all $i\neq j$, we consider $\mu_{j}$ the translation of the measure $\mu$ to the center $c_j$.

Consider $A_1=\left\{x_i^{(1)}\right\}_{i=1}^n, \ldots, A_k=\left\{x_i^{(k)}\right\}_{i=1}^n$, each point drawn randomly and independently with probability given by $\mu_{1},\ldots, \mu_{k}$ respectively.
Suppose that the centers of any two balls are separated by euclidean distance $\Delta >  \sqrt{8(1+\frac1{\log n})^2\frac{\theta}{m}  + 8}$ where $\theta=\mathbb E (\|x_i^{(j)}-c_i\|^2)<1$.
There is a universal constant $c > 0$ such that with probability exceeding $1 - 2m k \exp\left(\frac{-c n}{(\log n)^2 m}\right)$ the k-means SDP has a unique integral solution which coincides with the intended cluster solution.

\end{theorem}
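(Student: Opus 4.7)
The plan is to deduce the theorem from the deterministic sufficient condition for SDP integrality already established (the separation condition in Definition~\ref{separation_condition}, equivalently Lemma~\ref{lemma2}) by separately bounding its LHS and RHS under the random cluster model, and then taking a union bound over clusters. Concretely, it suffices to show that with probability at least $1 - 2mk\exp(-cn/((\log n)^2 m))$, the inequality \eqref{separation_eqn} holds simultaneously for every pair of clusters $a\neq b$ and every $r \in A_a$, $s \in A_b$.

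For the LHS, I would invoke the proposition stated at the beginning of this section: if the sample cluster means $x_a, x_b$ coincide with the ball centers $c_a, c_b$ and the ball separation is $\Delta \leq 4$, then the minimum over $r \in B_1(c_a), s \in B_1(c_b)$ of the LHS expression equals $\Delta^2/2 - 4$. Since $\mu$ is rotationally symmetric about $0$, the weak law of large numbers (in its quantitative sub-Gaussian form) implies that $x_a \to c_a$ at rate $O(1/\sqrt{n})$, so the LHS is bounded below by $\Delta^2/2 - 4 - o(1)$ with high probability, uniformly over the choice of $r, s$.

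For the RHS, the quantity in question is $\frac{4}{n}\max_a \max_{z \perp 1}\frac{z^T M^{(a)} M^{(a)T} z}{z^T z}$, where $M^{(a)}$ is the $n\times m$ matrix whose rows are the points of cluster $a$. Writing $M^{(a)} = \widetilde{M}^{(a)} + C^{(a)}$ where $\widetilde{M}^{(a)}$ has the centered points $\tilde{x}_i^{(a)} = x_i^{(a)} - c_a$ as rows and $C^{(a)}$ has constant rows equal to $c_a^T$, one observes that $C^{(a)T}z = 0$ for $z \perp 1$, so the quantity reduces to $\frac{4}{n}\sigma_{\max}(\widetilde{M}^{(a)})^2$. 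The rows of $\sqrt{m/\theta}\,\widetilde{M}^{(a)}$ are i.i.d.\ isotropic (by rotational symmetry) with norms bounded by $\sqrt{m/\theta}$, so Theorem~5.41 of~\cite{Vershynin11} applied with deviation parameter $t = s\sqrt{n\theta/m}$ yields $\sigma_{\max}(\widetilde{M}^{(a)}) \leq (1+s)\sqrt{n\theta/m}$ with probability at least $1 - 2m\exp(-cns^2/m)$, and hence $\frac{4}{n}\sigma_{\max}(\widetilde{M}^{(a)})^2 \leq 4(1+s)^2\theta/m$.

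Combining the two bounds, \eqref{separation_eqn} is implied by $\Delta^2/2 - 4 > 4(1+s)^2\theta/m$, i.e.\ $\Delta > \sqrt{8 + 8(1+s)^2\theta/m}$. Choosing $s = 1/\log n$ reproduces the hypothesis of the theorem, and a union bound over the $k$ clusters yields the stated failure probability $2mk\exp(-cn/((\log n)^2 m))$. The main obstacle is controlling the $O(1/\sqrt{n})$ drift of the sample cluster means from the true ball centers in the LHS estimate, since the geometric proposition is stated for sample means exactly at the centers; one must argue that this perturbation is absorbed into the vanishing slack created by the choice $s = 1/\log n$, which leaves a positive gap of order $\Theta((\log n)^{-1})$ in the separation inequality, while the sample-mean error shrinks at the faster rate $O(n^{-1/2})$.
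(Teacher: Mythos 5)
Your proposal follows essentially the same route as the paper's proof in Appendix~\ref{SDP-appendix2}: lower-bound the LHS of the average-separation inequality via the geometric proposition (minimum $\Delta^2/2-4$ for $\Delta\le 4$), upper-bound the RHS by using $C^{(a)T}z=0$ for $z\perp 1$ to reduce $\max_{z\perp 1} z^TM^{(a)}M^{(a)T}z/z^Tz$ to $\sigma_{\max}(\widetilde{M}^{(a)})^2$, apply Theorem~5.41 of~\cite{Vershynin11} with $t=s\sqrt{n\theta/m}$, set $s=1/\log n$, and union bound over the $k$ clusters. The one place you go beyond the paper is in explicitly flagging the $O(n^{-1/2})$ drift of the sample cluster means away from the ball centers --- the paper applies its geometric proposition as if the means sat exactly at the centers, so your remark that this perturbation is absorbed by the $\Theta(1/\log n)$ slack is a welcome patch of a step the paper leaves implicit.
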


\begin{remark}
In the limit $n \rightarrow \infty$, the probability of success goes to 1 and the separation distance goes to $2\sqrt2(1+\sqrt{\frac{\theta}{m}})$.
\end{remark}

\section{Where convex relaxations succeed, Lloyd's Method can fail}
\label{sec:lloyds-app}
As mentioned in Section~\ref{sec:lloyds}, it is not difficult to construct a bad scenario for Lloyd's $k$-means algorithm; consider $3$ balls $A$, $B,$ and $C$ of unit radius such that the centers of $A$ and $B$ are at a distance of $\Delta>2$ from each other, and the center of $C$ is far away (at a distance of $D \gg \Delta$ from each of the first two balls). Generate data by sampling $n$ points from each of these balls. Now consider this group of $3$ clusters as a unit, and create $l$ copies $i=1,\ldots, l$ of such units, $\{A_i, B_i, C_i\},$ such that each group $i$ is sufficiently far from others. 
We will show that with overwhelming probability, Lloyd's algorithm picks initial centers such that either (1) some group of 3 clusters does not get 3 centers initially (i.e. there exists $i$ such that there are fewer than 3 centers among $A_i$, $B_i$, $C_i$), or (2) some group of 3 clusters $i$ will get 3 centers in the following configuration: 2 centers in $C_i$ and 1 center in $A_i\cup B_i$.
In such case, it is easy to see the the algorithm cannot recover the true clustering.  The same example can also be extended to show that the well known kmeans++ \cite{Arthur07} algorithm which uses a clever initialization will also fail.

We first analyze a single group of $3$ clusters $A_i$, $B_i$, $C_i$.
Since Lloyd's method chooses the initial centers at random, there is a constant probability event of \emph{two} centers being chosen from $C_i$, and only one center chosen among the first two balls. The probability of this event is $\frac29$.
Now consider any iteration where two centers $p,q$ lie in the $C_i$ and only one center $r$ lies in $A_i\cup B_i$. The first step of Lloyd's method computes new clusters by assigning each point to the nearest current center. Note that because $C_i$ is far away from $A_i$ and $B_i$, each point in the first two balls still gets assigned to the center $r$, and the data points from the third ball get assigned to either $p$ or $q$. Then, when the new centers are computed (by taking the average of the newly formed clusters), once again there will be two centers lying in $C_i$, and only one center from $A_i\cup B_i$.

Inductively, we can conclude that, if the random assignment of centers in the first iteration chooses two centers from $C_i$, then the final clustering will also have two centers from $C_i$. Consequently, the clustering will not be optimal. Therefore, this example shows that the Lloyds method fails with constant probability.

We can in fact make the probability of success exponentially small by increasing the number of clusters, taking $\ell$ disjoint copies of the 3-cluster instance above and placing each 3-cluster suitably far apart. In this setting, the algorithm fails if any copy is not assigned 3 centers initially.
If all $\ell$ copies are assigned 3 centers, then the algorithm fails at distinguishing the clusters if it is initialized incorrectly in any of the $\ell$ copies; so the algorithm succeeds with probability at most $\left(1- \frac29 \right) ^\ell$.

\noindent \textbf{Failure of kmeans++:} 
There exist configurations for which kmeans++ fail with high probability, even its versions with overseeding and pruning. Let's take for instance the algorithm from~\cite{lloyd06}, that consists in $k$-means preceded by the following initialization
\begin{enumerate}
\item Given an overseeding parameter $c>1$, sample $ck$ points as centers with the following distribution: if $Z=\{c_1,\ldots, c_j\}$ are centers the probability of $x$ being added to the set of centers is proportional to $\min_{c_i\in Z} \|x-c_i\|^2$. 
\item While we have more than $k$ centers, greedily remove a center from $Z$ which leads to the minimum increase in the $k$-means cost.
\item Output the final $k$ centers.
\end{enumerate}
The argument we use to show the failure of Lloyd's algorithm can be adjusted to this setting. Let's say we have $3k$ clusters arranged in groups of three $\{A_i, B_i, C_i\}_{i=1}^k$ such that the groups are far apart from each other and in each group the clusters $A_i$ and $B_i$ are very close to each other and $C_i$ is far away as before. If after the seeding step there is only one center among $A_i \cup B_i$ for any $i\in \{1,\ldots, k\}$, then the algorithm deterministically fails in recovering the clusters. The idea is illustrated in figures \ref{kmeans_init} and \ref{kmeans_fail}.
\begin{figure}[h]
\includegraphics[width=0.7\textwidth]{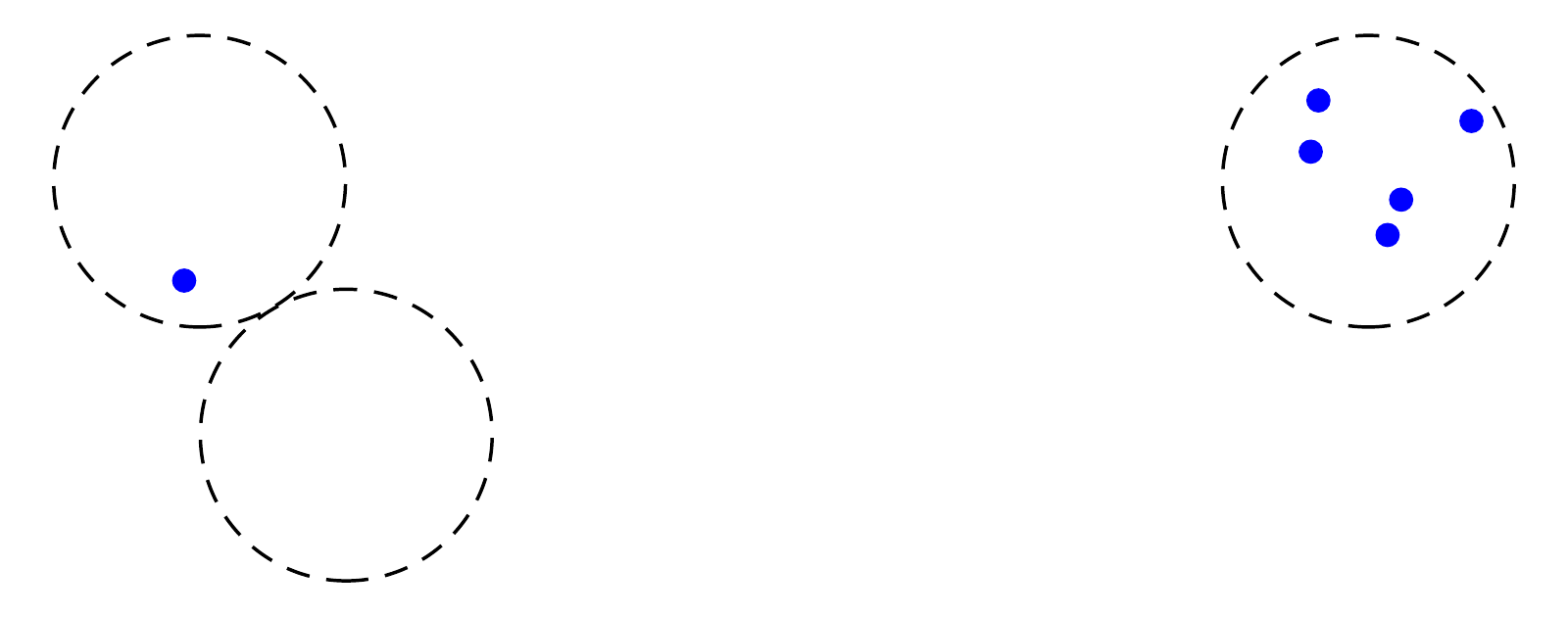}
\caption{Initialization of the centers for which $k$-means fails with constant probability.} \label{kmeans_init}
\includegraphics[width=0.7\textwidth]{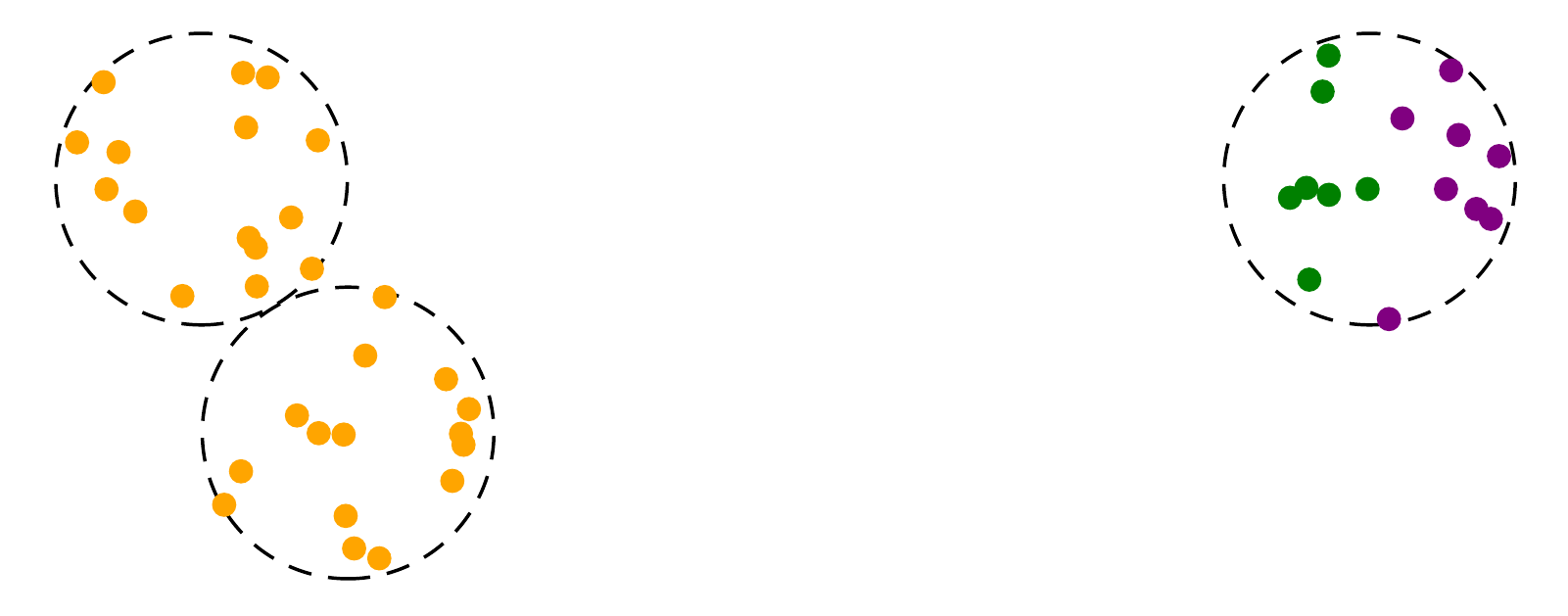}
\caption{Final configuration of the clusters after the initialization as in Figure \ref{kmeans_init}.}
\label{kmeans_fail}
\end{figure}

We show that given $c>1$, $\Delta>2$ and $0<\gamma<1$ one can choose $K=3k$ centers such that the minimum distance between any two of them is $\Delta$ and the probability of failure of kmeans++ with overseeding and pruning is at least $\gamma$. 

Let $(A_i,B_i,C_i)_{i=1}^k$ the unit balls from where our clusters are drawn such that the distance between $A_i$ and $B_i$ is $\Delta$ for all $i=1,\ldots, k$ and all the other distances between clusters are at least $100\Delta$. We assume also that  the data points are in $\mathbb R^d$ where dimension of the space is $d \gg cK$, such that if we have $cK$ centers in a unit ball, the balls with radii $1/2$ centered in those centers cover at most half of its volume.

We can bound the probability of selecting zero centers at $A_1$ and one center at $B_1$. Each time we select a center, the probability of not selecting a center in $A_1\cup B_1$ is greater than the probability of not selecting it given that there is one center at $B_1$ and there is at least one center in the rest of the balls. For that case,
\begin{itemize}
\item If $x\in A_1$ then $\min_{c_i\in Z}\|x - c_i\|^2 < (2 + \Delta)^2<4\Delta^2$.

\item If $x \in B_1$ then $\min_{c_i\in Z}\|x - c_i\|^2 < 4$.

\item If $x$ in a cluster $X$ different from $A_1$ and $B_1$ then our dimensionality assumption implies $\min_{c_i\in C}\|x - c_i\|^2 > 1/4$ for at least half of the volume of $X$.

\end{itemize}

Then, after the normalization, if $K$ is large enough the probability of selecting $cK$ centers such that there is exactly one at $A_1\cup B_1$ is at least
$$\left(\frac{K-2}{32\Delta^2 + 32 +(K-2)}\right)^{cK} \approx \left(1- \frac{\Delta'}{K}\right)^{cK}\approx \exp(-c\Delta')$$

Then, if we let $K$ be in the order of $\exp(c\Delta')$, the probability of having exactly one center in $A_i\cup B_i$ for $i=1,\ldots k$ can be made arbitrarily close to 1.

\section{Recovery guarantees through Primal-Dual algorithm}
\label{sec:primal-dual}
As mentioned in the introduction our results for the $k$-median LP also imply that the Primal-Dual algorithm of Jain and Vazirani \cite{JV01} converges to the exact solution whenever separation and center dominance conditions are satisfied.

In the primal-dual based algorithm, $T$ is the set of medians and $S$ is the set of points that have not been assigned to a median yet. The parameter $z$ plays the role of the cost of setting a median. We can see the dual variable $\alpha_j$ as the amount point $j\in P$ will pay for the solution; a $\beta_{ij}$ can be thought as the amount $j \in P$ is willing to pay to have $i\in P$ as a median. The algorithm increases the dual variables until some median $i$ is paid off. When that happens, $i$ is assigned as a median, and the algorithm \emph{freezes} the set of points that contributed to $i$.

When all points had been assigned to medians, the algorithm assures that no point is paying for two different medians by iteratively selecting one element from $T$ and removing all other elements that share contributors with it. This removing phase is what makes this an approximation algorithm. If no point contributes to two different medians in $T$, then the solution given by this algorithm is exact.

\begin{algorithm} 
 \begin{algorithmic}
 \Procedure{Primal-Dual}{$P,z$}
 \State $\alpha_j\gets 0$ for all $j\in P$
 \State $\beta_{ij}\gets 0$ for all $i,j\in P$
 \State $S\gets P$
  \State $T\gets \emptyset$
 \While{$S\neq \emptyset$}
 \State increase $\alpha_j$ for all $j\in S$ uniformly until:
 \If{$\alpha_j\geq d(i,j)$ for some $i \in S\cup T$}
 \State increase $\beta_{ij}$ uniformly with $\alpha_j$.
 \EndIf
 \If{$\alpha_{j}\geq d(i,j)$ for some $j\in S$, $i\in T$}
 \State $S\gets S -\{j\}$ \Comment{The point $j$ gets assigned to the medoid $i$.}
 \EndIf
 \If{$\sum_{j\in P}\beta_{ij}=z$ for some $i\in P$}
 \State $T\gets T\cup \{i\}$
 \State $S\gets S -\{j: \alpha_j \geq d(i,j)\}$
 \EndIf
 \EndWhile
 \While{$T\neq \emptyset$}
 Pick $i \in T;\, T'\gets T'\cup \{i\}$
 \State \Comment{Remove all mediods $h$ where some point contributed to both $i$ and $k$.}
\State $T\gets T-\{h\in T: \, \exists j\in P, \beta_{ij} >0 \text{ and } \beta_{hj}>0 \}$
 \EndWhile
 \State \textbf{return} $T'$ \Comment{$T'$ is the set of mediods. The point $j$ is assigned to the medoid $i$ if and only if $\alpha_j\geq d(i,j)$.}
 \EndProcedure
 \end{algorithmic}
 \caption{Primal-Dual algorithm}
 \label{primal-dual}
\end{algorithm}

\subsection{Exact recovery via the Primal-Dual algorithm}
Let $A$ and $B$ be defined as in Section~\ref{sec:k-median-main}.
In Theorem \ref{sufficient_conditions} we proved not only that the LP \eqref{LP} has an integral solution but also that there is a solution to the dual problem \eqref{DUAL} with the dual variables constant within each cluster. This suggests that the primal-dual algorithm will freeze all the points in one cluster at once when $z$ is chosen to be the solution to \eqref{LP} (and \eqref{DUAL}).

Let's say $\alpha_A<\alpha_B$.
When all $\alpha_j$ ($j\in P$) get to be $\alpha_A$ then $m_A$ becomes a center and all points in $A$ freeze. This occurs because if $k\in A$ then the RHS of \eqref{main_eq} attains its maximum (equal to $z$) in the median $m_A$. Then we have

\begin{gather*}
\sum_{j\in P} \beta_{kj}= \sum_{j\in A}(\alpha_A - d(k,j))_+ + \sum_{j\in B}(\alpha_A-d(k,j))_+ {\leq}  n\alpha_A-\OPT_A=z \\
\sum_{j\in P} \beta_{m_Aj}= \sum_{j\in A}(\alpha_A - d(m_A,j))_+ + \sum_{j\in B}(\alpha_A-d(m_A,j))_+ {=}\sum_{j\in A}\alpha_A - d(m_A,j)=z
\end{gather*}


Since $d(m_A,i)>\alpha_B> \alpha_A$ for all $i\in B$, no point from $B$ contributes to $m_A$.
After all points in $A$ freeze, when the remaining $\alpha$ reach $\alpha_B,$ the rest of the points freeze and $m_B$ becomes their median. For $k\in B$ and $j\in A$, ($j\neq m_A$) $\beta_{kj}=(\alpha_A-d(k,j))_+$ since it has not increased once $m_A$ is a median. For $j=m_A$ we have $(\alpha_B-d(k,m_A))_+= 0=(\alpha_A-d(k,m_A))_+$.  Then
\begin{gather*}
\sum_{j\in P} \beta_{kj}= \sum_{j\in A}(\alpha_A - d(k,j))_+ + \sum_{j\in B}(\alpha_B-d(k,j))_+ {\leq}  n\alpha_B-\OPT_B=z \\
\sum_{j\in P} \beta_{m_Bj}= \sum_{j\in A}(\alpha_A - d(m_B,j))_+ + \sum_{j\in B}(\alpha_B-d(m_B,j))_+{=}\sum_{j\in A}\alpha_B - d(m_B,j)=z
\end{gather*}
We have argued the following:

\begin{proposition}
If $A$ and $B$ are two sets satisfying separation and center dominance conditions, then the primal-dual algorithm with parameters $P=A\cup B$ and $z=n\alpha_A-\OPT_A=n\alpha_B-\OPT_B$ assigns $x_A$ to $m_A$ for all $x_A\in A$ and $x_B$ to $m_B$ for all $x_B\in m_B$.
\end{proposition}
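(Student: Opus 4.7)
The plan is to simulate the execution of the Primal-Dual algorithm step by step and use the deterministic dual certificate provided by Theorem~\ref{sufficient_conditions} (specialized to $k=2$) to argue that (i) exactly two facilities, namely $m_A$ and $m_B$, become paid off, and (ii) no point ever contributes to both of them, so the pruning phase is vacuous. Without loss of generality assume $\alpha_A \le \alpha_B$. By Theorem~\ref{sufficient_conditions}, there exist $\alpha_A, \alpha_B$ in the center-dominance interval $(a,b)$ with $n\alpha_A - \OPT_A = n\alpha_B - \OPT_B = z$, and such that the contribution function $P^{(\alpha_A,\alpha_B)}$ is maximized on $A$ at points in $B_{\tau_A}(c_A)$ (in particular at $m_A$) and on $B$ at points in $B_{\tau_B}(c_B)$ (in particular at $m_B$), with the maximum value equal to $z$.

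Phase one: raise all dual variables $\alpha_j$ uniformly from $0$. For any time $t < \alpha_A$, and any candidate facility $i \in P$, the quantity $\sum_{j} \beta_{ij}$ is bounded by $\sum_{j\in A}(t-d(i,j))_+ + \sum_{j\in B}(t-d(i,j))_+$, which by monotonicity in $t$ and the defining property of the $\alpha$'s is strictly less than $z$; hence no facility has been paid off. At time $t = \alpha_A$, the center-dominance property forces $\sum_j \beta_{m_A,j} = z$ first (indeed this sum equals the RHS of \eqref{main_eq} evaluated at $s=m_A$), so $m_A$ is opened. Moreover, from the first clause of the center-dominance definition, $B_{\alpha_A}(m_A) \cap B_{r_B}(c_B) = \emptyset$, so no point of $B$ has reached $m_A$ yet; only the points of $A$ with $\alpha_j \ge d(m_A,j)$ freeze and are assigned to $m_A$, and since $\alpha_A > r_A = 1 \ge d(m_A, j)$ for all $j \in A$, \emph{every} point of $A$ freezes.

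Phase two: the dual variables of the surviving points (those in $B$) continue to grow until $t=\alpha_B$. For $k \in B$ and $j \in A$ with $j \neq m_A$, the value $\beta_{kj} = (\alpha_A - d(k,j))_+$ is frozen (it stopped growing when $m_A$ opened), and for $j = m_A$ one has $d(k,m_A) > \alpha_A$ so $\beta_{k,m_A}=0$. Therefore, at time $\alpha_B$,
\[
\sum_{j\in P}\beta_{kj} \;=\; \sum_{j\in A}(\alpha_A - d(k,j))_+ + \sum_{j\in B}(\alpha_B - d(k,j))_+ \;\le\; z,
\]
with equality exactly at $k = m_B$ by the specialization of \eqref{main_eq} to $s=m_B$ combined with the choice $n\alpha_B - \OPT_B = z$. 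Thus $m_B$ becomes paid off at $t=\alpha_B$, and again by center dominance no point in $A$ contributes to $m_B$; all remaining points in $B$ freeze and are assigned to $m_B$.

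At the end of the algorithm, $T = \{m_A, m_B\}$, and the contribution supports are disjoint: no point simultaneously satisfies $\beta_{m_A,j} > 0$ and $\beta_{m_B,j} > 0$, because the first would require $j$ to be within distance $\alpha_A < \alpha_B$ of $m_A$, which by center dominance places $j$ in $A$, while the second forces $j \in B$. Hence the pruning loop keeps both medians, giving the advertised assignment. The main technical obstacle is justifying that at time $t=\alpha_A$ the facility $m_A$ is the \emph{first} to be paid off; this is precisely what the second inequality in the center-dominance definition guarantees, and it is the crux of why the restriction to $\alpha$'s constant within clusters (as used in Lemma~\ref{lemma_sufficientLPintegral}) translates into a clean statement about the uniform dual-growth dynamics of the Primal-Dual algorithm.
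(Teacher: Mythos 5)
Your proof is correct and follows essentially the same route as the paper: simulate the two phases of the dual-growth, show $m_A$ pays off first at time $\alpha_A$ (freezing all of $A$), then $m_B$ at time $\alpha_B$, and conclude the pruning step is vacuous because the contribution supports are disjoint. One small slip: the claim $d(m_A,j)\le 1$ for all $j\in A$ is not literally true (it is only $\le 1+\tau_A$); the correct justification that every point of $A$ freezes is the first clause of center dominance, $B_{r_A}(c_A)\subseteq B_{\alpha_A}(m_A)$, which you already invoke.
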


\end{document}